\title{Differentially Private Testing of Identity and Closeness \\ of Discrete Distributions}
\author{Jayadev Acharya\footnote{The authors are listed in alphabetical order. 
This research is supported by the National Science Foundation NSF-CRII-1657471, and a start-up grant from Cornell University.}\\
Cornell University\\
\tt{acharya@cornell.edu}
\and
Ziteng Sun$^*$\\
Cornell University\\
\tt{zs335@cornell.edu}
\and
Huanyu Zhang$^*$\\
Cornell University\\
\tt{hz388@cornell.edu}
}
  \theoremstyle{definition}
  \newtheorem{definition}{Definition}
  \theoremstyle{plain}
  \newtheorem{theorem}{Theorem}
  \newtheorem{lemma}{Lemma}
  \theoremstyle{remark}
  \newtheorem{example}{Example}
\newcommand{\ignore}[1]{}
\newcommand{\II}{\mathbb{I}} 
\newcommand{\EE}{\mathbb{E}}
\newcommand{\NN}{\mathbb{N}}
\newcommand{\RR}{\mathbb{R}}
\newcommand{\expectation}[1]{\EE\left[#1\right]}
\def \cA     {{\cal A}}
\def \cX     {{\cal X}}
\def \cY     {{\cal Y}}
\newcommand{\Var}{{\rm Var}}
\def \upto  {{,}\ldots{,}}
\def \ceil#1{{\lceil{#1}\rceil}}
\newcommand{\absv}[1]{\left|#1\right|}
\def \Paren#1{{\left({#1}\right)}}
\def \Brack#1{{\left[{#1}\right]}}
\newcommand{\ed}{\stackrel{\Delta}{=}}
\def \half    {{\frac12}}
\newcommand{\probof}[1]{\Pr\Paren{#1}}
\def\ignore#1{}
\newcommand{\bi}{\begin{itemize}}
\newcommand{\ei}{\end{itemize}}
\def\orpro{\mathop{\mathchoice
   {\vee\kern-.49em\raise.7ex\hbox{$\cdot$}\kern.4em}
   {\vee\kern-.45em\raise.63ex\hbox{$\cdot$}\kern.2em}
   {\vee\kern-.4em\raise.3ex\hbox{$\cdot$}\kern.1em}
   {\vee\kern-.35em\raise2.2ex\hbox{$\cdot$}\kern.1em}}\limits}
\def\andpro{\mathop{\mathchoice
 {\wedge\kern-.46em\lower.69ex\hbox{$\cdot$}\kern.3em}
 {\wedge\kern-.46em\lower.58ex\hbox{$\cdot$}\kern.25em}
 {\wedge\kern-.38em\lower.5ex\hbox{$\cdot$}\kern.1em}
 {\wedge\kern-.3em\lower.5ex\hbox{$\cdot$}\kern.1em}}\limits}
\def\simge{\mathrel{%
   \rlap{\raise 0.511ex \hbox{$>$}}{\lower 0.511ex \hbox{$\sim$}}}}
\def\simle{\mathrel{
   \rlap{\raise 0.511ex \hbox{$<$}}{\lower 0.511ex \hbox{$\sim$}}}}
\newcommand{\absz}{k}
\newcommand{\ab}{k}
\newcommand{\ns}{m}
\newcommand{\p}{p}
\newcommand{\q}{q}
\newcommand{\Sitnp}{S({\tt IT}, \ab, \dist)}
\newcommand{\Sit}{S({\tt IT}, \ab, \dist,\eps)}
\newcommand{\Sitd}{S({\tt IT}, \ab, \dist,\eps)}
\newcommand{\Sut}[2]{S({\tt UT}, #1, #2,\eps)}
\newcommand{\Sutnp}[2]{S({\tt UT}, #1, #2)}
\newcommand{\Sct}{S({\tt CT}, \ab, \dist,\eps)}
\newcommand{\Sctd}{S({\tt CT}, \ab, \dist,\eps,\delta)}
\newcommand{\Sctnp}{S({\tt CT}, \ab, \dist)}
\newcommand{\unif}{u}
\newcommand{\unifab}{\unif[{\ab}]}
\newcommand{\unifabs}[1]{\unif[{#1}]}
\newcommand{\smb}{x}
\newcommand{\psmb}{\p_\smb}
\newcommand{\Xon}{X_1^\ns}
\newcommand{\Yon}{Y_1^\ns}
\newcommand{\Xot}{X_1^t}
\newcommand{\Yot}{Y_1^t}
\newcommand{\Xos}{X_1^s}
\newcommand{\Xosp}{X_1^{'s}}
\newcommand{\Yos}{Y_1^s}
\newcommand{\Yosp}{Y_1^{'s}}
\newcommand{\Xosmo}{X_1^{s-1}}
\newcommand{\Yosmo}{Y_1^{s-1}}
\newcommand{\eps}{\varepsilon}
\newcommand{\mltx}[1]{\mu_{#1}}
\newcommand{\mlty}[1]{\nu_{#1}}
\newcommand{\bias}{b}
\newcommand{\Bern}[1]{B(#1)}
\newcommand{\ham}[2]{d(#1,#2)}
\newcommand{\alg}{\cA}
\newcommand{\Mltsmb}[2]{M_{#1}(#2)}
\newcommand{\prfsmb}[2]{\Phi_{#1}(#2)}
\newcommand{\dist}{\alpha}
\newcommand{\dtv}[2]{d_{TV}(#1, #2)}
\newcommand{\bino}[2]{{\rm Bin}(#1, #2)}
\begin{document}

\title{Differentially Private Testing of Identity and Closeness of Discrete Distributions}

\author{\name Jayadev Acharya \email acharya@cornell.edu\\ 
\name Ziteng Sun \email zs335@cornell.edu\\ 
\name Huanyu Zhang \email hz388@cornell.edu\\
\addr Cornell University\\ 
Ithaca, NY}

\editor{}

\maketitle

\begin{abstract}
We study the fundamental problems of identity testing (goodness of fit), and closeness testing (two sample test) of distributions over $k$ elements, under differential privacy. While the problems have a long history in statistics,  finite sample bounds for these problems have only been established recently. 

In this work, we derive upper and lower bounds on the sample complexity of both the problems under $(\varepsilon, \delta)$-differential privacy. We provide optimal sample complexity algorithms for identity testing problem for all parameter ranges, and  the first results for closeness testing. Our closeness testing bounds are optimal in the sparse regime where the number of samples is at most $k$. 

Our upper bounds are obtained by privatizing non-private estimators for these problems. The non-private estimators are chosen to have small sensitivity. We propose a general framework to establish lower bounds on the sample complexity of statistical tasks under differential privacy. We show a bound on differentially private algorithms in terms of a coupling between the two hypothesis classes we aim to test. By constructing carefully chosen priors over the hypothesis classes, and using Le Cam's two point theorem we provide a general mechanism for proving lower bounds.  We believe that the framework can be used to obtain strong lower bounds for other statistical tasks under privacy.
\end{abstract}


%
%
%

\newcommand{\pb}[2]{\parbox[c][][c]{#1}{\strut#2\strut}}

\renewcommand*{\arraystretch}{1.3}

\section{Introduction}

Testing whether observed data conforms to an underlying model is a fundamental scientific problem. In a statistical framework, given samples from an unknown probabilistic model, the goal is to determine whether the underlying  model has a property of interest. 

This question has received great attention in statistics as hypothesis testing~\cite{NeymanP33, lehmann2006testing}, where it was mostly studied in the asymptotic regime when the number of samples $\ns\to\infty$. In the past two decades there has been a lot of work from the computer science, information theory, and statistics community on various distribution testing problems in the non-asymptotic (small-sample) regime, where the domain size $k$ could be potentially larger than $\ns$ (See~\cite{BatuFRSW00, BatuFFKRW01, GoldreichR00, Batu01, Paninski08, AcharyaJOS13b, AcharyaJOS14b, chan2014optimal, DiakonikolasKN14, bhattacharya2015testing, canonne2016testing, diakonikolas2016new, BatuC17}, references therein, and~\cite{Canonne15} for a recent survey). Here the goal is to characterize the minimum number of samples necessary (sample complexity) as a function of the domain size $\ab$, and the other parameters.

At the same time, preserving the privacy of individuals who contribute to the data samples has emerged as one of the key challenges in designing statistical mechanisms over the last few years. For example, the privacy of individuals participating in surveys on sensitive subjects is of utmost importance. 
Without a properly designed mechanism, statistical processing might divulge the sensitive information about the data. 
There have been many publicized instances of individual data being de-anonymized, including the deanonymization of Netflix database~\cite{narayanan2008robust}, and individual information from census-related data~\cite{sweeney2002k}. Protecting privacy for the purposes of data release, or even computation on data has been studied extensively across several fields, including statistics, machine learning, database theory, algorithm design, and cryptography (See e.g.,~\cite{warner1965randomized, Dalenius77, DinurN03, wasserman2010statistical, duchi2013local, wainwright2012privacy, chaudhuri2011differentially}). While the motivation is clear, even a formal notion of privacy is not straight-forward. We use~\emph{differential privacy}~\cite{DworkMNS06}, a notion which rose from database and cryptography literature, and has emerged as one of the most popular privacy measures (See~\cite{DworkMNS06, Dwork08, wasserman2010statistical, dwork2010boosting, blum2013learning, mcsherry2007mechanism, li2015matrix, kairouz2017composition}, references therein, and  the recent book~\cite{DworkR14}). 
Roughly speaking, it requires that the output of the algorithm should be statistically close on two neighboring datasets. For a formal definition of differential privacy, see Section~\ref{sec:preliminaries}. 

A natural question when designing a differentially private algorithm is to understand how the data requirement grows to ensure privacy, along with the same accuracy level. In this paper, we study the sample size requirements for differentially private discrete distribution testing. 


\subsection{Results and Techniques}
We consider two fundamental statistical tasks for testing distributions over $[\ab]$: (i) identity testing, where given sample access to an unknown distribution $\p$, and a known distribution $\q$, the goal is to decide whether $\p=\q$, or $\dtv{\p}{\q}\ge\dist$, and (ii) closeness testing, where given sample access to   unknown distributions $\p$, and $\q$, the goal is to decide whether $\p=\q$, or $\dtv{\p}{\q}\ge\dist$. (See Section~\ref{sec:preliminaries} for precise statements of these problems).
Given differential privacy constraints $(\eps,\delta)$, we provide $(\eps,\delta)$-differentially private algorithms for both these tasks. For identity testing, our bounds are optimal up to constant factors for all ranges of $\ab, \alpha, \eps, \delta$, and for closeness testing the results are tight in the small sample regime where $\ns=O(\ab)$. Our upper bounds are based on various methods to privatize the previously known tests. A critical component is to design and analyze test statistic that have low sensitivity, in order to preserve privacy. 

We first state that any $(\eps+\delta,0)$-DP algorithm is also an $(\eps, \delta)$ algorithm.~\cite{CaiDK17} showed that for testing problems, any $(\eps, \delta)$ algorithm will also imply a $(\eps+ c\delta, 0)$-DP algorithm. Therefore, for all the problems, we simply consider $(\eps,0)$-DP algorithms, and we can replace $\eps$ with $\eps+\delta$ in both the upper and lower bounds without loss of generality. 

One of the main contributions of our work is to propose a general framework for establishing lower bounds the sample complexity of statistical problems such as property estimation and hypothesis testing under privacy constraints. We describe this, and the other results below. A summary of the results is presented in Table~\ref{fig:badass:table}, which we now describe in detail.
\begin{enumerate}
\item
{\bf DP Lower Bounds via Coupling.} We establish a general method to prove lower bounds for distribution testing problems. Suppose $\Xon$, and $\Yon$ are generated by two statistical sources. Further suppose there is a coupling between the two sources such that the  expected hamming distance between the coupled samples is at most $D$, then if $D = o(1/(\eps+\delta))$, there is no $(\eps,\delta)$-differentially private algorithm to distinguish between the two sources. This result is stated precisely in Theorem~\ref{thm:coupling}.  

Using carefully designed coupling schemes, we provide lower bounds for binary testing, identity testing, and closeness testing. 

\item 
{\bf Binary Testing.} To study the problem of identity testing, we warm up with the binary testing problem, where $\ab=2$. The sample complexity of this problem is $\Theta(\frac1{\dist^2}+\frac1{\dist\eps})$. The upper bound is extremely simple, and can be obtained by the Laplace mechanism~\cite{DworkR14}, and the lower bound follows as an application of our general lower bound argument. The result is stated in Theorem~\ref{thm:bina-main}.
We construct a coupling between binary distributions, and apply Theorem~\ref{thm:coupling} to obtain a lower bound of $\Omega(\frac1{\dist^2}+\frac1{\dist\eps})$ samples for binary testing problem, which is tight up to a constant factor.
\item
{\bf Reduction from identity to uniformity.} We reduce the problem of $\eps$-DP identity testing of distributions over $[\ab]$ to $\eps$-DP uniformity testing over distributions over $[6\ab]$. Such a reduction, without privacy constraints was shown in~\cite{goldreich2016uniform}, and we use their result to obtain a reduction that also preserves privacy, with at most a constant factor blow-up in the sample complexity. This result is given in Theorem~\ref{thm:unif-identity}.  
\item
{\bf Identity Testing.} It was recently shown that $O(\frac{\sqrt\ab}{\alpha^2})$~\cite{Paninski08, ValiantV14, DiakonikolasKN14, adk15} samples are necessary and sufficient for identity testing without privacy constraints. The statistic used in these papers are variants of chi-squared tests, which could have a high global sensitivity. 

Given the reduction from identity to uniformity, it suffices to consider the statistic in~\cite{Paninski08} for uniformity testing. We show that privatizing this statistic yields a sample optimal testing algorithm with sample complexity $O\Paren{ \frac{\sqrt \ab }{\dist^2}+ \frac{\sqrt \ab} {\dist\sqrt\eps}}$, in the sparse regime where $\ns\le\ab$. This result is stated in Section~\ref{sec:upper-indentity}. However, Paninski's test fails when $\ns=\Omega(\ab)$. We therefore consider the test statistic studied by~\cite{diakonikolas2017sample} which is simply the distance of the empirical distribution to the uniform distribution. This statistic also has a low sensitivity, and futhermore has the optimal sample complexity in all parameter ranges, without privacy constraints. In Theorem~\ref{thm:main-identity}, we state the optimal sample complexity of identity testing. The upper bounds are derived by privatizing the statistic in~\cite{diakonikolas2017sample}.  We design a coupling between the uniform  distribution $\unifab$, and a mixture of distributions, which are all at distance $\alpha$ from $\unifab$ in total variation distance. In particular, we consider the mixture distribution used in~\cite{Paninski08}. Much of the technical details go into proving the existence of couplings with small expected Hamming distance.~\cite{CaiDK17} studied identity testing under pure differential privacy, and obtained an algorithm with complexity $O\Paren{ \frac{\sqrt \ab}{\dist^2}+ \frac{\sqrt {\ab\log\ab} } {\dist^{3/2} \eps}+ \frac {(k\log\ab)^{1/3} } {\dist^{5/3}\eps^{2/3}}}$. Our bounds improve their bounds significantly.

\item
{\bf Closeness Testing.}
Closeness testing problem was proposed by~\cite{BatuFRSW00}, and optimal bound of $\Theta\Paren{\max\{\frac{\ab^{2/3}}{\dist^{4/3}}, \frac{\ab^{1/2}}{\dist^2}\}}$ was shown in~\cite{chan2014optimal}. They proposed a chi-square based statistic, which we show has a small sensitivity. We privatize their algorithm to obtain the sample complexity bounds. In the sparse regime we prove a sample complexity bound of $\Theta\Paren{\frac{\ab^{2/3}}{\alpha^{4/3}}+\frac{\sqrt \ab}{\alpha\sqrt \eps}}$, and in the dense regime, we obtain a bound of $O\Paren{\frac{\sqrt \ab}{\alpha^2}+\frac1{\dist^2\eps}}$. These results are stated in Theorem~\ref{thm:close-main}. 
Since closeness testing is a harder problem than identity testing, all the lower bounds from identity testing port over to closeness testing. The closeness testing lower bounds are given in Theorem~\ref{thm:close-main}.


\end{enumerate}

\begin{table}[t]
\centering
      \begin{tabular}{| l | c | }
      \hline
      \pb{0.20\textwidth}{{\bf Problem}} & {\bf Sample Complexity Bounds}  \\ \hline
      {\pb{0.20\textwidth}{\centerline{Is $\p = \Bern{\half}$?}}} & {\bf Non-private: $\Theta\Paren{\frac1{\alpha^2}} $}\\
      & {\bf $\eps$-DP: }$\Theta\Paren{\frac1{\alpha^2} +\frac1{\alpha\eps} }$ [\cite{DworkR14}, and Theorem~\ref{thm:bina-main}]\\ \hline
      {\pb{0.20\textwidth}{{{\bf Identity Testing}}}} & {\bf Non-private : $\Theta\Paren{\frac{\sqrt \ab}{\alpha^2}} $ \cite{Paninski08}}\\
              & {\bf $\eps$-DP algorithms:} $O\Paren{ \frac{\sqrt \ab}{\alpha^2}+\frac{\sqrt {\ab\log k}}{\alpha^{3/2}\eps}}$~\cite{CaiDK17}\\
       & {\bf $(\eps,\delta)$-DP algorithms for any $\eps,\delta$}\\
       &$\Sitd = \Theta\Paren{\frac{\sqrt \ab}{\alpha^2}+{\max\left\{\frac{\ab^{1/2}}{\alpha(\eps+\delta)^{1/2}}, \frac{\ab^{1/3}}{\alpha^{4/3}(\eps+\delta)^{2/3}}, \frac1{\dist(\eps+\delta)}\right\}}}$[Theorem~\ref{thm:main-identity}]\\\hline
            {\pb{0.20\textwidth}{{{\bf Closeness Testing}}}} & {\bf Non-private:} {$\Theta{ \left(\frac{\ab^{2/3}}{\dist^{4/3}}+ \frac{\ab^{1/2}}{\dist^2}\right) }$ \cite{chan2014optimal}}\\
       & {\bf $\eps$-DP:}\\
              & \multicolumn{1}{|l|}{{\bf IF} $\dist^2=\Omega\Paren{\frac1{\sqrt\ab}} \textbf{ and } \dist^2\eps=\Omega\Paren{\frac1k}$}   \\  
             & $\Sct = \Theta\Paren{\frac{\ab^{2/3}}{\alpha^{4/3}}+\frac{\sqrt \ab}{\alpha\sqrt \eps}}$ [Theorem~\ref{thm:close-main}]\\ 
                    & \multicolumn{1}{|l|}{{\bf ELSE}} \\
       & $\Omega\Paren{
       	\frac{\sqrt \ab}{\alpha^2}+\frac{\sqrt \ab}{\alpha\sqrt\eps}+\frac{1}{\alpha \eps}}\le \Sct \le O\Paren{\frac{\sqrt \ab}{\alpha^2}+\frac{1}{\alpha^2 \eps}}$    [Theorem~\ref{thm:close-main}]\\ \hline
      \end{tabular}
    \caption{\label{fig:badass:table} Summary of the sample complexity bounds for the problems of identity testing, and closeness testing of discrete distributions.}
\end{table}

\subsection{Related Work}

A number of papers have recently studied hypothesis testing problems under differential privacy guarantees~\cite{WangLK15, GaboardiLRV16, RogersK17}. Some works analyze the distribution of the test statistic in the asymptotic regime. The work most closely related to ours is in~\cite{CaiDK17}, which studied identity testing in the finite sample regime. We mentioned their guarantees along with our results on identity testing in the previous section. 

There has been a line of research for statistical testing and estimation problems under the notion of \emph{local} differential privacy~\cite{wainwright2012privacy, duchi2013local,erlingsson2014rappor, pastore2016locally, kairouz2016discrete, WangHWNXYLQ16, YeB17}. These papers study some of the most basic statistical problems and also provide minimax lower bounds using Fano's inequality.~\cite{DiakonikolasHS15} study structured distribution estimation under differential privacy.

Information theoretic approaches to data privacy have been studied recently using quantities like mutual information, and guessing probability to quantify privacy~\cite{mir2012information, sankar2013utility, cuff2016differential, wang2016relation, IssaW17}.

In a contemporaneous and independent work,~\cite{aliakbarpour2017differentially}, the authors study the same problems that we consider, and obtain the same upper bounds for the sparse case, when $\ns\le\ab$. They also provide experimental results to show the performance of the privatized algorithms. However, their results are sub-optimal for $\ns=\Omega(\ab)$ for identity testing, and they do not provide any lower bounds for the problems. Both~\cite{CaiDK17}, and~\cite{diakonikolas2017sample} consider only pure-differential privacy, which are a special case of our results.

\subsection{Organization of the paper}

In Section~\ref{sec:preliminaries}, we discuss the definitions and notations. A general technique for proving lower bounds for differentially private algorithms is described in Section~\ref{sec:coupling}. In Section~\ref{sec:binary}, we study differentially private binary hypothesis testing as a warm-up. Section~\ref{sec:identity} gives upper and lower bounds for identity testing, and closeness testing is studied in Section~\ref{sec:close:}. Section~\ref{sec:upper-indentity} proves that the original uniformity tester of~\cite{Paninski08} is optimal in the sparse sample regime. 

\section{Preliminaries} \label{sec:preliminaries}
We consider discrete distributions over a domain of size $\ab$, which we assume without loss of generality to be $[\ab]\ed\{1\upto\ab\}$. We denote length-$\ns$ samples $X_1\upto X_\ns$ by $\Xon$. For $\smb\in[\ab]$, let $\psmb$ be the probability of $\smb$ under $\p$. Let $\Mltsmb{\smb}{\Xon}$ be the number of times $\smb$ appears in $\Xon$. For $A\subseteq[\ab]$, let $\p(A) = \sum_{\smb\in A} \psmb$. Let $X\sim\p$ denote that the random variable $X$ has distribution $\p$. Let $\unifab$ be the uniform distribution over $[\ab]$, and $\Bern{b}$ be the Bernoulli distribution with bias $b$. 

\begin{definition}
The \emph{total variation} distance between distributions $\p$, and $\q$ over a discrete set $[\ab]$ is 
\[
\dtv{\p}{\q} \ed \sup_{A\subset[\ab]} \p(A) - \q(A) = \frac12\|\p-\q\|_1.
\]
\end{definition}

\begin{definition}
\label{def:coupling}
Let $\p$, and $\q$ be distributions over $\cX$, and $\cY$ respectively. A \emph{coupling} between $\p$ and $\q$ is a distribution over $\cX\times\cY$ whose marginals are $\p$ and $\q$ respectively.
\end{definition}

\begin{definition}
The \emph{Hamming distance} between two sequences $\Xon$ and $\Yon$ is 
$\ham{\Xon}{\Yon} \ed \sum_{i=1}^{\ns} \II\{{X_i\ne Y_i}\},$
 the number of positions where $\Xon$, and $\Yon$ differ. 
\end{definition}
\noindent We now define $(\eps,\delta)$-differential privacy. 
\begin{definition}
A randomized algorithm $\cA$ on a set $\cX^\ns\to S$ is said to be $(\eps,\delta)$-differentially private if for any $S\subset \text{range}(\cA)$, and all pairs of $\Xon$, and $\Yon$ with $\ham{\Xon}{\Yon}\le 1$
\[
\probof{\cA(\Xon)\in S}\le e^{\eps}\cdot\probof{\cA(\Yon)\in S}+\delta.
\]
\end{definition}

The case when $\delta=0$ is called \emph{pure differential privacy}.  For simplicity, we denote pure differential privacy as $\eps$-differential privacy ($\eps$-DP). The next lemma states a relationship between $(\eps, \delta)$- differential privacy and $\eps$-differential privacy. The result is implicitly present in~\cite{CaiDK17}, but we state here for completeness.

\begin{lemma} \label{lm:epsdelta}
	There is an $(\eps, \delta)$-DP algorithm for a testing problem if and only if there is an $(O(\eps+\delta),0)$-DP algorithm for the same testing problem. 
\end{lemma}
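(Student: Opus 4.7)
My plan is to handle the two directions separately, using crucially the fact that a testing algorithm outputs a single bit (accept/reject).

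For the direction $(O(\eps+\delta),0)$-DP $\Rightarrow$ $(\eps,\delta)$-DP, I would argue that an $(\eps+\delta,0)$-DP algorithm $A$ already satisfies approximate DP with essentially the desired parameters. For neighbors $\Xon,\Yon$ and any measurable set $S$,
\[
\Pr[A(\Xon)\in S]\le e^{\eps+\delta}\Pr[A(\Yon)\in S]=e^\eps\Pr[A(\Yon)\in S]+e^\eps(e^\delta-1)\Pr[A(\Yon)\in S].
\]
Using $e^\delta-1\le 2\delta$ for $\delta\le 1$ together with $\Pr[\cdot]\le 1$, the right-hand side is at most $e^\eps\Pr[A(\Yon)\in S]+2\delta e^\eps$, so $A$ is $(\eps,2\delta e^\eps)$-DP. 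In the typical regime $\eps=O(1)$, this is $(\eps,O(\delta))$-DP, giving the desired $(\eps,\delta)$-DP algorithm up to constants in $\delta$.

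For the direction $(\eps,\delta)$-DP $\Rightarrow$ $(O(\eps+\delta),0)$-DP, let $A$ be the given $(\eps,\delta)$-DP testing algorithm with success probability at least $2/3$. I would first boost $A$ to an $(\eps',\delta')$-DP algorithm $\tilde A$ with $\eps'=O(\eps),\delta'=O(\delta)$ (basic composition) achieving success probability at least $5/6$ by majority-voting a constant number of independent copies, at a constant-factor blowup in samples. Then I would define $A'(\Xon)$ to be a uniform random bit with probability $1/2$ and $\tilde A(\Xon)$ otherwise. For each $b\in\{0,1\}$, $\Pr[A'(\Xon)=b]=1/4+(1/2)\Pr[\tilde A(\Xon)=b]$, so writing $r=\Pr[\tilde A(\Yon)=b]\in[0,1]$ the privacy of $\tilde A$ yields
\[
\frac{\Pr[A'(\Xon)=b]}{\Pr[A'(\Yon)=b]}\le\frac{1/4+(1/2)(e^{\eps'}r+\delta')}{1/4+r/2}=\frac{1+2\delta'+2e^{\eps'}r}{1+2r}.
\]
The derivative in $r$ has constant sign $(2(e^{\eps'}-1-2\delta'))/(1+2r)^2$, so the ratio is monotone and attains its max at an endpoint of $[0,1]$, giving the bound $\max\{1+2\delta',\,(1+2\delta'+2e^{\eps'})/3\}\le e^{\eps'+2\delta'}=e^{O(\eps+\delta)}$, which follows from the elementary facts $1+2\delta'\le e^{2\delta'}$ and $1+2e^{\eps'}\le 3e^{\eps'}$. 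Hence $A'$ is $(O(\eps+\delta),0)$-DP with success probability at least $(1/2)(5/6)+(1/2)(1/2)=2/3$, and uses $O(n)$ samples.

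The main obstacle is verifying the ratio bound after the coin-flip: one must check that mixing with a uniform bit converts the additive $\delta$ slack of approximate DP into a multiplicative $e^{O(\delta)}$ factor, uniformly in the underlying output probability $r$. The small accuracy loss from the coin flip is exactly what the initial constant-factor boosting step is designed to absorb.
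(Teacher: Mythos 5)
Your second direction is essentially the paper's reduction, and in fact more self-contained: the paper amplifies, flips the output with probability $0.05$ so that every outcome has probability at least $0.05$, and then cites~\cite{CaiDK17} for the claim that the resulting tester is $(\eps+O(\delta),0)$-DP, whereas you carry out the absorption of the additive $\delta'$ into a multiplicative factor explicitly via the ratio $\frac{1+2\delta'+2e^{\eps'}r}{1+2r}$ and its monotonicity in $r$; that computation is correct. Two small points there: the output floor created by mixing is what matters, so your weight-$1/2$ coin caps the success probability at $3/4$, which falls short of the $0.9$ threshold in the paper's definition of a testing algorithm — you should either mix with small probability (as the paper does, flipping with probability $0.05$, which only changes the constant in front of $\delta'$ to $1/0.05$) or add a final pure-DP amplification step; either fix costs only constants.

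The genuine gap is in your first direction. The lemma (and the way it is used for the paper's upper bounds) requires that an $(\eps+\delta,0)$-DP algorithm be \emph{exactly} $(\eps,\delta)$-DP, with no degradation in $\delta$ and no restriction on $\eps$. Your bound $\Pr[A(\Xon)\in S]\le e^{\eps}\Pr[A(\Yon)\in S]+2\delta e^{\eps}$ only yields $(\eps,2e^{\eps}\delta)$-DP, which is never $(\eps,\delta)$-DP (the factor $2e^{\eps}\ge 2$), and becomes $(\eps,O(\delta))$ only under the additional hypothesis $\eps=O(1)$ that the lemma does not make. The missing idea is a case split that avoids paying the $e^{\eps}$ factor: if $e^{\eps}\Pr[A(\Yon)\in S]>1-\delta$ the desired inequality is trivial since the left side is at most $1$; otherwise $e^{\eps}\Pr[A(\Yon)\in S]\le 1-\delta$, and the error term is at most $(e^{\delta}-1)(1-\delta)\le \delta$, which is equivalent to $e^{-\delta}\ge 1-\delta$. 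With that replacement (this is exactly the paper's argument), your proof of the equivalence goes through.
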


\begin{proof}
The proof has two parts. 
\begin{itemize}
\item
The first is to show that any $(\eps+\delta,0)$-DP algorithm is also $(\eps,\delta)$-DP. This is perhaps folklore, and is shown below. Suppose $\cA$ is a $(\eps+\delta)$-differentially private algorithm. Then for any $\Xon$ and $\Yon$ with $\ham{\Xon}{\Yon}\le 1$ and any $S\subset \text{range}(\cA)$, we have
	\begin{align*}
		\probof{\cA(\Xon)\in S} \le e^{\eps+\delta}\cdot\probof{\cA(\Yon)\in S} 
											 = e^{\eps}\cdot\probof{\cA(\Yon)\in S} + (e^{\delta} - 1)\cdot e^{\eps}\probof{\cA(\Yon)\in S}. 
	\end{align*}
	If $e^{\eps}\cdot\probof{\cA(\Yon)\in S} > 1-\delta$, then $\probof{\cA(\Xon)\in S}\le1< e^{\eps}\cdot\probof{\cA(\Yon)\in S}+\delta$.
	Otherwise, $e^{\eps}\cdot\probof{\cA(\Yon)\in S} \leq 1 - \delta$. To prove $(e^{\delta}-1)\cdot e^{\eps}\cdot\probof{\cA(\Yon)\in S}<\delta$, it suffices to show $(e^{\delta} - 1)(1 - \delta) \le \delta$, which is equivalent to $e^{-\delta} \ge 1 - \delta$, completing the proof.
\item
Consider an $(\eps, \delta)$-DP algorithm with error probability at most 0.05. Consider an algorithm that finally flips the answer with probability 0.05. This algorithm has error probability at most 0.1, and for any input, each outcome has probability at least 0.05.~\cite{CaiDK17} essentially showed that this new algorithm is $(\eps+10\delta, 0)$-DP. 
\end{itemize}
\end{proof}


A notion that is often useful in establishing bounds for differential privacy is sensitivity, defined below. 
\begin{definition}
\label{def:sensitivity}
The \emph{sensitivity} of a function $f:[\ab]^{\ns}\to\RR$ is 
$\Delta(f)\ed \max_{\ham{\Xon}{\Yon}\le1} \absv{f(\Xon)-f(\Yon)}.$
\end{definition}

\begin{definition}
The \emph{sigmoid-function} $\sigma:\RR\to(0,1)$ is
$\sigma(x) \ed \frac1{1+\exp(-x)} = \frac{\exp(x)}{1+\exp(x)}.$
\end{definition}
We need the following result for the sigmoid function. 
\begin{lemma}\label{lem:sig-cont}
\begin{enumerate}
	\item  For all $x, \gamma\in\RR$, 
$\exp(-\absv{\gamma})\le \frac{\sigma(x+\gamma)}{\sigma(x)} \le \exp(\absv{\gamma})$.
\item
Let $0<\eta<\frac12$. Suppose $x\ge \log\frac1\eta$. Then $\sigma(x)>1-\eta$.
\end{enumerate}
\end{lemma}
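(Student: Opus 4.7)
My plan is to treat the two parts separately, since both reduce to simple calculus on the log-sigmoid.

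For part (1), the cleanest route is to show that $\log \sigma$ is $1$-Lipschitz. Writing $\log \sigma(x) = x - \log(1+\exp(x))$, I would differentiate to obtain
\[
\frac{d}{dx}\log\sigma(x) \;=\; 1 - \frac{\exp(x)}{1+\exp(x)} \;=\; \frac{1}{1+\exp(x)} \;\in\; (0,1).
\]
Since the derivative lies in $(0,1)$, the mean value theorem (or integrating from $x$ to $x+\gamma$) gives
\[
\bigl|\log\sigma(x+\gamma) - \log\sigma(x)\bigr| \;\le\; |\gamma|,
\]
and exponentiating yields the two-sided bound $\exp(-|\gamma|) \le \sigma(x+\gamma)/\sigma(x) \le \exp(|\gamma|)$.

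For part (2), I would use the identity $\sigma(x) = 1 - \tfrac{1}{1+\exp(x)}$. If $x \ge \log(1/\eta)$, then $\exp(x) \ge 1/\eta$, hence $1+\exp(x) > 1/\eta$, and therefore
\[
\sigma(x) \;=\; 1 - \frac{1}{1+\exp(x)} \;>\; 1 - \eta.
\]
Neither step is really an obstacle; the only minor subtlety is keeping the strict inequality in part (2), which comes for free from the $+1$ in the denominator. No additional machinery beyond elementary calculus is needed.
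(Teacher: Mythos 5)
Your proposal is correct. Part (2) is essentially the paper's own argument: the paper writes $\sigma(x)=1-\frac{1}{1+e^x}\ge 1-\frac{1}{e^x}\ge 1-\eta$, and your version ($1+\exp(x)>1/\eta$) is the same computation, with the strict inequality tracked slightly more carefully. Part (1) takes a genuinely different, though equally elementary, route. The paper argues algebraically: by monotonicity of $\sigma$ it suffices to take $\gamma>0$, and then it computes the ratio directly,
\[
\frac{\sigma(x+\gamma)}{\sigma(x)}=\exp(\gamma)\cdot\frac{1+\exp(x)}{1+\exp(x+\gamma)}<\exp(\gamma),
\]
with the lower bound coming from monotonicity ($\sigma(x+\gamma)/\sigma(x)\ge 1\ge \exp(-\gamma)$) and the case $\gamma<0$ handled by symmetry. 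You instead prove that $\log\sigma$ is $1$-Lipschitz by bounding its derivative $\frac{1}{1+\exp(x)}\in(0,1)$ and applying the mean value theorem. Your argument buys uniformity: both the upper and lower bounds, and both signs of $\gamma$, fall out of a single inequality, with no case split. The paper's argument buys the absence of calculus—it is a one-line algebraic identity—which is arguably closer in spirit to how the lemma is used (a pointwise multiplicative-stability bound for the privacy analysis). Either proof is complete and correct; there is no gap in yours.
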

\begin{proof}
	Since $\sigma(x)$ is an increasing function, it suffice to assume that $\gamma>0$. In this case, 
$\frac{\sigma(x+\gamma)}{\sigma(x)} = \exp(\gamma) \cdot \frac{{1+\exp(x)}}{{1+\exp(x+\gamma)}}< \exp(\gamma)$.
For the second part,  
$\sigma(x) = 1-\frac{1}{1+e^x}\ge 1-\frac1{e^x} \ge 1-\eta$.
\end{proof}
\paragraph{Identity Testing ({\tt IT}).} Given description of a probability distribution $\q$ over $[\ab]$, parameters $\alpha$, and $\eps$, and $\ns$ independent samples $\Xon$ from an unknown distribution $\p$. An algorithm $\cA$ is an $(\ab,\alpha)$ - identity testing algorithm for $\q$, if
\begin{itemize}
\item
when $\p=\q$, $\cA$ outputs ``$\p=\q$'' with probability at least 0.9, and
\item
when $\dtv{\p}{\q}\ge\alpha$, $\cA$ outputs ``$\p\ne\q$'' with probability at least 0.9.
\end{itemize}
Furthermore, if $\cA$ is $(\eps,0)$-differentially private, we say $\cA$ is an $(\ab,\alpha,\eps)$-identity testing algorithm. 

\begin{definition}
The sample complexity of DP-identity testing problem, denoted $\Sitd$, is the smallest $\ns$ for which there exists an $(\ab,\dist,\eps)$-identity testing algorithm $\cA$ that uses $\ns$ samples. When privacy is not a concern, we denote the sample complexity as $\Sitnp$. When $\q=\unifab$, the problem reduces to uniformity testing, and the sample complexity is denoted $\Sut{\ab}{\dist}$.
\end{definition}

\paragraph{Closeness Testing ({\tt CT}).} Given parameters $\alpha$, and $\eps$, and $\ns$ independent samples $\Xon$, and $\Yon$ from unknown distributions $\p$, and $\q$. An algorithm $\cA$ is an $(\ab,\alpha)$-closeness testing algorithm if
\begin{itemize}
\item
If $\p=\q$, $\cA$ outputs $\p=\q$ with probability at least 0.9, and
\item
If $\dtv{\p}{\q}\ge\alpha$, $\cA$ outputs $\p\ne\q$ with probability at least 0.9.
\end{itemize}
Furthermore, if $\cA$ is $(\eps,0)$-differentially private, we say $\cA$ is an $(\ab,\alpha,\eps)$-closeness testing algorithm. 
\begin{definition}
The sample complexity of an $(\ab,\dist,\eps)$-closeness testing problem, denoted $\Sct$, is the least values of $\ns$ for which there exists an $(\ab,\dist,\eps)$-closeness testing algorithm $\cA$. When privacy is not a concern, we denote the sample complexity of closeness testing as $\Sctnp$. 
\end{definition}

We note again that by Lemma~\ref{lm:epsdelta}, we need to only consider pure differential privacy for both upper and lower bounds.

\section{Privacy Bounds Via Coupling} 
\label{sec:coupling}

Recall that \emph{coupling} between distributions $\p$ and $\q$  over $\cX$, and $\cY$, is a distribution over $\cX\times\cY$ whose marginal distributions are $\p$ and $\q$ (Definition~\ref{def:coupling}). For simplicity, we treat coupling as a randomized function $f: \cX\to\cY$ such that if $X\sim\p$, then $Y=f(X)\sim\q$. Note that $X$, and $Y$ are not necessarily independent. 

\begin{example}
\label{exm:coin}
Let $\Bern{b_1}$, and $\Bern{b_2}$ be Bernoulli distributions with bias $b_1$, and $b_2$ such that $b_1<b_2$. Let $\p$, and $\q$ be distributions over $\{0,1\}^\ns$ obtained by $\ns$ \emph{i.i.d.} samples from $\Bern{b_1}$, and $\Bern{b_2}$ respectively. Let $\Xon$ be distributed according to $\p$. Generate a sequence $\Yon$ as follows: If $X_i=1$, then $Y_i=1$. If $X_i=0$, we flip another coin with bias $(b_2-b_1)/(1-b_1)$, and let $Y_i$ be the output of this coin. Repeat the process independently for each $i$, such that the $Y_i$'s are all independent of each other. Then $\probof{Y_i=1} = b_1 +(1-b_1) (b_2-b_1)/(1-b_1) = b_2$, and $\Yon$ is distributed according to $\q$.
\end{example}

We would like to use coupling to prove lower bounds on differentially private algorithms for testing problems. Let $\p$ and $\q$ be distributions over $\cX^\ns$. If there is a coupling between $\p$ and $\q$ with a small \emph{expected} Hamming distance, we might expect that the algorithm cannot have strong privacy guarantees. The following theorem formalizes this notion, and will be used to prove sample complexity bounds of differentially private algorithms. 
\begin{theorem}
\label{thm:coupling}
Suppose there is a coupling between distributions $\p$ and $\q$ over $\cX^\ns$, such that $\expectation{\ham{\Xon}{\Yon}} \le D$. Then, any $\eps$-differentially private algorithm that distinguishes between $\p$ and $\q$ with error probability at most $1/10$ must satisfy $D = \Omega\Paren{\frac1{\eps}}$.
\end{theorem}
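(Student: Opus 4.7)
\begin{proofof}{Theorem~\ref{thm:coupling} (Proposal)}
The plan is to combine the standard \emph{group privacy} consequence of $\eps$-DP with a typicality argument based on Markov's inequality applied to the coupling.

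First, I would recall that $\eps$-differential privacy, applied iteratively along a path of neighboring databases, gives group privacy: for any $\Xon, \Yon \in \cX^\ns$ and any measurable $S \subset \text{range}(\cA)$,
\[
\probof{\cA(\Xon) \in S} \le e^{\eps \cdot \ham{\Xon}{\Yon}} \cdot \probof{\cA(\Yon) \in S}.
\]
This is the only property of privacy I will use.

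Next, let $(\Xon, \Yon)$ be drawn from the promised coupling, so the marginals are $\p$ and $\q$ and $\EE[\ham{\Xon}{\Yon}] \le D$. Let $E$ be the event $\{\ham{\Xon}{\Yon} \le 10D\}$; by Markov's inequality $\probof{E^c} \le 1/10$. Assuming $\cA$ distinguishes $\p$ from $\q$ with error at most $1/10$, there is an acceptance region $S$ with $\Pr_{\p}[\cA \in S] \ge 9/10$ and $\Pr_{\q}[\cA \in S] \le 1/10$. Write $p(x) \ed \probof{\cA(x) \in S}$ and note that by the coupling,
\[
\EE_{X\sim\p}[p(X)] = \EE_{(\Xon,\Yon)}[p(\Xon)] \le \EE_{(\Xon,\Yon)}\Brack{p(\Xon)\mathbf{1}_E} + \probof{E^c}.
\]
On the event $E$, group privacy gives $p(\Xon) \le e^{10D\eps} p(\Yon)$ pointwise, so the right-hand side is bounded by $e^{10D\eps} \EE_{Y\sim\q}[p(Y)] + 1/10$.

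Substituting the distinguishing bounds yields $9/10 \le e^{10D\eps}\cdot (1/10) + 1/10$, i.e.\ $e^{10D\eps} \ge 8$, which rearranges to $D \ge \frac{\log 8}{10\eps} = \Omega(1/\eps)$. The only conceptually delicate step is justifying the pointwise use of group privacy inside the expectation: since $p(\cdot)$ is a deterministic function of the input, the inequality $p(\Xon) \le e^{\eps \ham{\Xon}{\Yon}} p(\Yon)$ holds for every realization of the coupling, and can be integrated term by term. Everything else is an application of Markov and a two-line arithmetic rearrangement, so I do not expect any serious obstacle.
\end{proofof}
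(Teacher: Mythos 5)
Your proposal is correct and uses the same core ingredients as the paper's proof: the coupling, Markov's inequality to control the Hamming distance, and the group-privacy consequence of $\eps$-DP. The only real difference is cosmetic but worth noting — you integrate the pointwise group-privacy inequality against the coupling directly, whereas the paper first passes to a specific pair $(\Xon,\Yon)$ in the support (a step that is stated informally and really needs an averaging argument to justify); your version sidesteps that delicacy and is cleaner, arriving at an equivalent bound $D \ge \log 8 / (10\eps)$ versus the paper's $D \ge \log(7/3)/(10\eps)$.
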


\begin{proof}
Let $(\Xon, \Yon)$ be a coupling of $\p$, and $\q$ with $\expectation{\ham{\Xon}{\Yon}} \le D$. Then,  
$\probof{\alg(\Xon)=\p} \ge 0.9,\text{ and } \probof{\alg\Paren{\Yon}=\q} \ge 0.9$
implies that 
$\probof{\alg\Paren{\Xon}=\p \cap \alg\Paren{\Yon}=\q} \ge 0.9+0.9-1 = 0.8.$
By Markov's inequality, 
$\probof{\ham{\Xon}{\Yon}>10D}<\probof{\ham{\Xon}{\Yon}>10\cdot\expectation{\ham{\Xon}{\Yon}}}<0.1$.
Therefore, 
\begin{align}
\probof{\alg\Paren{\Xon}=\p \cap \alg\Paren{\Yon}=\q \cap \ham{\Xon}{\Yon}<10D} \ge 0.8+0.9-1 = 0.7.\label{eqn:dp-ham}
\end{align}

The condition of differential privacy states that for any $\Xon$, and $\Yon$, 
\[
e^{-\eps\cdot\ham{\Xon}{\Yon}}<\frac{\probof{\alg\Paren{\Xon}=\p}}{\probof{\alg\Paren{\Yon}=\p}}<e^{\eps\cdot\ham{\Xon}{\Yon}}.
\]
Consider one sequence pair $\Xon$, and $\Yon$ that satisfies~\eqref{eqn:dp-ham}. Then, we know that $\probof{\alg\Paren{\Xon}=\p} >0.7$, and $\probof{\alg\Paren{\Yon}=\q}>0.7$. By the condition of differential privacy,
\begin{align}
0.3 \ge {\probof{\alg\Paren{\Yon}=\p}}\ge {\probof{\alg\Paren{\Xon}=\p}}\cdot e^{-\eps\cdot\ham{\Xon}{\Yon}} = 0.7\cdot e^{-10\eps D}.\nonumber
\end{align}
Taking logarithm we obtain 
\[
D\ge \frac{\ln(7/3)}{10} \frac1{\eps} = \Omega\Paren{\frac1{\eps}},
\]
completing the proof.
\end{proof}

\section{Binary Identity Testing}
\label{sec:binary}

We start with a simple testing problem. Given $\bias_0$, $\dist>0$, and $\eps>0$, and samples $\Xon\in\{0,1\}^\ns$ from $\Bern{\bias}$, distinguish between the cases $\bias=\bias_0$, and $\absv{\bias-\bias_0}\ge\dist$. We prove the following theorem (stated for $\eps$-DP without loss of generality). 

\begin{theorem}
	\label{thm:bina-main}
	Given $\bias_0\in[0,1]$, and $\eps>0, \delta \ge 0$, and $\alpha>0$. There is an $(\eps,\delta)$-DP algorithm that takes $O\Paren{\frac1{\alpha^2}+\frac1{\alpha\eps }}$ samples from a distribution $\Bern{\bias}$ and distinguishes between $\bias=\bias_0$, and $\absv{\bias-\bias_0}\ge\dist$ with probability at least $9/10$. Moreover, any algorithm for this task requires $\Omega\Paren{\frac1{\alpha^2}+\frac1{\alpha\eps}}$ samples.
\end{theorem}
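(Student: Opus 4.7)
The plan is to handle upper and lower bounds separately, and by Lemma~\ref{lm:epsdelta} it suffices to consider pure $\eps$-DP, absorbing $\delta$ into $\eps$.

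For the upper bound, I would privatize the empirical mean. Let $\hat{\bias}(\Xon) = \frac{1}{\ns}\sum_{i=1}^\ns X_i$. Changing a single $X_i$ shifts $\hat{\bias}$ by at most $1/\ns$, so $\Delta(\hat{\bias}) \le 1/\ns$. By the Laplace mechanism, releasing $\tilde{\bias} = \hat{\bias} + Z$ with $Z \sim \mathrm{Lap}(1/(\ns\eps))$ is $\eps$-DP. The tester then outputs ``$\bias = \bias_0$'' iff $|\tilde{\bias} - \bias_0| \le \alpha/2$. Correctness follows from two tail bounds: Hoeffding gives $|\hat{\bias} - \bias| \le \alpha/4$ with probability $\ge 0.95$ for $\ns = \Omega(1/\alpha^2)$, and the Laplace tail gives $|Z| \le \alpha/4$ with probability $\ge 0.95$ for $\ns = \Omega(1/(\alpha\eps))$. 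Combining via a union bound yields the claimed $O(1/\alpha^2 + 1/(\alpha\eps))$ sample complexity.

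For the lower bound, the $\Omega(1/\alpha^2)$ term is the standard (non-private) Le Cam bound for distinguishing $\Bern{\bias_0}$ from $\Bern{\bias_0 + \alpha}$, so I would only need to prove the privacy-dependent term $\Omega(1/(\alpha\eps))$. This is where Theorem~\ref{thm:coupling} enters. Take the two hypotheses to be $\p = \Bern{\bias_0}^{\otimes \ns}$ and $\q = \Bern{\bias_0 + \alpha}^{\otimes \ns}$ (assuming WLOG $\bias_0 + \alpha \le 1$; otherwise shift to the other side). Use exactly the coupling from Example~\ref{exm:coin} with $b_1 = \bias_0$ and $b_2 = \bias_0 + \alpha$: coordinatewise, if $X_i = 1$ set $Y_i = 1$, and if $X_i = 0$ set $Y_i$ to be an independent Bernoulli with bias $\alpha/(1-\bias_0)$. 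Then
\begin{equation*}
\Pr[X_i \ne Y_i] = (1-\bias_0)\cdot \frac{\alpha}{1-\bias_0} = \alpha,
\end{equation*}
so by linearity $\expectation{\ham{\Xon}{\Yon}} = \ns \alpha$.

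Applying Theorem~\ref{thm:coupling} with $D = \ns\alpha$ immediately gives $\ns\alpha = \Omega(1/\eps)$, i.e., $\ns = \Omega(1/(\alpha\eps))$. Combined with the non-private $\Omega(1/\alpha^2)$ bound, this yields the matching $\Omega(1/\alpha^2 + 1/(\alpha\eps))$ lower bound. The main subtlety is the edge case $\bias_0 + \alpha > 1$, which I would handle by symmetrically coupling $\p = \Bern{\bias_0}^{\otimes \ns}$ to $\q = \Bern{\bias_0 - \alpha}^{\otimes \ns}$ (swapping the roles of $0$ and $1$ in the coupling), so the per-coordinate flip probability remains $\alpha$. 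Everything else is routine, and no step poses a serious obstacle since Theorem~\ref{thm:coupling} does all the heavy lifting on the lower bound side.
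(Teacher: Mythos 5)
Your proof is correct, and the lower-bound half is exactly the paper's argument: the coupling of Example~\ref{exm:coin} applied with bias gap $\dist$, expected Hamming distance $\dist\ns$, and Theorem~\ref{thm:coupling} giving $\ns=\Omega\Paren{\frac1{\dist\eps}}$ (the paper simply fixes $\bias_0=\half$, whereas you keep general $\bias_0$ and handle the boundary case $\bias_0+\dist>1$ by symmetry, which is a harmless generalization). On the upper bound you diverge slightly: you use the Laplace mechanism on the empirical mean followed by thresholding, with privacy from sensitivity $1/\ns$ plus post-processing and correctness from Hoeffding and the Laplace tail; the paper mentions in prose that this works, but its formal proof (Lemma~\ref{thm:bin-upper} in Section~\ref{sec:upper-bin}) instead feeds the statistic $\absv{Z(\Xon)}-\frac{\dist\ns}{2}$ through a sigmoid to randomize the single-bit decision, proving privacy via Lemma~\ref{lem:sig-cont} and accuracy via Chebyshev. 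The two routes give the same $O\Paren{\frac1{\dist^2}+\frac1{\dist\eps}}$ bound; yours is the more standard noise-addition argument, while the paper's sigmoid trick is the template it reuses for its identity- and closeness-testing algorithms where only a one-bit output needs to be privatized. One small caveat: the $\Omega\Paren{\frac1{\dist^2}}$ term you cite as ``the standard Le Cam bound for $\Bern{\bias_0}$'' only holds when $\bias_0$ is bounded away from $0$ and $1$ (for $\bias_0$ near the boundary the non-private complexity drops to $O(1/\dist)$); the theorem's lower bound should be read as a worst-case-over-$\bias_0$ statement, which is also how the paper treats it by taking $\bias_0=\half$.
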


Simple bias and variance arguments show that the sample complexity of this problem is $\Theta(1/\dist^2)$. In this section, we study the sample complexity with privacy constraints. 
We note that the upper bound can simply be achieved by the well known Laplace mechanism in differential privacy. We add a $Lap(1/\eps)$ random variable to the number of 1's in $\Xon$, and then threshold the output appropriately. The privacy is guaranteed by privacy guarantees of the Laplace mechanism. A small bias variance computation also gives the second term. For completeness, we provide a proof of the upper bound using our techniques in Section~\ref{sec:upper-bin}. The lower bound is proved using the coupling defined in Example~\ref{exm:coin} with Theorem~\ref{thm:coupling}. 

\subsection{Binary Testing Lower Bound via Coupling}

Suppose $\bias_0=0.5$. Then least $\Omega\Paren{1/\alpha^2}$ samples are necessary to test whether $\bias = \bias_0$, or $\absv{\bias-\bias_0}>\dist$. We will prove the second term, namely a lower bound of $\Omega\Paren{\frac{1}{\dist\eps}}$ using a coupling. 

Consider the special case of Example~\ref{exm:coin} with $b_2=\half+\alpha$, and $b_1=\half$. Then, $D = (b_2-b_1)\ns = \alpha\ns$, and $\expectation{\ham{\Xon}{\Yon}} = \alpha \ns$. 
Applying Theorem~\ref{thm:coupling}, we know that any $\eps$-DP algorithm must satisfy
$\expectation{\ham{\Xon}{\Yon}}\ge\Omega\Paren{\frac1{\eps}}$, which implies that 
$\ns \ge \Omega\Paren{\frac1{\dist\eps}}.$



\section{Identity Testing}

 \label{sec:identity}

In this section, we prove the bounds for identity testing. Our main result is the following (stated for $\eps$-DP without loss of generality): 
\begin{theorem}
\label{thm:main-identity}
\[
    \Sitd =
    \begin{cases}
                                   \Theta\Paren{\frac{\sqrt k}{\alpha^2} + \frac{\ab^{1/2}}{\alpha\eps^{1/2}}}, & \text{when $m \le k$} \\
                                   \Theta\Paren{\frac{\sqrt k}{\alpha^2} +\frac{\ab^{1/3}}{\alpha^{4/3}\eps^{2/3}}}, & \text{when $k< m \le \frac{k}{\alpha^2}$} \\
                                   \Theta\Paren{\frac{\sqrt k}{\alpha^2} + \frac1{\alpha \eps}} & \text{when  $m \ge \frac{k}{\alpha^2}$.  } 
    \end{cases}
\]
We can combine the three bounds to obtain 
\[
\Sitd = \Theta\Paren{\frac{k^{1/2}}{\dist^2}+\max\left\{\frac{\ab^{1/2}}{\alpha\eps^{1/2}}, \frac{\ab^{1/3}}{\alpha^{4/3}\eps^{2/3}}, \frac1{\dist\eps}\right\}}.
\] 
\end{theorem}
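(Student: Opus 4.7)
The plan is to first invoke Theorem~\ref{thm:unif-identity} to reduce identity testing against an arbitrary reference $q$ to uniformity testing on a domain of size $O(k)$ at only a constant-factor cost, so it suffices to design private uniformity testers. Both Paninski's collision statistic (analyzed in Section~\ref{sec:upper-indentity}) and the $\ell_1$ statistic $T(\Xon) = \|\hat p - u\|_1$ of~\cite{diakonikolas2017sample}, where $\hat p$ is the empirical distribution and $u$ is uniform on $[k]$, have small global sensitivity; changing one sample alters $T$ by at most $2/m$. Hence $T(\Xon) + \mathrm{Lap}(2/(m\eps))$ is $\eps$-differentially private while contributing only $O(1/(m\eps))$ to the decision threshold. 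Choosing Paninski's statistic in the sparse regime $m \le k$ to obtain the $\sqrt{k}/(\alpha\sqrt{\eps})$ term, and the $\ell_1$ statistic for $m \ge k$, and balancing the non-private signal $\Omega(\alpha)$, the non-private fluctuation, and the Laplace noise in each case yields the three upper bounds.

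\textbf{Lower bound.} Three contributions combine. First, the classical $\Omega(\sqrt{k}/\alpha^2)$ bound of~\cite{Paninski08}. Second, the binary identity testing lower bound of Theorem~\ref{thm:bina-main} embeds into uniformity testing to give $\Omega(1/(\alpha\eps))$. Third, the two intermediate terms arise from Theorem~\ref{thm:coupling} applied to Paninski's mixture: partition $[k]$ into $k/2$ pairs, and for a uniformly random sign vector $\sigma \in \{\pm 1\}^{k/2}$ define $P_\sigma$ to place $(1 + 2\alpha \sigma_j)/k$ and $(1 - 2\alpha \sigma_j)/k$ on the two elements of pair $j$; every $P_\sigma$ sits at total variation distance $\alpha$ from uniform. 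I would couple $U^m$ with the mixture $P = \EE_\sigma[P_\sigma^m]$ in two stages: first, couple the pair-indices of the samples identically (their marginals both equal $2/k$), and second, within each pair $j$ that has $n_j$ samples, optimally couple the within-pair sequence, which is iid $\mathrm{Bernoulli}(1/2)$ under $U$ and iid $\mathrm{Bernoulli}(1/2 + \alpha \sigma_j)$ under $P_\sigma$.

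\textbf{Main obstacle.} The delicate step is the per-pair Hamming analysis. Since the within-pair distributions under $U$ and under the $\sigma_j$-averaged mixture share the same mean $n_j/2$ but differ in variance by $\Theta(\alpha^2 n_j^2)$, the optimal per-pair sequence coupling exhibits two regimes: when $\alpha \sqrt{n_j} \le 1$ I expect expected Hamming $\Theta(\alpha^2 n_j^{3/2})$ per pair (with exactly $0$ for $n_j \le 1$, since the $\sigma_j$-mixture of $\mathrm{Bernoulli}(1/2 \pm \alpha)$ equals $\mathrm{Bernoulli}(1/2)$), and when $\alpha \sqrt{n_j} \ge 1$ it saturates at $\Theta(\alpha n_j)$. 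A naive per-sample coupling would give only $\Theta(\alpha n_j)$ in every regime, recovering solely the weakest $1/(\alpha\eps)$ bound. Summing the refined per-pair contributions across the $k/2$ pairs produces $\EE[\ham{\Xon}{\Yon}] = O(\alpha^2 m^2/k)$ in the sparse regime (where the sum is dominated by the $\Theta(m^2/k)$ pairs receiving at least $2$ samples), $O(\alpha^2 m^{3/2}/\sqrt{k})$ in the intermediate regime (where $n_j \approx 2m/k$ and $\alpha \sqrt{n_j} \le 1$), and $O(\alpha m)$ in the dense regime. Plugging each into the necessary condition $\EE[\ham{\Xon}{\Yon}] \ge \Omega(1/\eps)$ of Theorem~\ref{thm:coupling} and solving for $m$ delivers the three coupling-based lower bounds, which combine with $\sqrt{k}/\alpha^2$ and $1/(\alpha\eps)$ in the $\max$ to give the theorem.
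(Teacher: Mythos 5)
Your proposal is correct and follows essentially the same route as the paper: reduce identity testing to uniformity testing via Theorem~\ref{thm:unif-identity}, privatize low-sensitivity uniformity statistics (Paninski's singleton statistic in the sparse regime, the empirical-$\ell_1$ statistic of \cite{diakonikolas2017sample} otherwise), and obtain the privacy lower bounds by coupling uniform samples with Paninski's $\pm\alpha$ paired-perturbation mixture and invoking Theorem~\ref{thm:coupling} together with the binary bound of Theorem~\ref{thm:bina-main}. The only differences are cosmetic --- you privatize with Laplace noise where the paper uses a sigmoid randomized-response step, and the per-pair Hamming rates you assert (zero for $n_j\le 1$, $O(\alpha^2 n_j^{3/2})$ when $\alpha\sqrt{n_j}\le 1$) are exactly what the paper proves in Lemmas~\ref{lem:p1-p2}, \ref{lem:coupling-bin} and \ref{lem:coup_p}.
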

Our bounds are tight up to constant factors in all parameters, including pure differential privacy when $\delta=0$. 

 For proving upper bounds, by Lemma~\ref{lm:epsdelta}, it suffices to prove them only in pure differential privacy case, which means $\Sit = O\Paren{\frac{\sqrt{\absz}}{\alpha^2}+\frac{\sqrt{k}}{\alpha \sqrt{\eps}} +\frac{ k^{ 1/3 } } { \alpha^{4/3}  \eps^{ 2/3 } } + \frac1{\alpha \eps}}$. In Theorem~\ref{thm:unif-identity} we will show a reduction from identity to uniformity testing under pure differential privacy. Using this, it will be enough to design algorithms for uniformity testing, which is done in Section~\ref{sec:upper-uniformity} where we will prove the upper bound. 
 
 Moreover since uniformity testing is a special case of identity testing, any lower bound for uniformity will port over to identity, and we give such bounds in Section~\ref{sec:lower-identity}.

\subsection{Uniformity Testing implies Identity Testing}
\label{sec:iden-to-unif}

The sample complexity of testing identity of any distribution is $O(\frac{\sqrt{\ab}}{\alpha^2})$, a bound that is tight for the uniform distribution. Recently~\cite{goldreich2016uniform} proposed a scheme to reduce the problem of testing identity of distributions over $[\ab]$ for total variation distance $\alpha$ to the problem of testing uniformity over $[6k]$ with total variation parameter $\alpha/3$. In other words, they show that $\Sitnp\le \Sutnp{6\ab}{\dist/3}$. Our upper bounds use $(\eps+\delta, 0)$-DP, and therefore we only need to prove this result for pure differential privacy. Building up on the construction of~\cite{goldreich2016uniform}, we show that such a bound also holds for differentially private algorithms. 
\begin{theorem}
\label{thm:unif-identity}
\[
\Sit \le \Sut{6\ab}{\dist/3}.
\]
\end{theorem}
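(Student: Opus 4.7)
The plan is to invoke the reduction of Goldreich directly, and then observe that because the reduction is carried out sample-by-sample (independently for each sample), it cannot increase the Hamming distance between input sequences, so it automatically composes with any pure-DP uniformity tester.

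More concretely, Goldreich's reduction produces, from the description of $\q$ over $[\ab]$, a randomized map $\phi : [\ab] \to [6\ab]$ (computable without looking at the data) with the following property: if $X \sim \p$ then $\phi(X) \sim \p'$ for some distribution $\p'$ over $[6\ab]$ such that (i) $\p = \q$ if and only if $\p' = \unifabs{6\ab}$, and (ii) $\dtv{\p}{\q} \ge \dist$ implies $\dtv{\p'}{\unifabs{6\ab}} \ge \dist/3$. The map depends only on $\q$ and on fresh internal randomness, not on the data.

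Given this, I would describe the tester as follows. Let $\cA$ be an $(\eps,0)$-DP uniformity testing algorithm over $[6\ab]$ with distance parameter $\dist/3$ that uses $\Sut{6\ab}{\dist/3}$ samples. On input $\Xon \sim \p^{\otimes \ns}$, draw fresh independent randomness and compute $X_i' \ed \phi(X_i)$ for each $i \in [\ns]$ (with independent randomness for each $\phi$ call). Then output $\cA(X_1'^\ns)$. Correctness is immediate from (i) and (ii) above together with the correctness guarantee of $\cA$, since $X_1'^\ns$ is an i.i.d.\ sample from $\p'$.

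For privacy, the key observation is that $\phi$ acts on each coordinate independently using private randomness. Therefore, for any two input sequences $\Xon, \Yon \in [\ab]^\ns$ with $\ham{\Xon}{\Yon} \le 1$, the coupled outputs $X_1'^\ns$ and $Y_1'^\ns$ (which share randomness on all coordinates where $X_i = Y_i$) also satisfy $\ham{X_1'^\ns}{Y_1'^\ns} \le 1$ surely. Consequently, for any measurable $S$,
\[
\Prs{\cA(X_1'^\ns) \in S \mid \Xon} \le e^{\eps} \Prs{\cA(Y_1'^\ns) \in S \mid \Yon},
\]
by the $(\eps,0)$-DP guarantee of $\cA$ applied after marginalizing over the shared and fresh randomness of $\phi$. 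Hence the composed algorithm is $(\eps,0)$-DP on the original input, and uses exactly $\Sut{6\ab}{\dist/3}$ samples.

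I do not expect any serious obstacle: the only nontrivial content is Goldreich's reduction itself (which we import as a black box), and the privacy argument is essentially a one-line post-processing/parallel-composition observation because $\phi$ is applied coordinate-wise with independent randomness. The main thing to state carefully is that $\phi$ uses only the description of $\q$ (public) plus internal randomness, so invoking it does not leak across coordinates and thus preserves neighboring-dataset relationships.
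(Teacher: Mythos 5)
Your proposal is correct and follows essentially the same route as the paper: import Goldreich's randomized, data-independent map $F_\q:[\ab]\to[6\ab]$ as a black box, apply it coordinate-wise with independent randomness, and observe that neighboring inputs yield (coupled) neighboring transformed sequences, so the $\eps$-DP guarantee of the uniformity tester carries over after marginalizing over the map's randomness --- the paper proves exactly this by writing out the factorization of the output probability over the last coordinate, which is the sum-form of your coupling argument. The only nit is that your property (i) claims an ``if and only if'' that is neither needed nor used; only the forward direction together with (ii) enters the correctness argument.
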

\noindent The theorem is proved in Section~\ref{app:iden-unif}. 

\subsection{Identity Testing -- Upper Bounds} \label{sec:upper-uniformity}

We had mentioned in the results that we can use the statistic of~\cite{Paninski08} to achieve the optimal sample complexity in the sparse case. This result is shown in Section~\ref{app:iden-lemma}. In this section, we will show that by privatizing the statistic proposed in~\cite{diakonikolas2017sample} we can achieve the sample complexity in Theorem~\ref{thm:main-identity} for all parameter ranges. The procedure is described in Algorithm~\ref{algorithm_uniformity}.



Recall that $\Mltsmb{\smb}{\Xon}$ is the number of appearances of $\smb$ in $\Xon$. Let
\begin{align}\label{eqn:dggp-stat}
S(\Xon) \ed \frac12 \cdot \sum_{x=1}^{n} \absv{ \frac {\Mltsmb{\smb}{\Xon} } {m} -\frac1{k} },
\end{align}
be the distance of the empirical distribution from the uniform distribution. Let $\mu(p) = \expectation{S(\Xon)}$ when the samples are drawn from distribution $p$. They show the following separation result on the expected value of $S(\Xon)$. 
\begin{lemma}[\cite{diakonikolas2017sample}]
Let $p$ be a distribution over $[\absz]$ and $\alpha = \dtv{\p}{\unifab}$, then there is a constant $c$ such that 
$$ \mu(\p) - \mu (\unifab) \ge  \left\{
\begin{array}{lll}

c{\alpha^2 \cdot  \frac{m^2}{k^2}},    &      & \text{when $m<k$}\\
c{\alpha^2 \cdot \sqrt { \frac{m}{k} }},  &      &  \text{when $k< m \le \frac{k}{\alpha^2}$} \\
c{\alpha},       &      & \text{when $~ m \ge \frac{k}{\alpha^2}$ }
\end{array} \right. $$\label{lem:diako}
\end{lemma}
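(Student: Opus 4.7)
The plan is to rewrite $\mu(\p)$ in a coordinate-separable form and then perform a regime-by-regime second-order expansion of the per-coordinate function about the uniform value $q = 1/\ab$. Using the identity $|a-b| = (a-b) + 2(b-a)_+$ together with $\sum_x \p_x = 1 = \sum_x (1/\ab)$, the statistic collapses to $S(\Xon) = \tfrac{1}{\ns}\sum_{x}\bigl(\ns/\ab - \Mltsmb{\smb}{\Xon}\bigr)_+$, whence
\begin{equation*}
    \mu(\p) = \frac{1}{\ns} \sum_{x=1}^{\ab} F(\p_x), \qquad F(q) \ed \EE\Brack{\bigl(\ns/\ab - \bino{\ns}{q}\bigr)_+},
\end{equation*}
and $\mu(\p) - \mu(\unifab) = \tfrac{1}{\ns}\sum_x [F(\p_x) - F(1/\ab)]$. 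In every regime the linear term of the Taylor expansion of $F$ about $1/\ab$ vanishes after summation, since $\sum_x (\p_x - 1/\ab) = 0$, reducing the task to lower-bounding the quadratic form $\sum_x (\p_x - 1/\ab)^2$, which by Cauchy--Schwarz is at least $(2\dist)^2/\ab$ because $\sum_x |\p_x - 1/\ab| = 2\dist$ is supported on at most $\ab$ coordinates.

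In the sparse regime $\ns \leq \ab$ we have $1/\ns \geq 1/\ab$, so the positive-part integrand vanishes except at $\bino{\ns}{q} = 0$, giving the closed form $F(q) = (\ns/\ab)(1-q)^\ns$ and hence $\mu(\p) - \mu(\unifab) = \tfrac{1}{\ab}\sum_x [(1-\p_x)^\ns - (1-1/\ab)^\ns]$. Applying Taylor's theorem with integral remainder to $h(t) = (1-t)^\ns$ about $1/\ab$, and using that $h''(t) = \ns(\ns-1)(1-t)^{\ns-2}$ is decreasing in $t$, the remainder on each under-represented coordinate (those with $\p_x < 1/\ab$) dominates $\tfrac{1}{2}\ns(\ns-1)(1-1/\ab)^{\ns-2}(1/\ab - \p_x)^2$. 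The factor $(1-1/\ab)^{\ns-2}$ is $\Omega(1)$ because $\ns \leq \ab$, and a Cauchy--Schwarz step over the under-represented coordinates (whose deviations from $1/\ab$ sum to $\dist$) then gives the claimed $\Omega(\dist^2 \ns^2/\ab^2)$ bound after dividing by $\ab$.

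In the intermediate regime $\ab < \ns \leq \ab/\dist^2$, each $\bino{\ns}{\p_x}$ is approximately Gaussian with standard deviation $\sigma \asymp \sqrt{\ns/\ab}$, and the CLT approximation gives $F(1/\ab + \eta) \approx \sigma \cdot g\bigl(\eta\sqrt{\ns\ab}\bigr)$, where $g(t) \ed \phi(t) - t\Phi(-t)$ and $\phi, \Phi$ are the standard normal pdf and cdf. A direct calculation gives $g'(0) = -1/2$ and $g''(0) = 1/\sqrt{2\pi}$, so $F''(1/\ab) \asymp \ns^{3/2}\ab^{1/2}$. Second-order Taylor expansion summed over $x$, combined with the Cauchy--Schwarz bound above, then yields
\begin{equation*}
    \mu(\p) - \mu(\unifab) \gtrsim \sqrt{\ns\ab}\sum_x (\p_x - 1/\ab)^2 \gtrsim \dist^2 \sqrt{\ns/\ab}.
\end{equation*}

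In the dense regime $\ns \geq \ab/\dist^2$ I would work directly with the total-variation representation $S(\Xon) = \dtv{\hat\p}{\unifab}$, where $\hat\p$ is the empirical distribution. The triangle inequality gives $\mu(\p) \geq \dist - \EE[\dtv{\hat\p}{\p}] = \dist - O(\sqrt{\ab/\ns})$, while $\mu(\unifab) \leq O(\sqrt{\ab/\ns})$ by the standard bound on the expected $\ell_1$ error of the empirical distribution, so making the implicit constant in $\ns \geq \ab/\dist^2$ sufficiently large drives both error terms below $\dist/10$ and yields $\mu(\p) - \mu(\unifab) \geq c\dist$. The hardest part is the intermediate regime: the quadratic Taylor expansion of $F$ about $1/\ab$ is formally valid only for $|\eta_x|\sqrt{\ns\ab} = O(1)$, whereas individual $|\eta_x|$ can be as large as $\dist$, which near the top of the regime pushes $|\eta_x|\sqrt{\ns\ab}$ up to $\Theta(1)$ or beyond. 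Handling this cleanly requires splitting the coordinates into a near-uniform part (where the quadratic bound applies directly) and a far part (where the superquadratic, ultimately linear, growth of $F(q)$ for $q \ll 1/\ab$ dominates), and quantifying the binomial-to-Gaussian approximation uniformly in $\p_x$ is the principal technical challenge.
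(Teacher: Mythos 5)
You should first note that the paper never proves this statement at all: it is quoted, with citation, from \cite{diakonikolas2017sample}, so your argument has to stand entirely on its own. Much of it does. The coordinate-separable rewriting $\mu(\p)=\frac1\ns\sum_x F(\p_x)$ with $F(q)=\expectation{(\ns/\ab-\bino{\ns}{q})_+}$, the cancellation of the linear terms against $\sum_x(\p_x-1/\ab)=0$ (using convexity of $F$ so that each convexity gap is nonnegative), and the Cauchy--Schwarz step $\sum_x(\p_x-1/\ab)^2\ge 4\dist^2/\ab$ are all sound, and your sparse-regime argument (the exact formula $F(q)=(\ns/\ab)(1-q)^\ns$, monotonicity of $h''$, curvature evaluated only on the under-represented coordinates whose deviations sum to $\dist$) is essentially a complete proof of the first case. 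The genuine gap is the middle case, which is where the content of the lemma lies and which you yourself flag as unfinished: the expansion $F(1/\ab+\eta)\approx\sigma\, g(\eta\sqrt{\ns\ab})$ and the curvature value $F''(1/\ab)\asymp \ns^{3/2}\ab^{1/2}$ are CLT heuristics, and the quadratic lower bound requires curvature along the whole segment between $\p_x$ and $1/\ab$, not at $1/\ab$; since $F''$ decays like a Gaussian once $|\eta|$ exceeds a few multiples of $1/\sqrt{\ns\ab}$, coordinates with $|\eta_x|\sqrt{\ns\ab}$ large (possible throughout the regime, since $\dist\sqrt{\ns\ab}$ reaches $\ab$ at $\ns=\ab/\dist^2$) must instead be handled through the linear growth of the convexity gap, and you supply no binomial tail or local-limit estimate that is uniform in $\p_x$ to carry out the near/far decomposition you describe. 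Until that is executed with explicit bounds, the second case is a plan, not a proof.

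The dense case also has a quantifier slip. The threshold $\ns\ge\ab/\dist^2$ is part of the statement, so you are not free to ``make the implicit constant sufficiently large.'' With the generic bounds $\expectation{\dtv{\hat{\p}}{\p}}\le\frac12\sqrt{\ab/\ns}$ and $\mu(\unifab)\le\frac12\sqrt{\ab/\ns}$, the triangle inequality gives only $\mu(\p)-\mu(\unifab)\ge\dist-\sqrt{\ab/\ns}$, which is $0$ (not $c\dist$) at the boundary $\ns=\ab/\dist^2$. This is repairable: either sharpen the constant for $\mu(\unifab)$ (its Gaussian limit is about $0.4\sqrt{\ab/\ns}$, making the two error terms sum to roughly $0.9\dist$ and leaving a margin of $0.1\dist$), or prove the middle-regime bound up to $C\ab/\dist^2$ and invoke it near the boundary---but the second repair leans again on the part you have not completed. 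So: sparse case fine, dense case fixable with more careful constants, middle case genuinely open in your write-up.
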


\cite{diakonikolas2017sample} used this result to show that thresholding $S(\Xon)$ at 0 is an optimal algorithm for identity testing. Their result is stronger than what we require in our work, since we only care about making the error probability at most 0.1. We first normalize the statistic to simplify the presentation of our DP algorithm. Let
\begin{equation}\label{eqn:statistic}
    Z (\Xon) \ed
    \begin{cases}
                                   k \Paren {S(\Xon) - \mu (\unifab) - \frac1{2} c \alpha^2 \cdot  \frac{m^2}{k^2} ~}, & \text{when $m \le k$} \\
                                   m \Paren {S(\Xon) - \mu (\unifab) - \frac1{2} c \alpha^2 \cdot \sqrt { \frac{m}{k} } ~}, & \text{when $k< m \le \frac{k}{\alpha^2}$} \\
                                   m \Paren {S(\Xon) - \mu (\unifab)  - \frac1{2} c \alpha}, & \text{when  $m \ge \frac{k}{\alpha^2}$  } 
    \end{cases}
\end{equation}
where $c$ is the constant in Lemma~\ref{lem:diako}, and $\mu(\unifab)$ is the expected value of  ${S(\Xon)}$ when $\Xon$ are drawn from uniform distribution.

Therefore, for $\Xon$ drawn from $\unifab$, 
\begin{equation}\label{eqn:statistic-expectation-uniform}
    \expectation{Z (\Xon)} \le
    \begin{cases}
                                   - \frac1{2} c \alpha^2 \cdot  \frac{m^2}{k}, & \text{when $m \le k$} \\
                                    - \frac1{2} c \alpha^2 \cdot  { \frac{m^{3/2}}{k^{1/2}} }, & \text{when $k< m \le \frac{k}{\alpha^2}$} \\
                                  -\frac1{2} cm \alpha, & \text{when  $m \ge \frac{k}{\alpha^2}$ . } 
    \end{cases}
\end{equation}

For $\Xon$ drawn from $\p$ with $\dtv{\p}{\unifab}\ge\alpha$, 
\begin{equation}\label{eqn:statistic-expectation-far}
    \expectation{Z (\Xon)} \ge
    \begin{cases}
                                   \frac1{2} c \alpha^2 \cdot  \frac{m^2}{k}, & \text{when $m \le k$} \\
                                    \frac1{2} c \alpha^2 \cdot  { \frac{m^{3/2}}{k^{1/2}} }, & \text{when $k< m \le \frac{k}{\alpha^2}$} \\
                                  \frac1{2} c m\alpha, & \text{when  $m \ge \frac{k}{\alpha^2}$ . } 
    \end{cases}
\end{equation}

\begin{algorithm}
    \caption{Uniformity testing}
    \label{algorithm_uniformity}
    \hspace*{\algorithmicindent} \textbf{Input:}  $\eps$, $\dist$, $\delta$ be i.i.d. samples $\Xon$ from $\p$
    \begin{algorithmic}[1] 
    \State Let $Z(\Xon)$ be evaluated from~\eqref{eqn:dggp-stat}, and~\eqref{eqn:statistic}.
    ~~~~~~~~
    \State Generate $Y\sim\Bern{\sigma\Paren{\eps\cdot Z}}$, $\sigma$ is the sigmoid function.
    \State{ {\bf if} $Y=0$, {\bf return} $\p=\unifab$}
    \State{ {\bf else}, {\bf return} $\p=\unifab$}
\end{algorithmic}
\end{algorithm}

In order to prove the privacy bounds, we need the following (weak) version of the result of~\cite{diakonikolas2017sample}, which is sufficient to prove the sample complexity bound for constant error probability.
\begin{lemma}
\label{lem:identityerror}
There is a constant $C>0$, such that when $\ns>C\sqrt{\ab}/\dist^2$, then for $\Xon\sim \p$, where either $\p=\unifab$, or $\dtv{\p}{\unifab}\ge\alpha$, 
\[
\probof{\absv{{Z(\Xon)-\expectation{Z(\Xon)}}}> \frac{2\expectation{Z(\Xon)}}{3} }<0.01.
\]
\end{lemma}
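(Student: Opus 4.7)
The plan is to prove concentration of $S(\Xon)$ about its mean via a bounded-differences argument, and then translate this into a concentration bound for $Z(\Xon)$ through the piecewise definition in~\eqref{eqn:statistic}. First, I observe that $S$ has small sensitivity: changing a single sample moves one unit of empirical mass between two bins, altering exactly the two corresponding terms $\absv{M_x/m - 1/k}$ by at most $1/m$ each, so with the outer $\tfrac{1}{2}$, $\Delta(S) \le 1/m$. By McDiarmid's bounded-differences inequality, $\absv{S(\Xon) - \EE[S(\Xon)]} \le C_0/\sqrt{m}$ with probability at least $0.99$ for an absolute constant $C_0$.

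Next I feed this bound into each of the three regimes of~\eqref{eqn:statistic}. Each regime simply rescales $S$ by $k$ (case~1) or by $m$ (cases~2 and~3), so $\absv{Z - \EE[Z]}$ equals $\absv{S - \EE[S]}$ times the same scaling factor. Combining with the signal lower bounds~\eqref{eqn:statistic-expectation-uniform}--\eqref{eqn:statistic-expectation-far}, the target $\absv{Z - \EE[Z]} \le \tfrac{2}{3}\absv{\EE[Z]}$ reduces to a polynomial inequality in $m, k, \alpha$. A direct computation yields the sufficient condition $m = \Omega(\sqrt{k}/\alpha^2)$ in the intermediate regime $k < m \le k/\alpha^2$, and $m = \Omega(1/\alpha^2)$ in the dense regime $m > k/\alpha^2$; both are implied by $m > C\sqrt{k}/\alpha^2$ when $C$ is taken large enough.

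The sparse regime $m \le k$ is more delicate. A key structural observation is that when $m \le k$, every nonempty bin $x$ satisfies $M_x/m \ge 1/k$, so the absolute values in $S$ telescope and $S(\Xon) = N_0(\Xon)/k$, where $N_0$ is the number of empty bins in $\Xon$. Since changing a single sample flips at most one bin's empty/nonempty status, $N_0$ has sensitivity $1$, sharpening $\Delta(S)$ to $1/k$; McDiarmid then gives $\absv{S - \EE[S]} \le C_0 \sqrt{m}/k$, and hence $\absv{Z - \EE[Z]} = \absv{N_0 - \EE[N_0]} \le C_0 \sqrt{m}$ with probability at least $0.99$.

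The main obstacle will be verifying that this refined bound $C_0 \sqrt{m}$ is small enough relative to the signal $\tfrac{c}{2}\alpha^2 m^2/k$ coming from Lemma~\ref{lem:diako}. To close the gap, I plan to sharpen the concentration of $N_0$ via a Poissonized variance calculation $\Var(N_0) \le \sum_x (1-p_x)^m(1-(1-p_x)^m) \le m$ together with a Bernstein-type bound exploiting the negative association of the empty-bin indicators under multinomial sampling. Combined with the observation that $m > C\sqrt{k}/\alpha^2$ and $m \le k$ forces $\alpha \ge k^{-1/4}$ in case~1, this should drive the tail probability below $0.01$ at the required threshold, and the proof is completed by taking $C$ to be the maximum of the constants needed across the three regimes.
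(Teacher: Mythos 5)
Your analysis is correct and matches the paper in the intermediate ($k < m \le k/\alpha^2$) and dense ($m > k/\alpha^2$) regimes: there $\Delta(Z)\le 1$ and plain McDiarmid, combined with the signal sizes from Lemma~\ref{lem:diako}, does reduce to $m = \Omega(\sqrt{k}/\alpha^2)$ and $m = \Omega(1/\alpha^2)$ respectively. You also correctly diagnose that the sparse case is the hard one, since the naive $|Z - \EE Z| \lesssim \sqrt{m}$ from bounded differences only closes against the signal $\Theta(\alpha^2 m^2/k)$ when $m = \Omega(k^{2/3}/\alpha^{4/3})$, which is weaker than $\sqrt{k}/\alpha^2$ precisely when $\alpha > k^{-1/4}$ — i.e., precisely the regime in which the sparse case is nonvacuous.

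The gap is in how you propose to close this. The variance bound $\Var(N_0) \le \sum_x (1-p_x)^m(1-(1-p_x)^m) \le m$ is of the wrong order: for the uniform distribution with $m \ll k$ this quantity is genuinely $\Theta(m)$ under Poissonization, but under fixed-$m$ multinomial sampling the correct variance is $\Var(N_0) \approx m^2/(2k)$, a factor of $m/k$ smaller. Invoking negative association only lets you conclude that $\Var(N_0)$ is at most the sum of the individual indicator variances, which is exactly the $\le m$ bound you already wrote; it does not recover the cancellation coming from the negative covariances. Plugging $\Var(N_0) \le m$ into any Bernstein-type bound with deviation $z = \Theta(\alpha^2 m^2/k)$ still gives exponent $\Theta(z^2/m) = \Theta(\alpha^4 m^3/k^2)$, i.e., still $m = \Omega(k^{2/3}/\alpha^{4/3})$, so the observation $\alpha \ge k^{-1/4}$ does not rescue the argument. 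The paper closes this by using the Bernstein version of McDiarmid (Lemma~\ref{lem:McDiarmid}, second form) with the \emph{per-coordinate conditional variance} bound $\Var_{X_j}[Z \mid X_{-j}] \le m/k$ from~\cite{diakonikolas2017sample}, so that $\sum_j \sigma_j^2 = m^2/k$ replaces $m$ in the exponent denominator; this yields exponent $\Theta(\alpha^4 m^2/k)$ and hence the threshold $m = \Omega(\sqrt{k}/\alpha^2)$. To repair your proof you would need either that per-coordinate conditional variance bound or a direct argument that the multinomial $\Var(N_0) \lesssim m^2/k$.
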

\noindent The proof of this result is in Section~\ref{app:iden-lemma}.

We now prove that this algorithm is $\eps$-DP. We need the following sensitivity result. 
\begin{lemma}
$\Delta(Z)\le 1$ for all values of $m$, and $k$.  
\end{lemma}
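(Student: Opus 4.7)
The plan is to bound $\Delta(S)$, the sensitivity of the statistic $S$ defined in \eqref{eqn:dggp-stat}, separately in the regimes $m>k$ and $m\le k$, and then observe that the additive constants and the multiplicative scaling ($k$ or $m$) used to define $Z$ in \eqref{eqn:statistic} convert these bounds into the uniform $\Delta(Z)\le 1$.

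For the regime $m>k$, I would use a generic triangle-inequality argument. Changing a single coordinate of $X_1^m$ decrements $M_x$ at one symbol and increments it at another, so only two summands in $\sum_x |M_x(X_1^m)/m - 1/k|$ are affected, each of them in absolute value by at most $1/m$ (since $\bigl||a|-|b|\bigr|\le |a-b|=1/m$). Combined with the $1/2$ prefactor in $S$, this gives $|\Delta S|\le 1/m$. Since $Z = m(S-c)$ for an additive constant $c$ in both subcases with $m>k$, and additive constants do not affect sensitivity, $\Delta(Z)\le m\cdot(1/m)=1$.

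The regime $m\le k$ is the more delicate case: here $Z$ is scaled by $k$ rather than $m$, so the naive $1/m$ bound would only yield $\Delta(Z)\le k/m$, which blows up when $m\ll k$. I would instead exploit the following observation: when $m\le k$, every nonempty bin has $M_x/m\ge 1/m\ge 1/k$, so the absolute values in the definition of $S$ collapse. Letting $N=N(X_1^m)$ denote the number of distinct symbols appearing in $X_1^m$, and using $\sum_x M_x=m$, a short computation yields
\[
2S(X_1^m)\;=\;\sum_{M_x>0}\Paren{\frac{M_x}{m}-\frac{1}{k}}+\sum_{M_x=0}\frac{1}{k}\;=\;\Paren{1-\frac{N}{k}}+\Paren{1-\frac{N}{k}}\;=\;2\Paren{1-\frac{N}{k}},
\]
so $S(X_1^m)=1-N(X_1^m)/k$. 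Since a single-coordinate change can alter $N$ by at most $1$ (it can destroy one nonempty bin and create another, or leave $N$ unchanged), we obtain $|\Delta S|\le 1/k$, and hence $\Delta(Z)=k\cdot\Delta(S)\le 1$.

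The only real subtlety is recognising the identity $S=1-N/k$ in the sparse regime; once it is in hand, each case collapses to a one-line sensitivity bound that matches the corresponding $k$- or $m$-scaling exactly.
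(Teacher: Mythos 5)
Your proof is correct and follows essentially the same route as the paper: the generic two-term bound $\Delta(S)\le 1/m$ for the $m>k$ regimes, and the collapse of the absolute values for $m\le k$ giving $S=1-N/k$ (the paper writes this as $S=\Phi_0(\Xon)/k$ with $\Phi_0=k-N$ the number of missing symbols), whose sensitivity is $1/k$ and hence $\Delta(Z)\le 1$ after the $k$-scaling.
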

\begin{proof}
Recall that $S(\Xon) \ed \frac12 \cdot \sum_{x=1}^{n} \absv{ \frac {\Mltsmb{\smb}{\Xon} } {m} -\frac1{k} }$. Changing any one symbol changes at most two of the $\Mltsmb{\smb}{\Xon}$'s. Therefore at most two of the terms change by at most $\frac1\ns$. Therefore, $\Delta(S(\Xon)\le\frac1\ns$, for any $\ns$. When $\ns\le \ab$, this can be strengthened with observation that $\Mltsmb{\smb}{\Xon}/\ns\ge \frac1\ab$, for all $\Mltsmb{\smb}{\Xon}\ge1$. Therefore, 
$S(\Xon) = \frac12\cdot \Paren{\sum_{\smb: \Mltsmb{\smb}{\Xon}\ge1}\Paren{\frac {\Mltsmb{\smb}{\Xon} } {m} -\frac1{k}}+ \sum_{\smb: \Mltsmb{\smb}{\Xon}=0} \frac1{k}} = \frac{\Phi_0(\Xon)}{\ab},$
where $\Phi_0(\Xon)$ is the number of symbols not appearing in $\Xon$. This changes by at most one when one symbol is changed, proving the result. 
\end{proof}

Using this lemma, $\eps\cdot Z(\Xon)$ changes by at most $\eps$ when $\Xon$ is changed at one location. Invoking Lemma~\ref{lem:sig-cont}, the probability of any output changes by a multiplicative $\exp(\eps)$, and the algorithm is $\eps$-differentially private.

We now proceed to prove the sample complexity bounds. Assume that $\ns>C\sqrt{\ab}/\alpha^2$, so Lemma~\ref{lem:identityerror} holds. Suppose $\eps$ be any real number such that $\eps|\expectation{Z(\Xon)}|>3\log {100}$. Let $\cA(\Xon)$ be the output of Algorithm~\ref{algorithm_uniformity}. Denote the output by 1 when $\cA(\Xon)$ is ``$\p\ne\unifab$'', and 0 otherwise. Consider the case when $\Xon\sim \p$, and $\dtv{\p}{\unifab}\ge\alpha$. Then, 
\begin{align*}
\probof{\cA(\Xon) = 1} \ge & \probof{\cA(\Xon) = 1 \text{ and } Z(\Xon)>\frac{\expectation{Z(\Xon)}}3}\\
& =\probof{Z(\Xon)>\frac{\expectation{Z(\Xon)}}3}\cdot \probof{\cA(\Xon) = 1\vert Z(\Xon)>\frac{\expectation{Z(\Xon)}}3}\\
& \ge 0.99\cdot \probof{B\Paren{\sigma\Paren{\eps\cdot \frac{\expectation{Z(\Xon)}}3}}=1}\\
&\ge 0.99 \cdot 0.99 \ge 0.9,
\end{align*}
where the last step uses that $\eps \expectation{Z(\Xon)}/3>\log {100}$, along with Lemma~\ref{lem:sig-cont}. The case of $\p=\unifab$ follows from the same argument. 

Therefore, the algorithm is correct with probability at least $0.9$, whenever, $\ns>C\sqrt{\ab}/\dist^2$, and $\eps|\expectation{Z(\Xon)}|>3\log {100}$. By ~\eqref{eqn:statistic-expectation-far}, note that $\eps|\expectation{Z(\Xon)}|>3\log {100}$ is satisfied when, 
\begin{align*}
c \alpha^2 \cdot  {m^2}/{k}\ge&(6\log {100})/\eps ,\ \text{ for $m \le k$}, &\\
 c \alpha^2 \cdot  { {m^{3/2}}/{k^{1/2}} }\ge&(6\log {100})/\eps,  \ \text{ for $k< m \le {k}/{\alpha^2}$}, &\\
                                 c \alpha\cdot \ns\ge& (6\log {100})/\eps,\ \text{ for  $m \ge {k}/{\alpha^2}$}.&
\end{align*}
This gives the upper bounds for all the three regimes of $\ns$.

\subsection{Sample Complexity Lower bounds for Uniformity Testing}\label{sec:lower-identity}

In this section, we will show that for any value of $\ab,\dist,\eps$, 
\[
\Sitd= \Omega \Paren{\frac{k^{1/2}}{\dist^2}+\max\left\{\frac{\ab^{1/2}}{\alpha\eps^{1/2}}, \frac{\ab^{1/3}}{\alpha^{4/3}\eps^{2/3}}, \frac1{\dist\eps}\right\}},
\]
which can be rewritten as:
\[
\Sitd =
\begin{cases}
\Omega\Paren{\frac{\sqrt k}{\alpha^2} + \frac{\ab^{1/2}}{\alpha\eps^{1/2}}}, & \text{when $m \le k$} \\
\Omega\Paren{\frac{\sqrt k}{\alpha^2} +\frac{\ab^{1/3}}{\alpha^{4/3}\eps^{2/3}}}, & \text{when $k< m \le \frac{k}{\alpha^2}$} \\
\Omega\Paren{\frac{\sqrt k}{\alpha^2} + \frac1{\alpha \eps}} & \text{when  $m \ge \frac{k}{\alpha^2}$.  } 
\end{cases}
\]
The first term is the lower bound without privacy constraints, proved in~\cite{Paninski08}. In this section, we will prove the terms associated with privacy. 

The simplest argument is for $m \ge \frac{k}{\alpha^2}$. From Theorem~\ref{thm:bina-main}, $\frac1{\dist\eps}$ is a lower bound for binary identity testing, which is a special case of identity testing for distributions over $[k]$ (when $k-2$ symbols have probability zero). This proves the lower bound for this case. 

We now consider the cases $m \le k$ and $k< m \le \frac{k}{\alpha^2}$.


To this end, we invoke LeCam's two point theorem, and design a hypothesis testing problem that will imply a lower bound on uniformity testing. The testing problem will be to distinguish between the following two cases. 

{\bf Case 1:} We are given $\ns$ independent samples from the uniform distribution $\unifab$.

{\bf Case 2:} Generate a distribution $\p$ with $\dtv{\p}{\unifab}\ge\dist$ according to some prior over all such distributions. We are then given $\ns$ independent samples from this distribution $\p$. 

Le Cam's two point theorem~\cite{Yu97} states that any lower bound for distinguishing between these two cases is a lower bound on identity testing problem. 

We now describe the prior construction for {\bf Case 2}, which is the same as considered by~\cite{Paninski08} for lower bounds on identity testing without privacy considerations. For each $\textbf{z} \in\{\pm1\}^{\ab/2}$, define a distribution $\p_{\textbf{z}}$ over $[\ab]$ such that 
\begin{align}
\p_{\textbf{z}}(2i-1) = \frac{1+\textbf{z}_i\cdot 2\dist}{\ab}, \text{ and } \p_{\textbf{z}}(2i) = \frac{1-\textbf{z}_i\cdot 2\dist}{\ab}.\nonumber
\end{align}
Then for any $\textbf{z}$, $\dtv{P_{\textbf{z}}}{\unifab}= \alpha$. For {\bf Case 2}, choose  $\p$ uniformly from these $2^{\ab}/2$ distributions. Let $Q_2$ denote the distribution on $[\ab]^\ns$ by this process. In other words, $Q_2$ is a mixture of product distributions over $[\ab]$. 

In {\bf Case 1}, let $Q_1$ be the distribution of $\ns$ $i.i.d.$ samples from $\unifab$. 

To obtain a sample complexity lower bound for distinguishing the two cases, we will design a coupling between $Q_1$, and $Q_2$, and bound its expected Hamming distance. While it can be shown that the Hamming distance of the coupling between the uniform distribution with any \emph{one} of the $2^{\ab/2}$ distributions grows as $\alpha\ns$, it can be significantly smaller, when we consider the mixtures. In particular, the following lemma shows that there exist couplings with bounded Hamming distance. 

\begin{lemma}\label{lem:coupling-hamming}
	There is a coupling between $\Xon$ generated by $Q_1$, and $\Yon$ by $Q_2$ such that 
\begin{equation}\label{eqn:statistic-expectation-uniform}
   \expectation{\ham{\Xon}{\Yon}} \le
    \begin{cases}
                                   8\frac{\ns^2\dist^2}{\ab}, & \text{when $m \le k$} \\
                                    C\cdot \dist^2\frac{\ns^{3/2}}{{\ab}^{1/2}}, & \text{when $k< m \le \frac{k}{\alpha^2}$}.
    \end{cases}
\end{equation}
\end{lemma}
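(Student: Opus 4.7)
My plan is to couple $(\Xon, \Yon)$ in two independent stages. First, partition $[\ab]$ into the $\ab/2$ pairs $\{2i-1, 2i\}$ and write $\pi(x) = \lceil x/2\rceil \in [\ab/2]$. Under $\unifab$ each $\pi(X_j)$ is uniform on $[\ab/2]$, and the same holds under every $\p_{\textbf{z}}$ because $\p_{\textbf{z}}(\{2i-1, 2i\}) = 2/\ab$. I couple $\pi(\Xon)$ to equal $\pi(\Yon)$ pointwise; this is free in Hamming distance and produces shared pair counts $(n_1, \ldots, n_{\ab/2}) \sim \text{Multinomial}(\ns, (2/\ab, \ldots, 2/\ab))$. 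Given these counts, the within-pair binary sequences are independent across pairs under both $Q_1$ and $Q_2$ (the latter because the $z_i$ are independent), so I can couple each pair separately.

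Within pair $i$ the $X$-side is uniformly distributed on $\{2i-1, 2i\}^{n_i}$ while the $Y$-side has the symmetric mixture distribution $\tfrac12\Bern{\tfrac12+\dist}^{\otimes n_i} + \tfrac12\Bern{\tfrac12-\dist}^{\otimes n_i}$. Both are exchangeable, so the coupling minimizing the per-pair expected Hamming distance first couples the head-counts $a_X \sim \bino{n_i}{1/2}$ and $a_Y \sim \eta_{n_i} := \tfrac12\bino{n_i}{1/2+\dist} + \tfrac12\bino{n_i}{1/2-\dist}$ to minimize $\expectation{|a_X-a_Y|}$, then aligns the two sequences once the counts are fixed. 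Thus
\[
\expectation{\ham{\Xon}{\Yon}} \;=\; \sum_{i=1}^{\ab/2} \expectation{W_1(\bino{n_i}{1/2},\,\eta_{n_i})},
\]
where $W_1$ is Wasserstein-1 distance.

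\textbf{Key lemma (counts Wasserstein).} I claim (i) $W_1(\bino{j}{1/2},\eta_j)=0$ for $j\in\{0,1\}$ (for $j=1$, averaging the $\pm\dist$ biases gives back $\Bern{1/2}$); (ii) $W_1(\bino{j}{1/2},\eta_j) \le C_1 j^2\dist^2$ for all $j\ge 2$; and (iii) the sharper bound $W_1(\bino{j}{1/2},\eta_j) \le C_2 j^{3/2}\dist^2$ whenever $j\dist^2 = O(1)$. For (ii), I use the closed-form identity
\[
\chi^2(\eta_j,\bino{j}{1/2}) \;=\; \tfrac12\bigl[(1+4\dist^2)^j + (1-4\dist^2)^j\bigr] - 1 \;=\; O(j^2\dist^4),
\]
which follows from $\expectation{(1\pm 2\dist)^{2t}(1\mp 2\dist)^{2(j-t)}}=(1+4\dist^2)^j$ for $t\sim\bino{j}{1/2}$, combined with $W_1 \le j\cdot \dtv{\bino{j}{1/2}}{\eta_j} \le j\sqrt{\chi^2}/2$. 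For (iii), use $W_1 = \sum_{t=0}^{j-1}|F_{\bino{j}{1/2}}(t)-F_{\eta_j}(t)|$ and a second-order Taylor expansion in $p$ around $p=1/2$: the odd-order terms cancel by the $\pm\dist$ symmetry, giving $F_{\eta_j}(t)-F_{\bino{j}{1/2}}(t) = \tfrac{\dist^2}{4}[F''_{p_+}(t)+F''_{p_-}(t)]$ for some $p_+\in(\tfrac12,\tfrac12+\dist)$, $p_-\in(\tfrac12-\dist,\tfrac12)$, where $F''_{1/2}(t) = -j\binom{j-1}{t}2^{-(j-2)}(2t-j+1)$. Summing uses the binomial mean-absolute-deviation identity $\sum_{t=0}^{j-1}\binom{j-1}{t}|2t-(j-1)| = 2^{j-1}\expectation{|2B-(j-1)|} \le 2^{j-1}\sqrt{j-1}$ for $B\sim\bino{j-1}{1/2}$, yielding the $j^{3/2}$ factor.

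\textbf{Assembly and obstacle.} For $\ns\le\ab$, apply (i)+(ii) with $j^2\mathbf{1}\{j\ge 2\}\le 2j(j-1)$:
\[
\expectation{\ham{\Xon}{\Yon}} \;\le\; 2C_1\dist^2\sum_i \expectation{n_i(n_i-1)} \;=\; 2C_1\dist^2\cdot\tfrac{\ab}{2}\cdot\ns(\ns-1)(2/\ab)^2 \;\le\; 8\ns^2\dist^2/\ab,
\]
after absorbing $C_1$ into the constant. For $\ab<\ns\le\ab/\dist^2$, apply (iii): each $n_i$ has mean $2\ns/\ab\ge 2$, and Cauchy--Schwarz yields $\expectation{n_i^{3/2}} \le \sqrt{\expectation{n_i}\expectation{n_i^2}} = O((\ns/\ab)^{3/2})$, so summing over $\ab/2$ pairs gives $\expectation{\ham{\Xon}{\Yon}} = O(\dist^2\ns^{3/2}/\ab^{1/2})$. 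The main obstacle is establishing bound (iii): the $j^{3/2}$ factor (rather than the weaker $j^2$ obtained via TV) drives the dense-regime lower bound and requires both the cancellation of odd-order $p$-derivatives and the sub-linear $\sqrt{j}$ growth of the binomial mean absolute deviation; the chief technical nuisance is controlling the Taylor remainder uniformly in $j$, which amounts to bounding $|F''_p(t)-F''_{1/2}(t)|$ in the perturbation window and absorbing this into the combinatorial sum.
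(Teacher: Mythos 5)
Your decomposition matches the paper's: couple the pair indices $\pi(X_j)=\pi(Y_j)$ exactly (free in Hamming distance), reduce to independent within-pair couplings conditioned on the multinomial pair counts, and close by summing moments of $\bino{\ns}{2/\ab}$. The genuinely different part is the within-pair lemma. The paper bounds $\dtv{P_1}{P_2}\le 2t\dist^2$ via the Weierstrass product inequality (Lemma~\ref{lem:p1-p2}) and uses the maximal TV coupling, paying $t$ upon failure, to get $\expectation{\ham{X_1^t}{Y_1^t}}\le 4(t^2-t)\dist^2$ (Lemma~\ref{lem:coupling-bin}); for the dense regime it proves stochastic dominance of $\max\{N,t-N\}$, builds a monotone coupling, and Taylor-expands $\dist\mapsto\expectation{\max\{N_2,t-N_2\}}$ about $\dist=0$ using $f'(0)=0$ (Lemma~\ref{lem:coup_p}). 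You instead couple head counts in Wasserstein-$1$ and then align sequences, making the per-pair cost a single quantity $W_1\bigl(\bino{j}{1/2},\,\tfrac12\bino{j}{\tfrac12+\dist}+\tfrac12\bino{j}{\tfrac12-\dist}\bigr)$, which you bound by a closed-form $\chi^2$ identity in the sparse case and by Taylor-expanding the binomial CDF in $p$ about $1/2$ together with the binomial mean-absolute-deviation bound in the dense case. Both proofs exploit the same first-order cancellation---yours at the CDF level, the paper's via $f'(0)=0$---so they are morally the same expansion carried out in different coordinates; your route is arguably cleaner in that it avoids the stochastic-dominance detour and expresses the per-pair cost as one Wasserstein quantity. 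Two details need tightening. First, your claim~(ii) follows from the $\chi^2$ identity only when $j\dist^2=O(1)$; for larger $j$ combine with the trivial $W_1\le j$, which suffices since $j\le 4j^2\dist^2$ once $j\dist^2\ge1/4$. Second, the Taylor-remainder worry you flag in~(iii) disappears if you use the integral form of the remainder: the weight on $F''_p(t)$ is at most $\dist$, so $W_1\le\tfrac{\dist}{2}\int_{1/2-\dist}^{1/2+\dist}\sum_t|F''_p(t)|\,dp$, and since $\sum_t|F''_p(t)|=\frac{j}{p(1-p)}\,\EE\bigl|B-(j-1)p\bigr|\le j\sqrt{(j-1)/(p(1-p))}$ for $B\sim\bino{j-1}{p}$ is $O(j^{3/2})$ uniformly over $p\in[1/4,3/4]$, one gets $W_1=O(\dist^2 j^{3/2})$ for every $j$ with no smallness assumption on $j\dist^2$.
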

The lemma is proved in Appendix~\ref{app:lb-unif}. Now applying Theorem~\ref{thm:coupling},
\begin{enumerate}
	\item For $m \le k$, $8\frac{\ns^2\dist^2}{\ab} = \Omega\Paren{\frac1{\eps}}$, implying $m = \Omega \Paren{\frac{\ab^{1/2}}{\alpha\eps^{1/2}}}$.
	\item For $k< m \le \frac{k}{\alpha^2}$, $C\cdot \dist^2\frac{\ns^{3/2}}{{\ab}^{1/2}} = \Omega \Paren{\frac1{\eps}}$, implying $m = \Omega \Paren{\frac{\ab^{1/3}}{\alpha^{4/3}\eps^{2/3}}}$.
\end{enumerate}


%
%
%



\section{Closeness Testing} \label{sec:close:}

Recall the closeness testing problem from Section~\ref{sec:preliminaries}, and the tight non-private bounds from Table~\ref{fig:badass:table}. Our main result in this section is the following theorem characterizing the sample complexity of differentially private algorithms for closeness testing. 

\begin{theorem}
	\label{thm:close-main}
		If $\alpha>1/\ab^{1/4}$, and $\eps\dist^2>1/\ab$,
		\[
		\Sct= \Theta \Paren{ \frac{\ab^{2/3}} {\alpha^{4/3}} + \frac{\ab^{1/2}}{  \alpha \sqrt{\eps} }},
		\]
		otherwise, 
		\[
		\Sct= O \Paren{ \frac{\ab^{1/2}}{ \alpha^{2}} + \frac{1}{ \alpha^2 \eps}}.
		\]
		and 
		\[
		\Sct= \Omega \Paren{ \frac{\ab^{1/2}}{ \alpha^{2}} + \frac{\ab^{1/2} }{ \alpha\sqrt{\eps}}+ \frac{1}{ \alpha \eps}}.
		\]
\end{theorem}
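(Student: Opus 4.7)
The proof breaks into upper bounds via privatizing a low-sensitivity chi-square style statistic, and lower bounds via reductions to identity testing and binary testing.

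For the upper bound, the plan is to mimic the structure of Algorithm~\ref{algorithm_uniformity}, replacing the uniformity statistic by the chi-square statistic of~\cite{chan2014optimal},
\[
Z(\Xon,\Yon) \;=\; \sum_{x=1}^{\ab}\frac{(\Mltsmb{x}{\Xon}-\Mltsmb{x}{\Yon})^2 - \Mltsmb{x}{\Xon} - \Mltsmb{x}{\Yon}}{\Mltsmb{x}{\Xon}+\Mltsmb{x}{\Yon}},
\]
with the convention that bins with zero denominator contribute zero. The first step is to verify that a one-sample perturbation of either $\Xon$ or $\Yon$ changes $Z$ by $O(1)$: a single change alters the counts in at most two bins, and within each bin the increment can be controlled using the identity $(a-b)^2/(a+b) = (a+b) - 4ab/(a+b)$ together with the fact that the denominator is an integer at least $1$ whenever the term contributes. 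The second step is to normalize $Z$ by an additive constant chosen so that its expectation is symmetric around zero in the null and alternative cases, with the gap magnitude given by the analysis of~\cite{chan2014optimal}. Feeding the normalized statistic $\tilde Z$ through the sigmoid mechanism (output $\p\ne\q$ with probability $\sigma(\eps\tilde Z)$) gives an $\eps$-DP algorithm by Lemma~\ref{lem:sig-cont}.

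Correctness then requires both concentration of $\tilde Z$ and $\eps|\EE[\tilde Z]|$ exceeding a constant. The concentration condition recovers the non-private term $\ab^{2/3}/\dist^{4/3}+\ab^{1/2}/\dist^2$, and the expectation condition supplies the additional private term: in the sparse regime of the first case the expected gap of~\cite{chan2014optimal} scales so as to yield $\ab^{1/2}/(\dist\sqrt{\eps})$, while in the dense regime the gap grows linearly in $\ns$, so one only needs $\ns = \Omega(1/(\dist^2\eps))$ to make the sigmoid argument succeed. For the lower bounds, I would leverage the fact that closeness testing is at least as hard as identity testing under privacy: any $(\eps,0)$-DP closeness tester on $\ns$ samples gives an $(\eps,0)$-DP identity tester on $\ns$ samples, because given a known $\q$ the tester can internally draw $\Yon\sim\q^{\,\ns}$, a step that is independent of the sensitive input $\Xon$ and hence neutral for privacy. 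Plugging in Theorem~\ref{thm:main-identity} immediately yields $\Sct = \Omega(\ab^{1/2}/\dist^2 + \ab^{1/2}/(\dist\sqrt{\eps}))$, with the second term coming from Lemma~\ref{lem:coupling-hamming} in the sparse regime. The remaining $\Omega(1/(\dist\eps))$ comes from the same reduction applied to Theorem~\ref{thm:bina-main} after restricting $\p$ and $\q$ to a two-symbol support.

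The chief obstacle is the sensitivity analysis of the Chan et al.\ statistic, since its summands are ratios whose denominators can be small integers. I would handle this by direct case analysis on a single-bin perturbation using the decomposition above, and, if a clean $O(1)$ bound proves too delicate, fall back on a truncation of each per-bin contribution before summing, at the cost of re-verifying that the gap of~\cite{chan2014optimal} survives the truncation up to constant factors. Matching the threshold $\ab^{1/2}/(\dist\sqrt{\eps})$ exactly, rather than something polynomially larger, is the most sensitive part of the argument and is what dictates the precise normalization of $\tilde Z$ fed into the sigmoid.
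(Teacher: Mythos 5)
Your overall plan mirrors the paper's proof closely: both privatize the Chan et al.\ chi-square statistic via a sigmoid mechanism after subtracting a shift, both bound the per-sample sensitivity by a constant, and both obtain lower bounds via the reduction $\Sitd \le \Sct$ (draw $\Yon$ internally from the known $\q$) together with Theorem~\ref{thm:main-identity} and Theorem~\ref{thm:bina-main}. Your identity $(a-b)^2/(a+b) = (a+b) - 4ab/(a+b)$ is a slightly different route to the $O(1)$ sensitivity bound than the paper's direct expansion (which bounds each bin's change by $7$, hence total $14$); either works, and your worry about needing truncation is unwarranted.

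However, there is a genuine gap in your lower bound for the sparse regime. You only claim $\Sct = \Omega\Paren{\frac{\ab^{1/2}}{\alpha^2} + \frac{\ab^{1/2}}{\alpha\sqrt\eps} + \frac{1}{\alpha\eps}}$, all of which flow through the identity-testing reduction. But when $\alpha > 1/\ab^{1/4}$ the theorem claims the tight bound $\Theta\Paren{\frac{\ab^{2/3}}{\alpha^{4/3}} + \frac{\ab^{1/2}}{\alpha\sqrt\eps}}$, and the term $\frac{\ab^{2/3}}{\alpha^{4/3}}$ strictly dominates $\frac{\ab^{1/2}}{\alpha^2}$ in that regime. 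Identity testing only costs $\Omega(\ab^{1/2}/\alpha^2)$, so the reduction cannot produce $\Omega(\ab^{2/3}/\alpha^{4/3})$. You must additionally invoke the \emph{non-private} closeness testing lower bound $\Sctnp = \Omega\Paren{\frac{\ab^{2/3}}{\alpha^{4/3}} + \frac{\ab^{1/2}}{\alpha^2}}$ of~\cite{chan2014optimal}, which is what the paper does: in the sparse regime $\ab^{2/3}/\alpha^{4/3}$ is the dominant non-private term, and that, combined with the $\ab^{1/2}/(\alpha\sqrt\eps)$ term from the identity reduction, closes the gap.
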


This theorem shows that in the sparse regime, when $\ns=O(\ab)$, our bounds are tight up to constant factors in all parameters. 


\subsection{Closeness Testing -- Upper Bounds} \label{sec:close-upper}

In this section, we only consider the case when $\delta = 0$, which would suffice by lemma~\ref{lm:epsdelta}.

To prove the upper bounds, we privatize the closeness testing algorithm of~\cite{chan2014optimal}. To reduce the strain on the readers, we drop the sequence notations explicitly and let
\[
\mltx i \ed \Mltsmb{i}{\Xon}, \text{ and } \mlty i\ed \Mltsmb{i}{\Yon}. 
\]

Variants of the chi-squared test have been used to test closeness of distributions in the recent years~\cite{AcharyaDJOPS12, chan2014optimal}. In particular,  the statistic used by~\cite{chan2014optimal} is
\[
 Z(\Xon,\Yon) \ed \sum_{i\in[\ab]} \frac{(\mltx i-\mlty i)^2-\mltx i-\mlty i}{\mltx i+\mlty i},
\]
where we assume that $((\mltx i-\mlty i)^2-\mltx i-\mlty i)/({\mltx i+\mlty i})=0$, when $\mltx i +\mlty i = 0$.

The results in~\cite{chan2014optimal} were proved under Poisson sampling, and we also use Poisson sampling, with only a constant factor effect on the number of samples for the same error probability. They showed the following bounds:
\begin{align}
\expectation{ Z(\Xon,\Yon) } &= 0 \text{ when } \p=\q,\label{eqn:expectation-equal}\\
\Var\Paren{Z(\Xon,\Yon)} &\le 2\min\{\ab, \ns\} \text{ when } \p=\q,\label{eqn:variance-equal}\\
\expectation{ Z(\Xon,\Yon) } &\ge \frac{ \ns ^2\alpha^2}{4\ab+2\ns}  \text{ when } \dtv{\p}{\q}\ge\dist,\label{eqn:expectation-unequal}\\
\Var\Paren{Z(\Xon,\Yon)} &\le \frac1{1000}\expectation{Z(\Xon,\Yon)}^2 \text{ when } \p\ne\q, \text{ and } \ns=\Omega\Paren{\frac1{\alpha^2}}. \label{eqn:variance-equal}
\end{align}

 We use the same approach with the test statistic as with binary testing and uniformity testing to obtain a differentially private closeness testing method, described in Algorithm~\ref{algorithm_closeness}.
  \begin{algorithm}
  	\caption{}
  	\label{algorithm_closeness}
  	\hspace*{\algorithmicindent} \textbf{Input:}
  	$\eps$, $\alpha$, sample access to distribution $p$ and $q$
  	\begin{algorithmic}[1] 
  		\State $Z' \gets Z(\Xon,\Yon) - \frac12\frac{\ns^2 \alpha^2}{4\ab+2\ns}$
  		\State Generate $Y\sim\Bern{\sigma\Paren{\exp(\eps\cdot Z'}}$
    \State{ {\bf if} $Y=0$, {\bf return} $\p=\q$}
    \State{ {\bf else}, {\bf return} $\p\ne\q$}
  	\end{algorithmic}
  \end{algorithm}

We will show that Algorithm~\ref{algorithm_closeness} satisfies sample complexity upper bounds described in theorem~\ref{thm:close-main}.

\noindent We first bound the sensitivity (Definition~\ref{def:sensitivity}) of the test statistic to prove privacy bounds. 
\begin{lemma} \label{sens_close}
$\Delta(Z(\Xon, \Yon))\le14$. 
\end{lemma}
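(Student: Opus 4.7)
The plan is to exhibit the two summands of $Z(\Xon, \Yon)$ that get affected when a single sample is modified, then bound each such change by a small constant. Let
\[
g(a,b) := \frac{(a-b)^2}{a+b} - 1 \quad (\text{for } a+b \ge 1), \qquad g(0,0) := 0,
\]
so that $Z(\Xon,\Yon) = \sum_{i\in[\ab]} g(\mltx i, \mlty i)$. Changing one entry of $\Xon$, say from value $i$ to value $j \neq i$, decrements $\mltx i$ by $1$ and increments $\mltx j$ by $1$ while leaving all $\mlty \ell$'s and all other $\mltx \ell$'s fixed, so only the $i$-th and $j$-th summands of $Z$ change. Thus it will suffice to bound the one-step discrete differences of $g$ in either argument by $7$ (in fact I will achieve $3$), since the swap of a single $\Xon$ entry yields
\[
\Delta(Z) \;\le\; \sup_{a,b\ge 0}|g(a-1,b)-g(a,b)| \;+\; \sup_{a,b\ge 0}|g(a+1,b)-g(a,b)|,
\]
and the same holds with the roles of $a$ and $b$ swapped when the modified sample lies in $\Yon$.

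The central calculation is to bound $|g(a+1,b)-g(a,b)|$ for nonnegative integers $a,b$ with $a+b\ge 1$. Writing $s := a+b$ and $d := a-b$, straightforward algebra gives
\[
g(a+1,b)-g(a,b) \;=\; \frac{(d+1)^2}{s+1}-\frac{d^2}{s} \;=\; \frac{2sd+s-d^2}{s(s+1)}.
\]
The key elementary inequality is $|d|\le s$ (immediate from $a,b\ge 0$), which yields $|2sd+s-d^2|\le 2s^2+s+s^2 = 3s^2+s$, and hence
\[
|g(a+1,b)-g(a,b)| \;\le\; \frac{3s+1}{s+1} \;\le\; 3.
\]
The bound for $|g(a-1,b)-g(a,b)|$ follows identically, and by the $a\leftrightarrow b$ symmetry of $g$ the same bound governs the $b$-differences.

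The only subtlety is the boundary transition involving $(a,b)=(0,0)$, where the algebraic formula for $g$ does not directly apply and the convention $g(0,0)=0$ must be invoked by hand. A direct check gives $g(1,0)=g(0,1)=1-1=0$, so those transitions contribute a change of $0$, comfortably within the $\le 3$ bound. Combining the two affected coordinates, the total change in $Z$ under any single-sample modification is at most $3+3=6\le 14$, proving the claim. The main (and only) obstacle is the $(0,0)$ corner; beyond that, the proof reduces to the elementary inequality $|d|\le s$ applied to a rational function of $s$ and $d$.
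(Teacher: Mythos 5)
Your proof is correct and follows essentially the same strategy as the paper's: change one sample, isolate the two affected summands, parameterize the per-coordinate chi-square term by the sum $s=a+b$ and difference $d=a-b$, and invoke the elementary inequality $|d|\le s$ to bound the one-step discrete difference. Your algebra is cleaner, which lets you close the bound at $\frac{3s+1}{s+1}\le 3$ per affected coordinate (giving $\Delta(Z)\le 6$) instead of the paper's per-coordinate bound of $7$ (giving $14$), so you have in fact proved a slightly sharper version of the lemma; the handling of the $(0,0)$ boundary via the convention $g(0,0)=0$ together with $g(1,0)=g(0,1)=0$ is also correct.
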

\begin{proof}
Since $Z(\Xon, \Yon)$ is symmetric, without loss of generality assume that one of the symbols is changed in $\Yon$. This would cause at most two of the $\mlty i$'s to change. Suppose $\mlty i\ge1$, and it changed to $\mlty i-1$. Suppose, $\mltx i+\mlty i > 1$, the absolute change in the $i$th term of the statistic is
\begin{align}
\absv{\frac{(\mltx i-\mlty i)^2}{\mltx i+\mlty i}  - \frac{(\mltx i-\mlty i+1)^2}{\mltx i+\mlty i-1} }
=&    \absv{\frac{(\mltx i+\mlty i) (2\mltx i-2\mlty i+1) +(\mltx i-\mlty i)^2} {(\mltx i+\mlty i) (\mltx i+\mlty i-1)  } } \nonumber \\
\leq&  \absv{\frac{2\mltx i-2\mlty i+1}{\mltx i+\mlty i-1} }+ \absv {\frac{\mltx i-\mlty i}{\mltx i+\mlty i-1} }\nonumber \\
\leq&   \frac{3\absv{\mltx i-\mlty i} +1}{\mltx i+\mlty i-1} \nonumber \\
\leq&    3+ \frac{4}{\mltx i+\mlty i-1}\leq  7.\nonumber
\end{align}
When $\mltx i+\mlty i = 1$, the change can again be bounded by 7.  Since at most two of the $\nu_i$'s change, we obtain the desired bound.
\end{proof}

\noindent Since the sensitivity of the statistic is at most 14, the input to the sigmoid changes by at most $14\eps$ when any input sample is changed.  Invoking Lemma~\ref{lem:sig-cont}, the probability of any output changes by a multiplicative $\exp(14\eps)$, and the algorithm is $14\eps$-differentially private. 

We now prove the correctness of the algorithm:

\textbf{Case 1: $\dist^2>\frac1{\sqrt \ab}$, and $\dist^2\eps>\frac1{\ab}$.} 
In this case, we will show that $\Sct  = O \Paren{ \frac{\ab^{2/3}}{\alpha^{4/3}} + \frac{\ab^{1/2}}{\alpha\sqrt\eps}}$. In this case, $\frac{\ab^{2/3}}{\alpha^{4/3}} + \frac{\ab^{1/2}}{\alpha\sqrt\eps}  \le 2\ab$.

 We consider the case when $\p = \q$, then $\Var\Paren{Z(\Xon,\Yon)} \le 2\min\{\ab, \ns\}$. Let $\Var\Paren{Z(\Xon,\Yon)} \le c \ns$ for some constant $c$. By the Chebychev's inequality,

\begin{align}
\probof{Z' > -\frac1{6} \cdot \frac{\ns^2 \alpha^2}{4\ab+2\ns}} \le& 
\probof{Z(\Xon, \Yon)  -\expectation{Z (\Xon, \Yon)} > \frac13 \cdot \frac{\ns^2 \alpha^2}{4\ab+2\ns}}  \nonumber\\
\le&\probof{Z(\Xon, \Yon)  -\expectation{Z (\Xon, \Yon)} > \frac13 \cdot \frac{\ns^2 \alpha^2}{8\ab}}  \nonumber\\
\le&\probof{Z(\Xon, \Yon)  -\expectation{Z (\Xon, \Yon)} >  \Paren{c\ns}^{1/2} \cdot \frac{\ns^{3/2} \alpha^2}{24c^{1/2} \ab}}  \nonumber\\
\le & 576c\cdot \frac{\ab^2}{\ns^3 \dist^4},\nonumber
\end{align}
where we used that $4k+2m\le 8\ab$. 

Therefore, there is a $C_1$ such that if $\ns \ge C_1 \ab^{2/3} / \alpha^{4/3}$, 
then under $\p=\q$, $\probof{Z' > -\frac1{6} \cdot \frac{\ns^2 \alpha^2}{4\ab+2\ns}}$ is at most 1/100. Now furthermore, if $\eps\cdot\ns^2\dist^2 / 48\ab>\log(20)$, then for all $Z' <  -\frac1{6} \cdot \frac{\ns^2 \alpha^2}{4\ab+2\ns}$, with probability at least 0.95, the algorithm outputs the $\p=\q$. Combining the conditions, we obtain that there is a constant $C_2$ such that for $\ns = C_2 \Paren{ \frac{\ab^{2/3}}{\alpha^{4/3}} + \frac{\ab^{1/2}}{\alpha\sqrt\eps}} $, with probability at least 0.9, the algorithm outputs the correct answer when the input distributions satisfy $\p=\q$. The case of  $\dtv{\p}{\q} > \dist$ distribution is similar and is omitted.

\textbf{Case 2: $\dist^2<\frac1{\sqrt \ab}$, or $\dist^2\eps<\frac1{\ab}$.} In this case, we will prove a bound of $O \Paren{ \frac{\sqrt\ab}{\alpha^{2}} + \frac{1}{\alpha^2\eps}}$ on the sample complexity. We still consider the case when $\p = \q$. We first note that when $\dist^2<\frac1{\sqrt \ab}$, or $\dist^2\eps<\frac1{\ab}$, then either $\frac{\sqrt\ab}{\alpha^{2}}+\frac{1}{\alpha^2\eps}>\ab$.  Hence we can assume that the sample complexity bound we aim for is at least $\Omega(\ab)$. So $\Var\Paren{Z(\Xon,\Yon)} \le c \ab$ for constant $c$. By the Chebychev's inequality,

\begin{align}
\probof{Z' > -\frac1{6} \cdot \frac{\ns^2 \alpha^2}{4\ab+2\ns}} \le& 
\probof{Z(\Xon, \Yon)  -\expectation{Z (\Xon, \Yon)} > \frac13 \cdot \frac{\ns^2 \alpha^2}{4\ab+2\ns}}  \nonumber\\
\le&\probof{Z(\Xon, \Yon)  -\expectation{Z (\Xon, \Yon)} > \frac13 \cdot \frac{\ns \alpha^2}{6}}  \nonumber\\
\le&\probof{Z(\Xon, \Yon)  -\expectation{Z (\Xon, \Yon)} >  \Paren{c \ab}^{1/2} \cdot \frac{\ns \alpha^2}{18c^{1/2} \ab^{1/2} }}  \nonumber\\
\le & 144\cdot c\cdot \frac{\ab}{\ns^2 \dist^4}.\nonumber
\end{align}

Therefore, there is a $C_1$ such that if $\ns \ge C_1 \ab^{1/2} / \alpha^{2}$, 
then under $\p=\q$, $\probof{Z' > -\frac1{6} \cdot \frac{\ns^2 \alpha^2}{4\ab+2\ns}}$ is at most 1/100. In this situation, if $\eps\cdot\ns \dist^2 / 36>\log(20)$, then for all $Z' <  -\frac1{6} \cdot \frac{\ns^2 \alpha^2}{4\ab+2\ns}$, with probability at least 0.95, the algorithm outputs the $\p=\q$. Combining with the previous conditions, we obtain that there also exists a constant $C_2$ such that for $\ns = C_2 \Paren{ \frac{\sqrt\ab}{\alpha^{2}} + \frac{1}{\alpha^2\eps}}$, with probability at least 0.9, the algorithm outputs the correct answer when the input distribution is $\p=\q$. The case of  $\dtv{\p}{\q} > \dist$ distribution is similar and is omitted.

\subsection{Closeness Testing -- Lower Bounds}  \label{sec:close:lower}

To show the lower bound part of Theorem~\ref{thm:close-main}, we need the following simple result.

\begin{lemma}  \label{lem:close-to-ident}
	 $\Sitd \leq \Sct$. 
\end{lemma}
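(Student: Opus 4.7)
The plan is the standard reduction: identity testing against a known $\q$ is at most as hard as closeness testing, because we can simulate samples from $\q$ ourselves. What requires a small additional observation here is that this reduction preserves differential privacy.

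Concretely, suppose $\cA$ is an $(\ab,\dist,\eps)$-closeness testing algorithm using $\ns = \Sct$ samples from each distribution. Given an instance of identity testing with known $\q$ and samples $\Xon \sim \p$, I would construct an identity tester $\cA'$ as follows: draw $\ns$ i.i.d. samples $\Yon$ from the known distribution $\q$ internally (these are \emph{not} part of the private input), and output $\cA(\Xon,\Yon)$. Correctness is immediate from the correctness of $\cA$: if $\p=\q$ then $\cA$ returns ``$\p=\q$'' with probability $\ge 0.9$, and if $\dtv{\p}{\q}\ge\dist$ then $\cA$ returns ``$\p\ne\q$'' with probability $\ge 0.9$.

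For privacy, the key point is that the neighboring-dataset relation for the identity tester is defined on $\Xon$ only, since $\Yon$ is internally generated randomness, not user data. For any two neighboring inputs $\Xon$ and $\Xon'$ to $\cA'$ (differing in one coordinate), conditioning on the internal randomness $\Yon=\yon$ and on $\cA$'s own coin tosses, the inputs $(\Xon,\yon)$ and $(\Xon',\yon)$ to $\cA$ are neighbors in $\cX^{2\ns}$ (Hamming distance $\le 1$). Hence for any output set $S$,
\[
\Prb[\cA'(\Xon)\in S \mid \Yon=\yon] \le e^{\eps}\Prb[\cA'(\Xon')\in S \mid \Yon=\yon].
\]
Integrating over the distribution of $\Yon$ (which is the same for both $\Xon$ and $\Xon'$) preserves the inequality, so $\cA'$ is $\eps$-DP. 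Therefore $\cA'$ is an $(\ab,\dist,\eps)$-identity testing algorithm with $\ns$ samples, giving $\Sitd \le \Sct$. There is no real obstacle; the only subtle point is confirming that the $\q$-samples are not part of the sensitive input and hence do not inflate the sensitivity of the composed algorithm.
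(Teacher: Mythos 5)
Your proof is correct and uses the same reduction as the paper: simulate $\q$-samples internally and feed the pair to the closeness tester. The paper states this in two lines and leaves privacy preservation implicit; you additionally spell out the (correct) argument that conditioning on the internally generated $\Yon$ reduces to the DP guarantee of $\cA$ on neighboring inputs, and that averaging over $\Yon$ preserves the inequality — a worthwhile elaboration, but not a different route.
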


\begin{proof}
Suppose we want to test identity with respect to $\q$. Given $\Xon$ from $\p$, generate $\Yon$ independent samples from $\q$. If $\p=\q$, then the two samples are generated by the same distribution, and otherwise they are generated by distributions that are at least $\eps$ far in total variation. Therefore, we can simply return the output of an $(\ab,\dist,\eps)$-closeness testing algorithm on $\Xon$, and $\Yon$. 
\end{proof}

By Lemma~\ref{lem:close-to-ident} we know that a lower bound for identity testing is also a lower bound on closeness testing.

We first consider the sparse case, when $\dist^2>\frac1{\sqrt \ab}$, and $\dist^2\eps>\frac1{\ab}$. In this case, we show that 
\[
\Sctd= \Omega\Paren{\frac{\ab^{2/3}}{\alpha^{4/3}}+\frac{\sqrt \ab}{\alpha\sqrt{\eps}}}.
\]
When $\alpha>\frac1{k^{1/4}}$, $\frac{\ab^{2/3}}{\alpha^{4/3}}$ is the dominating term in the sample complexity $\Sctnp = \Theta\Paren{\frac{\ab^{2/3}}{\alpha^{4/3}}+\frac{\sqrt \ab}{\alpha^{2}}}$, giving us the first term. By Lemma~\ref{lem:close-to-ident} we know that a lower bound for identity testing is also a lower bound on closeness testing giving the second term, and the lower bound of Theorem~\ref{thm:main-identity} contains the second term as a summand.

In the dense case, when $\dist^2<\frac1{\sqrt \ab}$, or $\dist^2\eps<\frac1{\ab}$, we show that 
\[
\Sctd = \Omega\Paren{\frac{\sqrt \ab}{\alpha^{2}}+\frac{\sqrt\ab}{\alpha\sqrt{\eps}}+\frac1{\dist\eps}}.
\]

In the dense case, using the non-private lower bounds of $\Omega\Paren{\frac{\ab^{2/3}}{\alpha^{4/3}}+\frac{\sqrt \ab}{\alpha^{2}}}$ along with the identity testing bound of sample complexity lower bounds of note that $\frac{\sqrt\ab}{\alpha\sqrt{\eps}}+\frac1{\dist\eps}$ gives a lower bound of $\Omega\Paren{\frac{\ab^{2/3}}{\alpha^{4/3}}+\frac{\sqrt \ab}{\alpha^{2}}+\frac{\sqrt\ab}{\alpha\sqrt{\eps}}+\frac1{\dist\eps} }$. However, in the dense case, it is easy to see that $\frac{\ab^{2/3}}{\alpha^{4/3}}=O\Paren{\frac{\sqrt \ab}{\alpha^{2}}+\frac{\sqrt\ab}{\alpha\sqrt{\eps}}}$ giving us the bound.

\bibliographystyle{alpha}

\bibliography{abr,masterref}

\appendix

\section{Upper Bound for Binary Testing}
\label{sec:upper-bin}

Our $\eps$-DP algorithm simply estimates the empirical bias, and decides if it is close to $\bias_0$. Let $\Mltsmb{1}{\Xon}$ be the number of one's in $\Xon$. Then, 
\begin{align}
\expectation{\Mltsmb{1}{\Xon}}=\ns\bias, \text{ and } \Var\Paren{\Mltsmb{1}{\Xon}}= \ns\bias(1-\bias).\label{eqn:bias-variance}
\end{align}
We compute the deviation of $\Mltsmb{1}{\Xon}$ from its expectation, and use it as our statistic:
$$ Z(\Xon) = \Mltsmb{1}{\Xon} - \ns\bias_0.$$

\begin{algorithm} \label{alg:bernoulli-identity1}
\textbf{Input:} $\eps$, $\dist$, $\bias_0$ i.i.d. samples $\Xon$ from $\Bern{\bias}$\\
\quad Generate $Y\sim \Bern{\sigma\Paren{\eps\cdot (\absv{Z(\Xon)} -\frac{\dist\ns}2)}}$  \\
  		\textbf{if} {$Y=0$}\\
  		 \textbf {return} $\p=\Bern{\bias_0}$\\
\textbf{else} \\
  		\textbf{return} ${\p}\ne\Bern{\bias_0}$
\caption{Binary Testing}
\end{algorithm}

\begin{lemma}
\label{thm:bin-upper}
Algorithm~\ref{alg:bernoulli-identity1} is an $\eps$-differentially private algorithm for testing if a binary distribution is $\Bern{\bias_0}$. Moreover, it has error probability at most 0.1, with sample complexity $O\Paren{\frac1{\alpha^2}+\frac1{\alpha\eps}}$.
\end{lemma}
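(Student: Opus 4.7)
The plan is to verify privacy first, then analyze correctness in the two regimes of $\bias$ via Chebyshev's inequality combined with the sigmoid concentration bound in Lemma~\ref{lem:sig-cont}.

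For privacy, I would first observe that $\Mltsmb{1}{\Xon}$ has sensitivity $1$ as a function of $\Xon$, so $Z(\Xon) = \Mltsmb{1}{\Xon}-\ns\bias_0$ also has sensitivity $1$, and by the reverse triangle inequality so does $|Z(\Xon)|-\dist\ns/2$. Thus the argument of the sigmoid changes by at most $\eps$ when one sample is flipped, and the first part of Lemma~\ref{lem:sig-cont} gives that $\sigma\Paren{\eps\cdot(|Z(\Xon)|-\dist\ns/2)}$ changes multiplicatively by a factor in $[e^{-\eps}, e^{\eps}]$. Since $Y\sim\Bern{\cdot}$, both of its possible outputs therefore change by at most a multiplicative $e^{\eps}$, which is exactly $\eps$-DP.

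For correctness I split on the hypothesis. Under $\p=\Bern{\bias_0}$, \eqref{eqn:bias-variance} gives $\expectation{Z(\Xon)}=0$ and $\Var(Z(\Xon))\le \ns/4$. Chebyshev's inequality then implies $\Prb\Brack{|Z(\Xon)|>\dist\ns/4}\le 4/(\dist^2\ns)$, which is at most $1/100$ once $\ns=\Omega(1/\dist^2)$. Conditioned on $|Z(\Xon)|\le\dist\ns/4$, the sigmoid argument is at most $-\eps\dist\ns/4$, so the second part of Lemma~\ref{lem:sig-cont} (applied to $-x$) shows $\sigma(\cdot)\le 1/100$ as soon as $\eps\dist\ns/4\ge\log 100$, i.e.\ $\ns=\Omega(1/(\dist\eps))$. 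A union bound then gives $\Prb[Y=1]\le 2/100$, so the algorithm returns ``$\p=\Bern{\bias_0}$'' with probability $\ge 0.9$. Under $|\bias-\bias_0|\ge\dist$ (WLOG $\bias\ge\bias_0+\dist$), we have $\expectation{Z(\Xon)}=\ns(\bias-\bias_0)\ge \dist\ns$ and the same variance bound, so Chebyshev gives $|Z(\Xon)|\ge 3\dist\ns/4$ with probability $\ge 99/100$; the sigmoid argument is then $\ge \eps\dist\ns/4$, and Lemma~\ref{lem:sig-cont} again yields $Y=1$ with probability $\ge 99/100$, so the algorithm correctly returns ``$\p\ne\Bern{\bias_0}$''.

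Combining the two threshold conditions $\ns=\Omega(1/\dist^2)$ (for concentration) and $\ns=\Omega(1/(\dist\eps))$ (for the sigmoid to saturate) yields the claimed sample complexity $O\Paren{1/\dist^2+1/(\dist\eps)}$. No step is particularly delicate: the only mild subtlety is using the \emph{reverse} triangle inequality to keep the sensitivity of $|Z(\Xon)|-\dist\ns/2$ at $1$, since the additive shift $\dist\ns/2$ is data-independent. Everything else is a direct application of Chebyshev and Lemma~\ref{lem:sig-cont}.
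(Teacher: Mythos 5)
Your proposal is correct and follows essentially the same route as the paper's proof: sensitivity~$1$ of the statistic plus Lemma~\ref{lem:sig-cont} for privacy, and Chebyshev concentration of $Z(\Xon)$ combined with saturation of the sigmoid once $\eps\dist\ns$ exceeds a constant times $\log(1/\eta)$ for correctness. The only difference is cosmetic — you place the Chebyshev threshold at $\dist\ns/4$ while the paper uses a fixed multiple of the standard deviation ($\beta\sqrt{\ns}/2$ with $\beta=10$) — and both yield the same $O\Paren{\frac1{\dist^2}+\frac1{\dist\eps}}$ bound.
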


\begin{proof}
We first prove the correctness. Consider the case when $\bias = \bias_0$. By Chebychev's inequality, 
\[
\probof{\absv{Z(\Xon)}\ge \beta\cdot\frac{\sqrt \ns}2}\le \probof{Z(\Xon)^2\ge \beta^2\ns\bias(1-\bias)}\le\frac1{\beta^2}.
\] 
For $\beta=10$, we have $\probof{\absv{Z(\Xon)}\ge \beta \cdot{\sqrt \ns}/2}<1/100$. 
Suppose $\ns$ satisfies $$\eps\frac{\dist \ns-\beta\sqrt \ns}2>\log \frac1{0.05},$$
then  with probability at least 99/100, $\eps\Paren{\frac{\dist\ns}2-\absv{Z(\Xon)}}>\log{20}$. Under this condition, the algorithm outputs $\p\ne \Bern{\bias_0}$ with probability at most 1/20. Therefore, the total probability of error is at most $.01+.05 < 0.1$. 
Note that there is a constant $C$, such that for  $\ns \ge C\Paren{\frac1{\alpha^2}+\frac1{\alpha\eps}}$, $\eps\Paren{\frac{\dist\ns}2-\absv{Z(\Xon)}} \ge \log(20)$. 
The case when $\absv{\bias-\bias_0}>\dist$  follows from similar arguments. 

We now prove the privacy guarantee. When one of the samples is changed, $Z(\Xon)$ changes by at most one, and $\eps\cdot (\absv{Z(\Xon)} -\frac{\dist\ns}2)$ changes by at most $\eps$. Invoking Lemma~\ref{lem:sig-cont}, the probability of any output changes by at most a multiplicative $\exp(\eps)$, and the algorithm is $\eps$-differentially private.
\end{proof}

\section{Proof of Theorem~\ref{thm:unif-identity}}
\label{app:iden-unif}

\begin{proof} We first briefly describe the essential components of the construction of~\cite{goldreich2016uniform}. Given an explicit distribution $\q$ over $[\ab]$, there exists a randomized function $F_{\q}:[\ab]\to[6\ab]$  such that if $X\sim\q$, then $F_{\q}(X)\sim \unifabs{6\ab}$, and if $X\sim\p$ for a distribution with $\dtv{\p}{\q}\ge\dist$, then the distribution of $F_{\q}(X)$ has a total variation distance of at least $\dist/3$ from $\unifabs{6\ab}$. 
Given $s$ samples $X_1^s$ from a distribution $\p$ over $[\ab]$. Apply $F_\q$ independently to each of the $X_i$ to obtain a new sequence $Y_1^s = F_\q(X_1^s)\ed F_q(X_1)\ldots F_{\q}(X_s)$. Let $\cA$ be an algorithm that distinguishes $\unifabs{6\ab}$ from all distributions with total variation distance at least $\dist/3$ from it. Then consider the algorithm $\cA^{'}$ that outputs $\p=\q$ if $\cA$ outputs ``$\p = \unifabs{6k}$'', and outputs $\p\ne \q$ otherwise. This shows that without privacy constraints, $\Sitnp \le \Sutnp{6\ab}{\dist/3}$ (See~\cite{goldreich2016uniform} for details). 

We now prove that if further $\cA$ was an $\eps$-DP algorithm, then $\cA^{'}$ is also an $\eps$-DP algorithm. Suppose $X_1^s$, and $X_1^{'s}$ be two sequences in $[\ab]^s$ that could differ only on the last coordinate, namely $\Xos = \Xosmo X_s$, and $\Xosp = \Xosmo X_s^{'}$. 

Consider two sequences $\Yos= \Yosmo Y_s$, and $\Yosp= \Yosmo Y_s^{'}$ in $[6\ab]^s$ that could differ on only the last coordinate. Since $\cA$ is $\eps$-DP, 
\begin{align}
\cA(\Yos=\unifabs{6\ab}) \le \cA(\Yosp=\unifabs{6\ab})\cdot e^{\eps}.\label{eqn:dp-unif}
\end{align}
Moreover, since $F_{\q}$ is applied independently to each coordinate, 
\[
\probof{F_{\q}(\Xos) = \Yos} = \probof{F_{\q}(\Xosmo) = \Yosmo}\probof{F_{\q}(X_s) = Y_s}.
\]
Then, 
\begin{align}
&\ \ \ \ \probof{\cA^{'}(\Xos) = \q} \nonumber\\
& = \probof{\cA(F_{\q}(X_1^s)) = \unifabs{6\ab}}\nonumber\\
& = \sum_{\Yos}\ \probof{\mathcal{A}(Y^s_1) = \unifabs{6\ab}}\probof{F_{\q}(\Xos) = \Yos)}\nonumber\\
& = \sum_{\Yosmo} \sum_{Y_s\in[\ab]}\ \probof{\mathcal{A}(Y^s_1) = \unifabs{6\ab}} \probof{F_{\q}(\Xosmo) = \Yosmo}\probof{F_{\q}(X_s) = Y_s}\nonumber\\
& = \sum_{\Yosmo}\probof{F_{\q}(\Xosmo) = \Yosmo}\Brack{ \sum_{Y_s\in[\ab]}\ \probof{\mathcal{A}(Y^s_1) = \unifabs{6\ab}} \probof{F_{\q}(X_s) = Y_s}}.\label{eqn:first-term-prob}
\end{align}
Similarly, 
\begin{align}
\probof{\cA^{'}(\Xosp) = \q}
 \!\!=\!\! \sum_{\Yosmo}\probof{F_{\q}(\Xosmo)\!\! = \!\!\Yosmo}\Brack{ \sum_{Y_s^{'}\in[\ab]} \probof{\mathcal{A}(\Yosp) = \unifabs{6\ab}} \probof{F_{\q}(X_s^{'}) = Y_s^{'}}}.\label{eqn:second-term-prob}
\end{align}
For a fixed $\Yosmo$, the term within the bracket in~\eqref{eqn:first-term-prob}, and~\eqref{eqn:second-term-prob} are both expectations over the final coordinate. However, by~\eqref{eqn:dp-unif} these expectations differ at most by a multiplicative $e^{\eps}$ factor. This implies that
\begin{align}
	\probof{\cA'(\Xos) = \q}\le \probof{\cA'(\Xosp) = \q}e^{\eps}.\nonumber
\end{align}
The argument is similar for the case when the testing output is \textbf{not $\unifabs{6\ab}$}, and is omitted here. We only considered sequences that differ on the last coordinate, and the proof remains the same when any of the coordinates is changed. This proves the privacy guarantees of the algorithm.
\end{proof}

\section{Identity Testing -- Upper Bounds}
\label{sec:upper-indentity}

In this section, we will show that we can also use the statistic of~\cite{Paninski08} to achieve the optimal sample complexity in the sparse case. 
By Theorem~\ref{thm:unif-identity}, any upper bound on uniformity testing is a bound on identity testing. To obtain differentially private algorithms, we need test statistic with small sensitivity. In the sparse regime, when $\ns=O(\ab)$,~\cite{Paninski08} gave such a statistic under uniformity testing.


With these arguments, we propose Algorithm~\ref{algorithm_uniformity_sparse} for testing uniformity and show it achieves the upper bound of Theorem~\ref{thm:main-identity} in sparse case.

Consider the sparse case when $\alpha>1/\ab^{1/4}$, and $\dist^2\eps>1/\ab$. We will prove an upper bound of $O\Paren{\frac{\sqrt{\absz}}{\alpha^2}+\frac{\sqrt{\absz}}{\alpha\sqrt{\eps}}}$. 

Recall that $\Mltsmb{\smb}{\Xon}$ is the number of appearances of $\smb$ in $\Xon$, and let
\[
\prfsmb{j}{\Xon} = \{\smb: \Mltsmb{\smb}{\Xon} = j\}
\] 
be the number of symbols appearing $j$ times in $\Xon$.~\cite{Paninski08} used $\prfsmb{1}{\Xon}$ as the statistic. For $\Xon\sim\unifab$,
\begin{align}
\expectation{\prfsmb{1}{\Xon}} = \ab\cdot {\ns\choose1} \frac1{\ab}\Paren{1-\frac1\ab}^{\ns-1} =  \ns\cdot\Paren{1-\frac1\ab}^{\ns-1}. \label{eqn:exp-prf-unif}
\end{align}
For $\Xon$ generated from a distribution $\p$ with $\dtv{\p}{\unif}\ge\alpha$,~\cite[Lemma 1]{Paninski08} showed that:
\begin{align}
\expectation{\prfsmb{1}{\Xon}} \le \ns\cdot\Paren{1-\frac1\ab}^{\ns-1} - \frac{\ns^2\alpha^2}{\ab}.\label{eqn:exp-prf-non-unif}
\end{align}
They also showed that 
\begin{align}
\Var\Paren{\prfsmb{1}{\Xon}}=O\Paren{\ns^2/\ab},
\label{eqn:variance-prf}
\end{align}
and used Chebychev's inequality to obtain the sample complexity upper bound of $O\Paren{\sqrt{\ab}/\eps^2}$ without privacy constraints.
We modify their algorithm slightly to obtain a differentially private algorithm. Let 
\begin{align}
Z(\Xon) =  \ns\cdot\Paren{1-\frac1\ab}^{\ns-1} - \prfsmb{1}{\Xon} - \frac{\ns^2\alpha^2}{2\ab}\label{eqn:stat-iden-sparse}
\end{align}
Suppose $\Xon\sim\p$, then
\[
\expectation{Z(\Xon)} = - \frac{\ns^2\alpha^2}{2\ab}, \text{ if $\p=\unifab$},
\]
and $\Xon\sim\p$, 
\[
\expectation{Z(\Xon)} \ge \frac{\ns^2\alpha^2}{2\ab}, \text{if $\dtv{\p}{\unifab}\ge\dist$}. 
\]

\begin{algorithm}
    \caption{Uniformity testing in the sparse sample regime}
    \label{algorithm_uniformity_sparse}
    \hspace*{\algorithmicindent} \textbf{Input:}  $\eps$, $\dist$, i.i.d. samples $\Xon$ from $\p$
    \begin{algorithmic}[1] 
    \State Let $Z$ be the value of the statistic in~\eqref{eqn:stat-iden-sparse}.
    ~~~~~~~~
    \State Generate $Y\sim\Bern{\sigma\Paren{\eps\cdot Z}}$, $\sigma$ is the sigmoid function.
    \If{$Y=0$}
    \State {\Return $\p=\unifab$}
    \Else 
    \State {\Return $\p\ne \unifab$}
    \EndIf
\end{algorithmic}
\end{algorithm}

We first prove the privacy bound. If we change  one symbol in $\Xon$, $\prfsmb{1}{\Xon}$ can change by at most 2, and therefore $\eps\cdot Z$ changes by at most $2\eps$.  Invoking Lemma~\ref{lem:sig-cont}, the probability of any output changes by a multiplicative $\exp(2\eps)$, and the algorithm is $2\eps$-differentially private.

The error probability proof is along the lines of binary testing. We first consider when $\p=\unifab$.  Using~\eqref{eqn:variance-prf} let $\Var\Paren{\prfsmb{1}{\Xon}}\le c\ns^2/\ab$ for a constant $c$. By the Chebychev's inequality
 \begin{align}
\probof{Z(\Xon)>-\frac{\ns^2\dist^2}{6\ab}} \le& 
\probof{\expectation{\prfsmb{1}{\Xon}}-\prfsmb{1}{\Xon}>\frac{\ns^2\dist^2}{3\ab}} \nonumber\\
\le& \probof{\expectation{\prfsmb{1}{\Xon}}-\prfsmb{1}{\Xon}>\frac{c\ns}{\sqrt \ab}\cdot\frac{\ns\dist^2}{3c\sqrt\ab}}\nonumber\\
\le & 9c^2\cdot\frac{\ab}{\ns^2\dist^4}.
\end{align}

Therefore, there is a $C_1$ such that if $\ns\ge C_1\sqrt{\ab}/\alpha^2$, then under the uniform distribution $\probof{Z(\Xon)>-\frac{\ns^2\dist^2}{6\ab}}$ is at most 1/100. Now furthermore, if $\eps\cdot\ns^2\dist^2/6k>\log(25)$, then for all $\Xon$ with $Z(\Xon)<-\frac{\ns^2\dist^2}{6\ab}$ with probability at least 0.95, the algorithm outputs the uniform distribution. Combining the conditions, we obtain that there is a constant $C_2$ such that for $\ns = C_2\Paren{\frac{\sqrt{\absz}}{\alpha^2}+\frac{\sqrt{\absz}}{\alpha\sqrt{\eps}}}$, with probability at least 0.9, the algorithm outputs uniform distribution when the input distribution is indeed uniform. The case of non-uniform distribution is similar since the variance bounds hold for both the cases, and is omitted.

\section{Proof of  Lemma~\ref{lem:identityerror}}
\label{app:iden-lemma}

In order to prove the lemma, we need the following lemma, which is proved in \cite{diakonikolas2017sample}.

\begin{lemma}{(Bernstein version of McDiarmid's inequality)}
\label{lem:McDiarmid}
	Let $\Yon$ be independent random variables taking values in the set $\cY$. Let $f: \cY^m \rightarrow \RR$ be a function of $\Yon$ so that for every $j \in [m]$, and $y_1,...y_m,{y_j  ^{\prime}} \in \cY$, we have that:
$$\absv{f(y_1,...y_j,...y_m) - f(y_1,...,{y_j ^{\prime}},...y_m)} \le B, $$

Then we have

$$\probof{f-\expectation{f} \ge z} \le \exp\Paren{\frac{-2z^2}{mB^2}}.$$

In addition, if for each $j \in [m]$ and $y_1,...y_{j-1},y_{j+1},...y_m$ we have that 

$$ \mathrm{Var}_{Y_j} [f(y_1,...y_j,...y_m)] \le \sigma_j^2,$$

then we have 
$$\probof{f-\expectation{f} \ge z} \le \exp\Paren{\frac{-z^2}{\sum_{j=1}^{m} \sigma_j^2 +2Bz/3 }}.$$
\end{lemma}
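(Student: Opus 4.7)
The plan is to prove both inequalities via the standard Doob martingale device. Define the filtration $\mathcal{F}_j = \sigma(Y_1,\ldots,Y_j)$, with $\mathcal{F}_0$ trivial, and set $M_j = \expectation{f \mid \mathcal{F}_j}$, so that $M_0 = \expectation{f}$, $M_m = f$, and $f - \expectation{f} = \sum_{j=1}^m D_j$ where $D_j = M_j - M_{j-1}$ is a martingale difference sequence. Both conclusions will follow from a Chernoff-style argument on $\sum_j D_j$ after bounding the conditional MGFs $\expectation{e^{\lambda D_j}\mid \mathcal{F}_{j-1}}$.

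For the first bound, I would show that the bounded-differences hypothesis gives $|D_j| \le B$ almost surely (more precisely, conditional on $\mathcal{F}_{j-1}$, the random variable $D_j$ lies in an interval of length at most $B$, since swapping $Y_j$ changes the integrand by at most $B$). Then Hoeffding's lemma applied conditionally yields $\expectation{e^{\lambda D_j}\mid \mathcal{F}_{j-1}} \le \exp(\lambda^2 B^2 / 8)$. Iterating and optimizing $\lambda = 4z/(mB^2)$ produces $\probof{f-\expectation{f}\ge z} \le \exp(-2z^2/(mB^2))$, which is exactly the classical McDiarmid conclusion.

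For the Bernstein version, the key step is to propagate the coordinate-wise variance hypothesis $\mathrm{Var}_{Y_j}[f(y_1,\ldots,y_j,\ldots,y_m)] \le \sigma_j^2$ to a variance bound on the martingale difference, namely $\expectation{D_j^2 \mid \mathcal{F}_{j-1}} \le \sigma_j^2$. This uses the fact that $D_j$ is the $Y_j$-centred version of $g_j(Y_1,\ldots,Y_j) := \expectation{f\mid \mathcal{F}_j}$ and the Jensen-type inequality
\[
\mathrm{Var}_{Y_j}\Paren{\expectation{f \mid Y_1,\ldots,Y_j}} \le \expectation{\mathrm{Var}_{Y_j}(f)\mid Y_1,\ldots,Y_{j-1}},
\]
which, combined with the hypothesis, gives $\expectation{D_j^2\mid\mathcal{F}_{j-1}}\le \sigma_j^2$. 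Combined with $|D_j|\le B$, the standard Bernstein moment-generating inequality for bounded random variables with controlled variance yields, for $0 \le \lambda < 3/B$,
\[
\expectation{e^{\lambda D_j}\mid\mathcal{F}_{j-1}} \le \exp\Paren{\frac{\lambda^2 \sigma_j^2 / 2}{1 - B\lambda/3}}.
\]
Tower-multiplying across $j$, applying Markov, and optimizing $\lambda = z/\Paren{\sum_j \sigma_j^2 + Bz/3}$ gives the claimed tail bound $\exp\Paren{-z^2/(\sum_j \sigma_j^2 + 2Bz/3)}$.

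The main obstacle I expect is the variance propagation step: turning the coordinate-wise variance hypothesis on $f$ into a conditional-variance bound on the martingale difference $D_j$. The Jensen-type inequality above is what makes this go through, and one must verify it carefully via the law of total variance (the conditional expectation over $Y_{j+1},\ldots,Y_m$ only decreases the variance taken over $Y_j$). Everything else is routine Chernoff/Bernstein bookkeeping, and the derivation of the Bernstein MGF bound from $|D_j|\le B$ and $\expectation{D_j^2\mid\mathcal{F}_{j-1}}\le \sigma_j^2$ is the standard Taylor-expansion argument $e^{\lambda d} \le 1 + \lambda d + (\lambda d)^2/(2(1-B\lambda/3))\cdot \mathds{1}\{|d|\le B\}$ followed by taking conditional expectations.
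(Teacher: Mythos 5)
Your route is necessarily different from the paper's, because the paper does not actually prove this lemma: it simply cites~\cite{diakonikolas2017sample}, where the inequality is in turn quoted from the literature on Bernstein-form McDiarmid inequalities. Your self-contained argument is the standard and correct way to prove such a statement: the Doob martingale $M_j=\expectation{f\mid Y_1,\ldots,Y_j}$, the conditional range bound of length $B$ plus Hoeffding's lemma for the first claim, and for the second claim the variance-propagation step $\mathrm{Var}_{Y_j}\Paren{\expectation{f\mid Y_1,\ldots,Y_j}}\le \expectation{\mathrm{Var}_{Y_j}(f)\mid Y_1,\ldots,Y_{j-1}}$ (which is correct; conditionally on $Y_j$ it is just Jensen applied to the average over $Y_{j+1},\ldots,Y_m$, using independence) followed by the conditional Bernstein MGF bound and a Chernoff argument.

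The one genuine gap is at the final optimization: your own bookkeeping does not yield the exponent stated in the lemma. With $V=\sum_j\sigma_j^2$ and $\lambda=z/(V+Bz/3)$ you get $1-B\lambda/3=V/(V+Bz/3)$, hence the exponent $-\lambda z+\frac{\lambda^2V/2}{1-B\lambda/3}=-\frac{z^2}{2V+2Bz/3}$, i.e.
\[
\probof{f-\expectation{f}\ge z}\le \exp\Paren{\frac{-z^2}{2\sum_{j=1}^m\sigma_j^2+2Bz/3}},
\]
which has an extra factor $2$ on the variance term compared with the displayed statement. This is not something a sharper choice of $\lambda$ can fix: the statement as written is actually too strong in general. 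Take $f=\sum_j Y_j$ with i.i.d.\ Rademacher $Y_j$, so $B=2$, $\sigma_j^2=1$; for $z=t\sqrt m$ the claimed bound tends to $e^{-t^2}$ as $m\to\infty$, while by the CLT the true probability tends to $1-\Phi(t)\ge c\,e^{-t^2/2}/t$, which exceeds $e^{-t^2}$ for moderately large $t$. So the constant in the paper's transcription is presumably a typo for the standard form with $2\sum_j\sigma_j^2$ in the denominator, which is exactly what your argument proves, and which is all that is needed where the lemma is used (constant error probabilities, where constants in the exponent only affect the constants $C_1,C_2,C_3$). You should either prove the standard-constant version and note that it suffices, or flag the discrepancy rather than assert that the optimization gives the stated bound.
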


The statistic we use $Z(X_1^m)$ has sensitivity at most 1, hence we can use $B =1$ in Lemma~\ref{lem:McDiarmid}. 

We first consider the case when $k< m \le \frac{k}{\alpha^2}$. When $p=\unifab$, we get $\expectation{Z(\Xon)} = - \frac1{2} c m \alpha^2 \cdot \sqrt { \frac{m}{k} }$, then by the first part of Lemma~\ref{lem:McDiarmid},
 \begin{align}
\probof{Z(\Xon)>\frac{\expectation{Z(\Xon)}}3} =& \probof{Z(\Xon)> -   \frac1{6} c m \dist^2 \cdot \sqrt{\frac{m}{k}}}  \nonumber\\
\le& \probof{Z(\Xon) - \expectation{Z(\Xon) } > \frac{2}{3}c m \dist^2 \cdot \sqrt{\frac{m}{k}}} \nonumber\\
\le & \exp\Paren{-\frac{8c^2m^2 \dist^4}{9k}}.
\end{align}

Therefore, there is a $C_1$ such that if $\ns\ge C_1\sqrt{\ab}/\alpha^2$, then under the uniform distribution $\probof{Z(\Xon)>\frac{\expectation{Z(\Xon)}}3}$  is at most 1/100. The non-uniform distribution part is similar and we omit the case. 

Then we consider the case when $\frac{k}{\alpha^2} < m$. When $p=\unifab$, we get $\expectation{Z(\Xon)} = - \frac1{2} c m \alpha$, then also by the first part of Lemma~\ref{lem:McDiarmid},
 \begin{align}
\probof{Z(\Xon)>\frac{\expectation{Z(\Xon)}}3} =& \probof{Z(\Xon)> -   \frac1{6} c m \dist} \nonumber\\   
\le& \probof{Z(\Xon) - \expectation{Z(\Xon) } > \frac{2}{3}c m \dist } \nonumber\\
\le & \exp\Paren{ -\frac{8c^2m \dist^2}{9} }.\nonumber
\end{align}

Using the same argument we can show that there is a constant $C_2$ such that for $\ns\ge C_2/\alpha^2$, then under the uniform distribution $\probof{Z(\Xon)>\frac{\expectation{Z(\Xon)}}3}$  is at most 1/100. The case of non-uniform distribution is omitted because of the same reason.

At last we consider the case when $m \le k$. In this case we need another result proved in~\cite{diakonikolas2017sample}:
\[
	\mathrm{Var}_{X_j} [Z(x_1, x_2, ...,X_j, .. ,x_m) ] \le \frac mk, \forall j, x_1,x_2,...,x_{j-1}, x_{j+1}, ... x_n
\]

 When $p=\unifab$, we get $\expectation{Z(\Xon)} = - \frac1{2} c k \alpha^2 \cdot \frac{m^2}{k^2}$, then by the second part of Lemma~\ref{lem:McDiarmid},
 \begin{align}
\probof{Z(\Xon)>\frac{\expectation{Z(\Xon)}}3} =& \probof{Z(\Xon)> -   \frac1{6} c \dist^2 \cdot \frac{m^2}{k}}   \nonumber\\
\le& \probof{Z(\Xon) - \expectation{Z(\Xon) } > \frac{2}{3}c  \dist^2 \cdot \frac{m^2}{k}} \nonumber\\
\le & \exp\Paren{ \frac{-\frac{4}{9}c^2 \dist^4 \frac{m^4}{k^2} }{ \frac{m^2}{k} + \frac{4}{9}c \dist^2 \frac{m^2}{k}} } \nonumber\\
\le & \exp\Paren{-\frac{2}{9}c \alpha^4 \frac{m^2}{k}}.\nonumber
\end{align}

Therefore, there is a $C_3$ such that if $\ns\ge C_3\sqrt{\ab}/\alpha^2$, then under the uniform distribution $\probof{Z(\Xon)>\frac{\expectation{Z(\Xon)}}3} $ is at most 1/100. The case of non-uniform distribution is similar and is omitted. 

Therefore, if we take $C = \max\{C_1, C_2, C_3\}$, we prove the result in the lemma.

\section{Proof of Lemma~\ref{lem:coupling-hamming}}
\label{app:lb-unif}

\subsection{$\ns\le \ab$}
Before proving the lemma, we consider an example that will provide insights and tools to analyze the distributions $Q_1$, and $Q_2$.
Let $t\in\NN$. Let $P_2$ be the following distribution over $\{0,1\}^t$:
\begin{itemize}
\item
Select $b\in\{\half-\dist, \half+\dist\}$ with equal probability. 
\item
Output $t$ independent samples from $\Bern{b}$. 
\end{itemize}
Let $P_1$ be the distribution over $\{0,1\}^t$ that outputs $t$ independent samples from $\Bern{0.5}$.

When $t=1$, $P_1$ and $P_2$ both become $\Bern{0.5}$. For t=2, $P_1(00) = P_1(11) = \frac14+\dist^2$, and $P_1(10)= P_1(01)= \frac14-\dist^2$, and $\dtv{P_1}{P_2}$ is $2\dist^2$. A slightly general result is the following:
\begin{lemma}
\label{lem:p1-p2}
For $t=1$, $\dtv{P_1}{P_2}=0$ and for $t \geq 2$, $\dtv{P_1}{P_2} \le 2t\alpha^2$. 
\end{lemma}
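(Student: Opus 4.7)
My proof plan is as follows. First, the case $t=1$ is immediate: the mixture $\tfrac12\Bern{1/2-\alpha}+\tfrac12\Bern{1/2+\alpha}$ outputs $1$ with probability $\tfrac12(1/2-\alpha)+\tfrac12(1/2+\alpha)=1/2$, so $P_2=\Bern{1/2}=P_1$ and $\dtv{P_1}{P_2}=0$.

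For $t\ge 2$, I would route through the chi-squared divergence, using the standard Cauchy--Schwarz bound $\dtv{P_1}{P_2}\le \tfrac12\sqrt{\chi^2(P_2,P_1)}$ (which is cleaner than estimating the $\ell_1$ distance directly because $P_2$ is a mixture). Since $P_1$ is uniform on $\{0,1\}^t$,
\[
\chi^2(P_2,P_1)=2^t\sum_x P_2(x)^2-1.
\]
Introducing two independent copies $b,b'$ uniform on $\{1/2-\alpha,1/2+\alpha\}$ and using the product form of each Bernoulli vector,
\[
\sum_x P_2(x)^2=\expectation{\left(bb'+(1-b)(1-b')\right)^t}.
\]
A short case check gives $bb'+(1-b)(1-b')=1/2+2\alpha^2$ when $b=b'$ and $1/2-2\alpha^2$ otherwise, so after multiplying by $2^t$ and cancelling odd binomial terms,
\[
\chi^2(P_2,P_1)=\tfrac12\left[(1+4\alpha^2)^t+(1-4\alpha^2)^t\right]-1=\sum_{j\ge 1}\binom{t}{2j}(4\alpha^2)^{2j}.
\]
This identity is the workhorse of the proof and also recovers the $t=1$ case (the sum is empty).

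To convert this into the claimed $2t\alpha^2$ bound, I would bound $\binom{t}{2j}\le t^{2j}/(2j)!$ to get $\chi^2(P_2,P_1)\le \cosh(4t\alpha^2)-1$. The lemma is vacuous whenever $2t\alpha^2\ge 1$ because $d_{TV}\le 1$ always, so I may assume $4t\alpha^2\le 2$; there, the elementary inequality $\cosh(x)-1\le x^2$ for $|x|\le 2$ (verified term-by-term from the power series, since $x^2/12+x^4/360+\dots < 1$ at $|x|=2$) yields $\chi^2(P_2,P_1)\le 16t^2\alpha^4$, and hence $\dtv{P_1}{P_2}\le 2t\alpha^2$.

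The main obstacle I anticipate is simply the bookkeeping in the chi-squared computation and the higher-order terms of the cosh series; the two-point structure of the prior on $b$ makes the expectation a finite case analysis, but one has to be careful about constants to land exactly on $2t\alpha^2$ rather than something slightly larger. Everything else (Cauchy--Schwarz, binomial cancellation, the trivial-regime dichotomy) is routine.
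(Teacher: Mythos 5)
Your proof is correct, but it goes by a genuinely different route than the paper. The paper argues pointwise: writing $P_2(X_1^t)$ explicitly via the type $(t_0,t_1)$, it observes that the mixture factor $\tfrac12\bigl[(1-2\alpha)^{t_0}(1+2\alpha)^{t_1}+(1+2\alpha)^{t_0}(1-2\alpha)^{t_1}\bigr]$ is minimized at the balanced type, so $P_2(x)\ge P_1(x)(1-4\alpha^2)^{t/2}$ for every $x$, and then bounds $\dtv{P_1}{P_2}=\sum_{P_1>P_2}\bigl(P_1(x)-P_2(x)\bigr)\le 1-(1-4\alpha^2)^{t/2}\le 2t\alpha^2$ by the Weierstrass product inequality; this is fully elementary, needs no case split on the size of $t\alpha^2$, and covers $t=1$ in the same stroke. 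You instead run the Ingster-style second-moment method: the exact identity $\chi^2(P_2\|P_1)=\tfrac12\bigl[(1+4\alpha^2)^t+(1-4\alpha^2)^t\bigr]-1$ (your two-copy computation and the case check $bb'+(1-b)(1-b')=\tfrac12\pm2\alpha^2$ are right), then $\dtv{P_1}{P_2}\le\tfrac12\sqrt{\chi^2}$, with the $\cosh$ estimate valid when $2t\alpha^2\le1$ and the complementary regime dismissed because total variation is at most $1$; the constants land exactly on $2t\alpha^2$. What your route buys is reusability—the mixture chi-square identity is the standard engine behind Paninski-type lower bounds and extends naturally to the full multi-bin mixture $Q_2$—while the paper's route buys simplicity: no divergence inequalities, no regime dichotomy, and a bound that is monotone in the pointwise likelihood-ratio estimate. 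Both are complete and correct proofs of the stated bound.
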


\begin{proof}
Consider any sequence $X_1^{t}$ that has $t_0$ zeros, and $t_1= t-t_0$ ones. Then, 
\begin{align}
P_1(X_1^t) = {t\choose t_0} \frac1{2^t},\nonumber
\end{align}
and 
\begin{align}
P_2(X_1^t) = {t\choose t_0} \frac1{2^t}\Paren{\frac{(1-2\dist)^{t_0}(1+2\dist)^{t_1}+ (1+2\dist)^{t_0}(1-2\dist)^{t_1}}2}.\nonumber
\end{align}
The term in the parantheses above is minimized when $t_0 = t_1 = t/2$. In this case, 
\begin{align}
P_2(X_1^t) \ge & P_1(X_1^t)\cdot (1+2\dist)^{t/2} (1-2\dist)^{t/2}= P_1(X_1^t)\cdot (1-4\dist^2)^{t/2}. \nonumber
\end{align}
Therefore, 
\begin{align}
\dtv{P_1}{P_2} = \sum_{P_1>P_2} P_1(X_1^t)-P_2(X_1^t) \le \sum_{P_1>P_2} P_1(X_1^t)\Paren{1- (1-4\dist^2)^{t/2}} \le 2t\dist^2,\nonumber
\end{align}
where we used the Weierstrass Product Inequality, which states that $1-tx\le (1-x)^{t}$ proving the total variation distance bound.
\end{proof}

As a corollary this implies:
\begin{lemma}
\label{lem:coupling-bin}
There is a coupling between $\Xot$ generated from $P_1$ and $\Yot$ from $P_2$ such that $\expectation{\ham {\Xot}{\Yot}}\le t\cdot \dtv{P_1}{P_2} \le 4(t^2-t) \dist^2$.
\end{lemma}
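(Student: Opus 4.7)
The plan is to exploit the maximal (total variation) coupling between $P_1$ and $P_2$ on $\{0,1\}^t$, and then plug in the TV bound from Lemma~\ref{lem:p1-p2}. Recall that for any two distributions $P_1, P_2$ over a common discrete space, there exists a coupling $(\Xot, \Yot)$ with $\probof{\Xot = \Yot} = 1 - \dtv{P_1}{P_2}$; this is standard (take probability $\min(P_1(z), P_2(z))$ of producing $\Xot = \Yot = z$, and otherwise sample the ``residual'' distributions independently). I would just invoke this existence directly.

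The key observation is then a worst-case bound on the Hamming distance conditioned on the two sequences disagreeing. Namely, under the maximal coupling, $\ham{\Xot}{\Yot} = 0$ whenever $\Xot = \Yot$, while $\ham{\Xot}{\Yot} \le t$ always (since the sequences have length $t$). Taking expectations,
\[
\expectation{\ham{\Xot}{\Yot}} \le t \cdot \probof{\Xot \ne \Yot} = t \cdot \dtv{P_1}{P_2},
\]
which gives the first inequality claimed in the lemma.

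For the second inequality, I just substitute the bound $\dtv{P_1}{P_2} \le 2t\dist^2$ from Lemma~\ref{lem:p1-p2}, getting $t \cdot \dtv{P_1}{P_2} \le 2t^2 \dist^2$. For $t \ge 2$ we have $2t^2 \le 4(t^2 - t)$ (equivalently $2t \ge 4$), so $2t^2 \dist^2 \le 4(t^2 - t) \dist^2$. The case $t = 1$ is trivial because Lemma~\ref{lem:p1-p2} gives $\dtv{P_1}{P_2} = 0$ there, so both sides vanish. There is no real obstacle here; the only thing worth flagging is that the intermediate bound $t \cdot \dtv{P_1}{P_2} \le 2t^2 \dist^2$ is tighter, and the looser form $4(t^2 - t)\dist^2$ appears to be chosen for later convenience in Appendix~\ref{app:lb-unif}.
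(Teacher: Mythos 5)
Your proposal is correct and follows essentially the same route as the paper: the paper also uses the maximal (total variation) coupling -- which it constructs explicitly via an auxiliary uniform random variable rather than citing it as standard -- bounds the expected Hamming distance by $t\cdot\dtv{P_1}{P_2}\le 2t^2\dist^2$ using Lemma~\ref{lem:p1-p2}, and handles $t\ge 2$ versus $t=1$ exactly as you do.
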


\begin{proof}
Observe that $\sum_{X_1^t} \min\{P_1(X_1^t),P_2(X_1^t)\} = 1-\dtv{P_1}{P_2}$. Consider the following coupling between $P_1$, and $P_2$. Suppose $X_1^t$ is generated by $P_1$, and let $R$ be a $U[0,1]$ random variable. 
\begin{enumerate}
\item{ $R<1-\dtv{P_1}{P_2}$} Generate $X_1^t$ from the distribution that assigns probability  $\frac{\min\{P_1(X_1^t),P_2(X_1^t)\}}{1-\dtv{P_1}{P_2}}$ to $X_1^t$. Output $(X_1^t, X_1^t)$.
\item{$R\ge1-\dtv{P_1}{P_2}$}
Generate $X_1^t$ from the distribution that assigns probability ${\frac{P_1(X_1^t) - \min\{P_1(X_1^t),P_2(X_1^t)\}}{\dtv{P_1}{P_2}}}$ to $X_1^t$,  and $Y_1^t$ from the distribution that assigns  probability ${\frac{P_2(Y_1^t) - \min\{P_1(Y_1^t),P_2(Y_1^t)\}}{\dtv{P_1}{P_2}}}$  to $Y_1^t$ independently. Then output $(X_1^t, Y_1^t)$.
\end{enumerate}
To prove the coupling, note that the probability of observing $X_1^t$ is 
\[
\Paren{1-\dtv{P_1}{P_2}}\cdot \frac{\min\{P_1(X_1^t),P_2(X_1^t)\}}{1-\dtv{P_1}{P_2}}+\dtv{P_1}{P_2}\cdot {\frac{P_1(X_1^t) - \min\{P_1(X_1^t),P_2(X_1^t)\}}{\dtv{P_1}{P_2}}} = P_1(X_1^t).
\]
A similar argument gives the probability of $Y_1^t$ to be $P_2(Y_1^t)$.

\noindent Then  $\expectation{\ham {\Xot}{\Yot}}\le t \cdot \dtv{P_1}{P_2} = 2t^2 \dist^2\le 4(t^2-t)\alpha^2$ when $t\ge2$, and when $t=1$, the distributions are identical and the Hamming distance of the coupling is equal to zero.
\end{proof}

We now have the tools to prove Lemma~\ref{lem:coupling-hamming} for $\ns\le\ab$. 
\begin{proof}[Proof of Lemma~\ref{lem:coupling-hamming} for $\ns\le\ab$.]
The following is a coupling between $Q_1$ and $Q_2$:
\begin{enumerate}
\item 
Generate $\ns$ samples $Z_1^\ns$ from a uniform distribution over $[\ab/2]$. 
\item
For $j \in[\ab/2]$, let $T_j\subseteq[\ns]$ be the set of locations where $j$ appears. Note that $|T_j| = \Mltsmb{j}{Z_1^\ns}$. 
\item
To generate samples from $Q_1$:
\begin{itemize}
	\item Generate $|T_j|$ samples from a uniform distribution over $\{2j-1, 2j\}$, and replace the symbols in $T_j$ with these symbols. 
\end{itemize}
\item
To generate samples from $Q_2$:
\begin{itemize}
\item
Similar to the construction of $P_1$ earlier in this section, consider two distributions over $\{2j-1, 2j\}$ with bias $\half-\dist$, and $\half+\dist$. 
\item
Pick one of these distributions at random.
	\item Generate $|T_j|$ samples from it over $\{2j-1, 2j\}$, and replace the symbols in $T_j$ with these symbols. 
\end{itemize}
\end{enumerate}

From this process the coupling between $Q_1$, and $Q_2$ is also clear:
\begin{itemize}
\item 
Given $\Xon$ from $Q_2$, for each $j\in[\ab/2]$ find all locations $\ell$ such that $X_\ell= 2j-1$, or $X_{\ell}=2j$. Call this set $T_j$. 
\item
Perform the coupling between $P_2$ and $P_1$ from Lemma~\ref{lem:coupling-bin}, after replacing $\{0,1\}$ with $\{2j-1, 2j\}$.
\end{itemize}

	Using the coupling defined above, by the linearity of expectations, we get:
	\begin{align}
		\expectation{\ham{\Xon}{\Yon}}   =  \sum_{j=1}^{\ab/2} \expectation{\ham{X_1^{|T_j|}}{Y_1^{|T_j|}}} = \frac{\ab }{2} \expectation{\ham{X_1^{R}}{Y_1^{R}}} \le  \frac{\ab }{2}\cdot \expectation{4\dist^2(R^2-R)}, \nonumber
	\end{align}
	where $R$ is a binomial random variable with parameters $m$ and $2/\ab$. 
Now, a simple exercise computing Binomial moments shows that for $X\sim Bin(n, s)$, $\expectation{X^2-X} = s^2(n^2-n)\le n^2s^2.$ This implies that 
\[
\expectation{R^2-R} \le \frac{4\ns^2}{\ab^2}.
\]
Plugging this, we obtain
	\begin{align}
		\expectation{\ham{\Xon}{\Yon}} \le  \frac{\ab }{2}\cdot 	\frac{16\dist^2\ns^2}{\ab^2}  = \frac{8 m^2\dist^2}{\ab},\nonumber
	\end{align}
	proving the claim.
\end{proof}

\subsection{$\ab\le\ns\le \ab/\dist^2$} \label{sec:coup_unif_medium}

Lemma~\ref{lem:p1-p2} holds for all values of $t$, and $\alpha$. The lemma can be strengthened for cases where $\alpha$ is small. 

\begin{lemma}
	\label{lem:coup_p}
	Let $P_1$, and $P_2$ be the distributions over $\{0,1\}^t$ defined in the last section. 
	There is a coupling between $X_1^t$ generated by $P_1$, and $Y_1^t$ by $P_2$ such that 
	\[
	\expectation{\ham{X_1^t}{Y_1^t}} \le C\cdot (\alpha^2t^{3/2} +  \alpha^4 t^{5/2} + \alpha^5 t^3).
	\] 
\end{lemma}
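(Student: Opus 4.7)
The plan is to reduce the coupling problem to bounding the Wasserstein-$1$ distance between the distributions of the sums $S_X\ed\sum_i X_i$ and $S_Y\ed\sum_i Y_i$, and then estimate that Wasserstein distance through a Taylor expansion of the explicit density ratio of $P_2$ against $P_1$.

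\emph{Coupling via sum matching.} Under $P_1$ the conditional law of $X_1^t$ given $S_X$ is uniform over $\{0,1\}^t$-sequences of that sum; the same holds for $Y_1^t$ given $S_Y$ under $P_2$, since $P_2$ is a $\half$-$\half$ mixture of two i.i.d.\ Bernoulli products, each of which enjoys this conditional uniformity. I will therefore construct $(X_1^t, Y_1^t)$ in two steps. First, draw $(S_X,S_Y)$ from the monotone (quantile) coupling between $\bino{t}{1/2}$ and the mixture $\half\bino{t}{1/2-\dist}+\half\bino{t}{1/2+\dist}$, which attains $\expectation{|S_X-S_Y|}=W_1(S_X,S_Y)$. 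Second, sample a uniform random permutation $\tau$ of $[t]$ and set $X_i=\II\{\tau(i)\le S_X\}$, $Y_i=\II\{\tau(i)\le S_Y\}$. The marginals are then correct, the set of ones in $X_1^t$ is nested inside that of $Y_1^t$ (or vice versa), and $\ham{X_1^t}{Y_1^t}=|S_X-S_Y|$. Taking expectations, $\expectation{\ham{X_1^t}{Y_1^t}}=W_1(S_X,S_Y)$.

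\emph{Explicit density ratio and expansion.} Grouping the two branches of the mixture gives the closed form
\[
\frac{\probof{S_Y=s}}{\probof{S_X=s}}=(1-4\dist^2)^{t/2}\cosh\!\bigl((s-t/2)\gamma\bigr),\qquad \gamma\ed\log\frac{1+2\dist}{1-2\dist}.
\]
Taylor expanding $\cosh$ and $(1-4\dist^2)^{t/2}$, and using $\gamma=4\dist+\tfrac{8}{3}\dist^3+O(\dist^5)$, one obtains $\probof{S_Y=s}-\probof{S_X=s}=\probof{S_X=s}\cdot\sum_{j\ge 1}\dist^{2j}R_j(s-t/2)$, where each $R_j$ is a polynomial in $s-t/2$ of degree at most $2j$ with coefficients polynomial in $t$, satisfying the centering $\sum_s \probof{S_X=s}\,R_j(s-t/2)=0$ (forced by $\sum_s \probof{S_X=s}=\sum_s \probof{S_Y=s}=1$).

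\emph{Bounding $W_1$.} Using $W_1(S_X,S_Y)=\sum_k \bigl|\sum_{s\le k}(\probof{S_Y=s}-\probof{S_X=s})\bigr|$, the $j$-th term contributes $\dist^{2j}\sum_k \bigl|\sum_{s\le k}\probof{S_X=s}\,R_j(s-t/2)\bigr|$. For a centered polynomial of degree $2j$ evaluated on $\bino{t}{1/2}$, this inner double sum is $O(t^{j+1/2})$: the $(2j)$-th central moment of $\bino{t}{1/2}$ is $O(t^j)$, while Abel summation against the CDF contributes an extra $\sqrt{t}$ from the $\sqrt{t}$-width of the binomial. Summing the series gives $W_1=O(\dist^2 t^{3/2}+\dist^4 t^{5/2}+\dist^6 t^{7/2}+\cdots)$; the three-term bound in the lemma statement then follows either by absorbing the $\dist^6 t^{7/2}$ tail into $\dist^5 t^3$ in the regime $\dist\sqrt{t}=O(1)$ (which is where the lemma is applied in Section~\ref{sec:coup_unif_medium}), or by carrying the $\tfrac{8}{3}\dist^3$ correction in $\gamma$ explicitly inside the remainder estimate.

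I expect the main technical obstacle to be the careful bookkeeping of the Taylor expansion combined with the moment estimates, so that the $t^{j+1/2}$ scaling can be shown uniformly in $j$ and the series truncated cleanly at the first three terms. Outside the regime $\dist\sqrt{t}=O(1)$ the expansion diverges term-by-term, but there the earlier Lemma~\ref{lem:coupling-bin} bound $4(t^2-t)\dist^2$ is already at least as strong as the target $\dist^5 t^3$ and can be substituted.
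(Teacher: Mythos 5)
Your coupling construction is correct, and it is in fact equivalent in strength to the paper's reduction: both $P_1$ and $P_2$ are exchangeable (each is a mixture of i.i.d.\ products), so conditioning on the sum gives the uniform law over sequences with that sum, the shared-permutation construction produces the right marginals with nested one-sets, and hence $\expectation{\ham{X_1^t}{Y_1^t}}=\expectation{|S_X-S_Y|}$, which under the quantile coupling is $W_1(S_X,S_Y)$. Moreover, since both laws are symmetric about $t/2$ and the folded variable under $P_2$ stochastically dominates the one under $P_1$ (the paper's Lemma~\ref{stod}), this $W_1$ coincides exactly with the quantity $\expectation{\max\{N_2,t-N_2\}}-\expectation{\max\{N_1,t-N_1\}}$ that the paper bounds through a monotone coupling of the folded maxima. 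So up to this point the two routes are the same scalar estimation problem in different clothing, and your density-ratio identity $(1-4\dist^2)^{t/2}\cosh((s-t/2)\gamma)$ is also right.

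The genuine gap is in how you bound that scalar. Expanding the ratio as an \emph{infinite} power series in $\dist$ and bounding term by term gives contributions of order $(\dist^2 t)^j\sqrt{t}$, so the bound diverges as soon as $\dist^2 t\gtrsim 1$ (and even for $\dist^2 t\le c$ you still need the implicit constants to grow at most geometrically in $j$, which you flag but do not establish). Your escape hatches do not close this: the fallback to Lemma~\ref{lem:coupling-bin}, $4(t^2-t)\dist^2$, is \emph{not} dominated by the target $C(\dist^2t^{3/2}+\dist^4t^{5/2}+\dist^5t^3)$ throughout the complementary regime — e.g.\ for $t^{-1/2}\lesssim\dist\lesssim t^{-1/3}$ (say $\dist=t^{-2/5}$) the fallback is $\Theta(t^{6/5})$ while the target is $\Theta(\dist^5t^3)=\Theta(t)$ — and ``carrying the $\frac83\dist^3$ correction in $\gamma$'' does not address the divergence of the series itself. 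This regime cannot be ignored: in Section~\ref{sec:coup_unif_medium} the lemma is applied with $t=|T_j|\sim\bino{\ns}{2/\ab}$, whose typical value $\Theta(\ns/\ab)$ reaches and exceeds $1/\dist^2$ with constant probability when $\ns$ is close to $\ab/\dist^2$, so the per-$t$ bound is needed exactly where your argument breaks. The fix is to replace the infinite series by a \emph{finite} second-order Taylor expansion in the bias with a Lagrange/integral remainder: using $f'(0)=0$ (forced by the dominance/symmetry), write $f(\dist)=f(0)+\frac{\dist^2}{2}f''(\beta)$ for some $\beta\in[0,\dist]$ and bound $f''(\beta)$ uniformly via binomial moment estimates — this is precisely the paper's argument, it needs no convergence condition on $\dist^2t$, and it removes the uniformity-in-$j$ issue entirely.
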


\subsubsection{Proof of Lemma~\ref{lem:coupling-hamming} assuming Lemma~\ref{lem:coup_p}}

Given the coupling we defined in Section~\ref{sec:coup_p} for proving Lemma~\ref{lem:coup_p}, the coupling between $Q_1$, and $Q_2$ uses the same technique in the last section for $\ns\le \absz$. 
\begin{itemize}
	\item 
	Given $\Xon$ from $Q_2$, for each $j\in[\ab/2]$ find all locations $\ell$ such that $X_\ell= 2j-1$, or $X_{\ell}=2j$. Call this set $T_j$. 
	\item
	Perform the coupling in Section~\ref{sec:coup_p} between $P_2$ and $P_1$ on $T_j$, after replacing $\{0,1\}$ with $\{2j-1, 2j\}$.
\end{itemize}

Using the coupling defined above, by the linearity of expectations, we get:
\begin{align}
	\expectation{\ham{\Xon}{\Yon}}  & =  \sum_{j=1}^{\ab/2} \expectation{\ham{X_1^{|T_j|}}{Y_1^{|T_j|}}} \nonumber\\
	& = \frac{\ab }{2} \expectation{\ham{X_1^{R}}{Y_1^{R}}} \nonumber \\
	& \le  \frac{\ab }{2}\cdot \expectation{64\cdot\Paren{\alpha^4R^{5/2} + {\dist^2R^{3/2} + \alpha^5 R^3}} }, \nonumber
\end{align}
where $R \sim \bino{m}{2/\ab}$.

We now bound the moments of Binomial random variables. The bound is similar in flavor to~\cite[Lemma 3]{AcharyaOST17} for Poisson random variables.
\begin{lemma}
Suppose $\frac{m}{k}>1$, and $Y\sim \bino{m}{\frac1k}$, then for $\gamma \ge 1$, there is a constant $C_{\gamma}$ such that 
\[
\expectation{Y^{\gamma}}\le C_\gamma {\Paren{\frac{m}{k}}}^\gamma.
\]
\end{lemma}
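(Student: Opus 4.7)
My plan is to prove the bound first for integer $\gamma$ using the classical factorial-moment identity for the binomial, and then extend it to arbitrary real $\gamma \ge 1$ by Lyapunov's inequality.

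For integer $\gamma$, I will expand the $\gamma$-th power in terms of falling factorials. Writing $Y^{(j)} \ed Y(Y-1)\cdots(Y-j+1)$, one has the classical Stirling identity
\[
Y^{\gamma} \;=\; \sum_{j=0}^{\gamma} S(\gamma,j)\, Y^{(j)},
\]
where the $S(\gamma,j)$ are Stirling numbers of the second kind. The factorial moments of $Y \sim \bino{m}{1/k}$ are easy:
\[
\expectation{Y^{(j)}} \;=\; m(m-1)\cdots(m-j+1)\cdot \tfrac{1}{k^j} \;\le\; \Paren{\tfrac{m}{k}}^{j}.
\]
Taking expectations of the Stirling expansion and using $m/k > 1$, which gives $(m/k)^j \le (m/k)^{\gamma}$ for every $j \le \gamma$, I get
\[
\expectation{Y^{\gamma}} \;\le\; \sum_{j=0}^{\gamma} S(\gamma,j) \Paren{\tfrac{m}{k}}^{j} \;\le\; B_{\gamma}\Paren{\tfrac{m}{k}}^{\gamma},
\]
where $B_{\gamma} = \sum_{j=0}^{\gamma} S(\gamma,j)$ is the $\gamma$-th Bell number. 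So $C_{\gamma} \ed B_{\gamma}$ works in the integer case.

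For general real $\gamma \ge 1$, let $\gamma' \ed \lceil \gamma \rceil$, an integer with $\gamma' \ge \gamma \ge 1$. By Lyapunov's inequality (equivalently, Jensen's inequality applied to the concave function $x \mapsto x^{\gamma/\gamma'}$),
\[
\expectation{Y^{\gamma}} \;\le\; \Paren{\expectation{Y^{\gamma'}}}^{\gamma/\gamma'} \;\le\; \Paren{C_{\gamma'}\Paren{\tfrac{m}{k}}^{\gamma'}}^{\gamma/\gamma'} \;=\; C_{\gamma'}^{\,\gamma/\gamma'}\Paren{\tfrac{m}{k}}^{\gamma},
\]
and I can set $C_{\gamma} \ed C_{\gamma'}^{\,\gamma/\gamma'}$, which depends only on $\gamma$.

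There is no real obstacle here; this is a standard moment-bound calculation. The only place to be mildly careful is that $m/k > 1$ is what allows me to upper bound $(m/k)^j$ by $(m/k)^{\gamma}$ for $j \le \gamma$; without that hypothesis the result would fail (the mean itself would swamp the $\gamma$-th power). The applications in the paper use $\gamma \in \{3/2, 5/2, 3\}$, all of which are covered by the argument above.
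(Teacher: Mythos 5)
Your proof is correct and takes essentially the same route as the paper: the paper likewise reduces non-integer $\gamma$ to the integer case via Jensen's inequality with exponent $\gamma/\lceil\gamma\rceil$, and for integer $\gamma$ it simply cites the binomial moment formula, which your Stirling-number/factorial-moment computation just makes explicit (yielding the Bell number as the constant). No gaps.
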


\begin{proof}
	For integer values of $\gamma$, this directly follows from the moment fomula for Binomial distribution~\cite{Knob08}, and for other $\gamma \ge 1$, by Jensen's Inequality
	\[
		\expectation{Y^{\gamma}} \le \expectation{\Paren{Y^{\ceil{\gamma} }}^{\frac{\gamma}{\ceil{\gamma}}}}\le  \expectation{\Paren{Y^{\ceil{\gamma} }}} ^{\frac{\gamma}{\ceil{\gamma}}} \le \Paren{ C_{\ceil{\gamma}} \expectation{Y}^{\ceil{\gamma} }}^{\frac{\gamma}{\ceil{\gamma}}} = C' (\expectation{Y})^{\gamma} , 
	\]
	proving the lemma. 
	\end{proof}
Therefore, letting $C = \max \{C_{5/2}, C_{3}, C_{3/2}\}$, we obtain
\begin{align*}
\expectation{\ham{\Xon}{\Yon}}  \le 32k C\cdot\Paren{\alpha^4\Paren{\frac{m}{k}}^{5/2} + {\dist^2 \Paren{\frac{m}{k}}^{3/2} + {{\alpha^5 \Paren{\frac{m}{k}}}}^3}}
\end{align*}
Now, notice $\alpha\sqrt\frac{m}{k}<1$. Plugging this, 
\begin{align*}
\expectation{\ham{\Xon}{\Yon}}  \le\ & 32 C\cdot k\cdot\Paren{\alpha^4\Paren{\frac{m}{k}}^{5/2} + {\dist^2 \Paren{\frac{m}{k}}^{3/2} + {{\alpha^5 \Paren{\frac{m}{k}}}}^3}} \\
=\ & 32 C\cdot k \alpha^2\cdot\Paren{\alpha^2\frac{m}{k}\cdot\Paren{\frac{m}{k}}^{3/2} + \Paren{\frac{m}{k}}^{3/2} + {{\alpha^3 \Paren{\frac{m}{k}}^{3/2}\Paren{\frac{m}{k}}}}^{3/2}} \\
\le\ & 96 C\cdot k \Paren{\frac{m}{k}}^{3/2},
\end{align*}
completing the argument. 

\subsubsection{Proof of Lemma~\ref{lem:coup_p}} \label{sec:coup_p}

To prove Lemma~\ref{lem:coup_p}, we need a few lemmas first:

\begin{definition}
	A random variable $Y_1$ is said to stochastically dominate $Y_2$ if for all $t$,  $\probof{Y_1\ge t}\ge \probof{Y_2\ge t}$. 
\end{definition}

\begin{lemma}\label{stod}
	Suppose $N_1 \sim \bino{t}{\frac{1}{2}}, N_2 \sim \frac{1}{2}\bino{t}{\frac{1+\alpha}{2}} + \frac{1}{2}\bino{t}{\frac{1-\alpha}{2}} $. Then $Z_2 = \max \{N_2, t-N_2\}$ stochastically dominates $Z_1 = \max \{N_1, t-N_1\}$.
\end{lemma}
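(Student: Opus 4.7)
The plan is to establish stochastic dominance coordinate-wise, showing $\Pr[Z_2 \ge s] \ge \Pr[Z_1 \ge s]$ for every integer $s$. The case $s \le \lceil t/2 \rceil$ is trivial since $Z_i = \max\{N_i, t-N_i\} \ge t/2$ almost surely, so both probabilities equal $1$. For integer $s > t/2$, the events $\{N_i \ge s\}$ and $\{N_i \le t-s\}$ are disjoint. I will exploit that both $N_1$ and $N_2$ are symmetric about $t/2$: $N_1 \sim \bino{t}{1/2}$ by standard arguments, and $N_2$ because it is an equal mixture of the mirror pair $\bino{t}{(1+\alpha)/2}$ and $\bino{t}{(1-\alpha)/2}$. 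Hence $\Pr[Z_i \ge s] = \Pr[N_i \ge s] + \Pr[N_i \le t-s]$, and the problem reduces to proving $f((1+\alpha)/2) \ge f(1/2)$, where
\[
f(p) \ed \Pr[\bino{t}{p} \ge s] + \Pr[\bino{t}{p} \le t-s].
\]

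Since $f(p) = f(1-p)$ by the binomial reflection $t - \bino{t}{p}$ being distributed as $\bino{t}{1-p}$, it suffices to show $f$ attains its minimum over $[0,1]$ at $p = 1/2$. I would prove this by direct differentiation. Starting from $\frac{d}{dp}\binom{t}{k}p^k(1-p)^{t-k}$ and using $k\binom{t}{k} = t\binom{t-1}{k-1}$ and $(t-k)\binom{t}{k} = t\binom{t-1}{k}$, the sum over $k \ge s$ telescopes to $t\binom{t-1}{s-1}p^{s-1}(1-p)^{t-s}$, and the sum over $k \le t-s$ telescopes to $-t\binom{t-1}{t-s}p^{t-s}(1-p)^{s-1}$. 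Since $\binom{t-1}{t-s} = \binom{t-1}{s-1}$ and $s-1 \ge t-s$ for integer $s > t/2$, I can factor out $[p(1-p)]^{t-s}$ to obtain
\[
f'(p) = t\binom{t-1}{s-1}[p(1-p)]^{t-s}\bigl[p^{2s-1-t} - (1-p)^{2s-1-t}\bigr].
\]

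Because $2s-1-t \ge 0$, the bracketed expression has the same sign as $2p-1$; thus $f$ is decreasing on $[0,1/2]$ and increasing on $[1/2,1]$, minimized at $p = 1/2$, which completes the proof. The main obstacle is carrying out the telescoping derivative computation carefully; once this closed form for $f'(p)$ is in hand, everything else follows from the two symmetry observations and a sign check. One degenerate case to note is $t$ odd with $s = (t+1)/2$, where $2s-1-t = 0$ forces $f' \equiv 0$; but in that regime $t-s = s-1$ and the two tail events partition the sample space, so $f \equiv 1$ on $[0,1]$ and the inequality holds trivially.
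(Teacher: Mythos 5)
Your proof is correct, but it takes a genuinely different route from the paper's. You fix the threshold $s$ and vary the bias: after using the symmetry of both $N_1$ and the mixture $N_2$ about $t/2$ to reduce the claim to $f\bigl(\tfrac{1+\alpha}{2}\bigr)\ge f\bigl(\tfrac12\bigr)$ with $f(p)=\probof{\bino{t}{p}\ge s}+\probof{\bino{t}{p}\le t-s}$, you compute the telescoped derivative $f'(p)=t\binom{t-1}{s-1}\bigl[p(1-p)\bigr]^{t-s}\bigl[p^{2s-1-t}-(1-p)^{2s-1-t}\bigr]$ and read off that $f$ is minimized at $p=1/2$; I checked the telescoping, the identity $\binom{t-1}{t-s}=\binom{t-1}{s-1}$, the inequality $s-1\ge t-s$ used for factoring, and the degenerate case $2s-1-t=0$, and all are sound (the reduction to integer thresholds $s>t/2$ and the disjointness of $\{N\ge s\}$ and $\{N\le t-s\}$ are also handled correctly). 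The paper instead fixes $\alpha$ and studies $F(l)=\probof{Z_2\ge l}-\probof{Z_1\ge l}$ as a function of the threshold $l$: it shows the increments $f(l)=F(l+1)-F(l)$ change sign at most once (via monotonicity in $l$ of a pointwise comparison $g$), so $F$ is unimodal and its minimum is attained at the endpoints $l=t/2$ and $l=t$, where it is checked to be nonnegative. Your argument buys an explicit closed form and a stronger conclusion — the tail probability is monotone in $\absv{p-\tfrac12}$, so the whole family is stochastically ordered in the bias — at the cost of a calculus computation in $p$; the paper's argument stays entirely discrete and only needs the two endpoint evaluations plus a sign-change observation, but yields only the single comparison needed and requires somewhat more careful bookkeeping of where the increments change sign.
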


\begin{proof}
	\begin{equation}
		\probof{Z_2 \ge l} = \sum_{i = 0}^{t - l} {t \choose i} \left[\Paren{\frac{1+\dist}{2}}^i \Paren{\frac{1-\dist}{2}}^{t-i} + \Paren{\frac{1-\dist}{2}}^i \Paren{\frac{1+\dist}{2}}^{t-i} \right]\nonumber
	\end{equation}
	\begin{equation}
		\probof{Z_1 \ge l} = 2\cdot \sum_{i = 0}^{t - l} {t \choose i} \Paren{\frac12}^t\nonumber
	\end{equation}
	Define $F(l) =\probof{Z_2 \ge l} - \probof{Z_1 \ge l} $. What we need to show is $F(l) \ge 0, \forall l \ge \frac{t}{2}$. First we observe that $\probof{Z_2 \ge \frac{t}{2}} =  \probof{Z_1 \ge \frac{t}{2}} = 1$ and  $\probof{Z_2 \ge t} = (\frac{1 +\dist}{2})^t + (\frac{1 - \dist}{2})^t  \ge 2 (\frac12)^t  = \probof{Z_1 \ge t}$. Hence $F(\frac{t}{2}) = 0, F(t) > 0$.
	Let
	\[
		f(l) = F(l+1) - F(l)= - {{t \choose l} } \left[\Paren{\frac{1+\dist}{2}}^l \Paren{\frac{1-\dist}{2}}^{t-l} + \Paren{\frac{1-\dist}{2}}^l \Paren{\frac{1+\dist}{2}}^{t-l} - 2 \Paren{\frac12}^t\right].
	\]
	Let $g(x) = \Paren{\frac{1+\dist}{2}}^x \Paren{\frac{1-\dist}{2}}^{t-x} + \Paren{\frac{1-\dist}{2}}^x \Paren{\frac{1+\dist}{2}}^{t-x} - 2 \Paren{\frac12}^t, x \in [t/2, t]$, then
	\[
		\frac{dg(x)}{dx} =  \ln\Paren{\frac{1+\dist}{1-\dist}} \cdot\left[\Paren{\frac{1+\dist}{2}}^x \Paren{\frac{1-\dist}{2}}^{t-x} - \Paren{\frac{1-\dist}{2}}^x \Paren{\frac{1+\dist}{2}}^{t-x}\right] \ge 0
	\]
	
	We know $g(t/2) < 0, g(t) > 0$, hence $\exists x^*, s.t. g(x) \le 0, \forall x< x^*$ and $g(x) \ge 0, \forall x > x^*$. Because $f(l) =  - {t \choose l}  g(l)$, hence $\exists l^*, s.t. f(l) \le 0, \forall l \ge l^*$ and $f(l) \ge 0, \forall l < l^*.$
	Therefore, $F(l)$ first increases and then decreases, which means $F(l)$ achieves its minimum at $\frac{t}{2}$ or $t$. Hence $F(l) \ge 0$, completing the proof.
\end{proof}

For stochastic dominance, the following definition~\cite{den2012probability} will be useful.  
\begin{definition}
	A coupling $(X',Y')$ is a monotone coupling if $\probof{X' \ge Y'}=1$. 
\end{definition}

The following lemma states a nice relationship between stochastic dominance and monotone coupling, which is provided as Theorem~7.9 in \cite{den2012probability}

\begin{lemma} \label{stod:coup}
	Random variable $X$ stochastically dominates $Y$ if and only if there is a monotone coupling between $(X',Y')$ with $\probof{X' \geq Y'} = 1$. 
\end{lemma}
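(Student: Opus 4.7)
The plan is to prove both directions of the equivalence separately. The ``easy'' direction, monotone coupling $\Rightarrow$ stochastic dominance, is essentially definitional: if $(X',Y')$ is any coupling of $X$ and $Y$ with $\probof{X' \ge Y'} = 1$, then for every $t \in \RR$ the inclusion $\{Y' \ge t\} \subseteq \{X' \ge t\}$ holds almost surely, and since the marginals of $X'$ and $Y'$ agree with $X$ and $Y$, we get $\probof{Y \ge t} = \probof{Y' \ge t} \le \probof{X' \ge t} = \probof{X \ge t}$, which is exactly stochastic dominance.

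For the converse, I would use the \emph{quantile coupling} construction. Let $F_X$ and $F_Y$ be the CDFs of $X$ and $Y$, and define the generalized inverses $F_X^{-1}(u) \ed \inf\{x : F_X(x) \ge u\}$ and $F_Y^{-1}(u) \ed \inf\{x : F_Y(x) \ge u\}$. Draw a single $U \sim \mathrm{Uniform}[0,1]$ and set $X' \ed F_X^{-1}(U)$, $Y' \ed F_Y^{-1}(U)$. Standard inverse-CDF pushforward gives $X' \sim X$ and $Y' \sim Y$, so $(X',Y')$ is a valid coupling. I would carry out the two verifications in this order: first, check the marginals via $\probof{F_X^{-1}(U) \le x} = \probof{U \le F_X(x)} = F_X(x)$ (with analogous reasoning for $Y$); second, translate stochastic dominance $\probof{X \ge t} \ge \probof{Y \ge t}$ into $F_X(t) \le F_Y(t)$ for every $t$.

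The final step is to show that $F_X \le F_Y$ pointwise forces $F_X^{-1} \ge F_Y^{-1}$ pointwise, which then yields $\probof{X' \ge Y'} = 1$ and certifies that the coupling is monotone. For any fixed $u \in (0,1)$, the inequality $F_X \le F_Y$ implies the inclusion $\{t : F_Y(t) \ge u\} \subseteq \{t : F_X(t) \ge u\}$; taking infima reverses the inclusion, producing $F_X^{-1}(u) \ge F_Y^{-1}(u)$. Plugging in $U$ gives $X' \ge Y'$ almost surely, completing the construction.

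The main (mild) obstacle is handling atoms: the CDFs of $N_1, N_2$ in Lemma~\ref{stod} are step functions, so $F_X^{-1}$ is only right-continuous and not a genuine inverse. The generalized-inverse definition above is chosen precisely to sidestep this issue — the inf-based form makes the pushforward calculation and the monotonicity step go through verbatim regardless of discreteness. I therefore expect no deep obstacle; the whole argument is a clean structural consequence of the monotonicity of CDFs on $\RR$, and the write-up is essentially bookkeeping around the quantile transformation.
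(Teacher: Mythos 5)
Your overall route is sound, and it is worth noting that the paper does not prove this lemma at all — it simply cites it as Theorem~7.9 of \cite{den2012probability} — so your quantile-coupling argument is the standard self-contained proof one would supply, and the easy direction (monotone coupling $\Rightarrow$ dominance) is stated correctly.

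However, the key monotonicity step is written backwards. From $F_X \le F_Y$ pointwise you claim the inclusion $\{t : F_Y(t) \ge u\} \subseteq \{t : F_X(t) \ge u\}$; this does not follow (knowing $F_X(t) \le F_Y(t)$ and $F_Y(t) \ge u$ says nothing about $F_X(t)$), and if it did hold, taking infima would give $F_X^{-1}(u) \le F_Y^{-1}(u)$ — the opposite of what you need. The correct chain is: if $F_X(t) \ge u$ then $F_Y(t) \ge F_X(t) \ge u$, so $\{t : F_X(t) \ge u\} \subseteq \{t : F_Y(t) \ge u\}$, and since the infimum over a larger set is no larger, $F_Y^{-1}(u) \le F_X^{-1}(u)$, i.e.\ $Y' \le X'$ almost surely. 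With that sign fixed the proof goes through. One further small point worth a sentence in a careful write-up: the paper defines dominance via $\probof{X \ge t} \ge \probof{Y \ge t}$, which is a statement about left limits $F(t^-)$, so its equivalence with $F_X(t) \le F_Y(t)$ for all $t$ needs a one-line limit argument in general (it is immediate for the integer-valued variables $\max\{N_i, t-N_i\}$ to which the lemma is applied); likewise the marginal computation $\probof{F_X^{-1}(U) \le x} = \probof{U \le F_X(x)}$ rests on the equivalence $F_X^{-1}(u) \le x \Leftrightarrow u \le F_X(x)$, which is exactly where right-continuity of the CDF is used.
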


By Lemma~\ref{stod:coup}, there is a monotone coupling between $Z_1 = \max \{N_1, t-N_1\}$ and $Z_2 = \max \{N_2, t-N_2\}$. Suppose the coupling is $P^c_{Z_1,Z_2}$, we define the coupling between $X_1^t$ and $Y_1^t$ as following:

\begin{enumerate}
	\item Generate $X_1^t$ according to $P_2$ and count the number of one's in $X_1^t$ as $n_1$.
	\item Generate $n_2$ according to $P^c[Z_2| Z_1 = \max\{n_1, t- n_1\}]$.
	\item If $n_1 > t - n_1$, choose $n_2 - n_1$ of the zero's in $X_1^t$ uniformly at random and change them to one's to get $Y_1^t$ 
	\item If $n_1 < t - n_1$, choose $n_2 - (t - n_1)$ of the one's in $X_1^t$ uniformly at random and change them to zero's to get $Y_1^t$
	\item If $n_1 = t - n_1$, break ties uniformly at random and do the corresponding action.
	\item Output $(X_1^t,Y_1^t)$
\end{enumerate}

Since the coupling is monotone, and $\dtv{X_1^t}{Y_1^t} = Z_2 - Z_1$ for every pair of $(X_1^t,Y_1^t)$, we get:
\[
	\expectation{\dtv{X_1^t}{Y_1^t}} = \expectation{\max \{N_2, t-N_2\}} - \expectation{\max \{N_1, t-N_1\}}.
\]

Hence, to show lemma~\ref{lem:coup_p}, it suffices to show the following lemma:

\begin{lemma}
Suppose $N_1 \sim \bino{t}{\frac{1}{2}}, N_2 \sim \frac{1}{2}\bino{t}{\frac{1+\alpha}{2}} + \frac{1}{2}\bino{t}{\frac{1-\alpha}{2}} $.
\[
	\expectation{\max \{N_2, t-N_2\}} - \expectation{\max \{N_1, t-N_1\}} < C\cdot (\alpha^2t^{3/2} +  \alpha^4 t^{5/2} + \alpha^5 t^3).
\]
\end{lemma}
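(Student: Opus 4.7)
The plan is to collapse the symmetric mixture to a single binomial and then Taylor-expand the resulting polynomial, exploiting a useful parity symmetry.

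First, since $\max\{x, t-x\} = t/2 + |x - t/2|$, it suffices to bound $\EE|N_2 - t/2| - \EE|N_1 - t/2|$. Because $t - B_+ \stackrel{\mathrm{d}}{=} B_-$ for $B_\pm \sim \bino{t}{(1\pm\alpha)/2}$, the distribution of $|B - t/2|$ agrees under either component of the mixture, so the mixture collapses and $\EE|N_2 - t/2| = \EE|B - t/2|$ where $B = B_+$. I would then view $h(p) \ed \EE|\bino{t}{p} - t/2|$ as a polynomial of degree $t$ in $p$; the bijection $k \leftrightarrow t-k$ forces $h(p) = h(1-p)$, so $h$ is even around $p = 1/2$ and only even-order Taylor terms survive.

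Second, I would carry out the expansion explicitly by writing $(1+\alpha)^k(1-\alpha)^{t-k} = \sum_n T_n(k,t)\,\alpha^n$ with $T_n(k,t) \ed [x^n](1+x)^k(1-x)^{t-k}$. This yields the clean identity $\EE|B - t/2| - \EE|N_1 - t/2| = \sum_{n\ge 1}\alpha^n \EE[|N_1-t/2|\,T_n(N_1,t)]$, and the symmetry $T_n(t-k,t) = (-1)^n T_n(k,t)$ combined with $N_1 \stackrel{\mathrm{d}}{=} t - N_1$ eliminates all odd $n$. A direct computation for $n=2$ yields $T_2(k,t) = 2(k-t/2)^2 - t/2$, hence
\[
\EE[|N_1-t/2|\,T_2(N_1,t)] = 2\,\EE|N_1 - t/2|^3 - (t/2)\,\EE|N_1 - t/2| = O(t^{3/2}),
\]
which contributes the leading $O(\alpha^2 t^{3/2})$ term in the claimed bound.

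Finally, to handle the remaining terms I would split into two regimes according to the size of $\alpha^2 t$. For $j \ge 2$, the Hermite-polynomial structure of the generating function $(1+x)^k(1-x)^{t-k} = \exp\bigl((2k-t)x - tx^2/2 + O(x^3)\bigr)$ gives a bound of the form $|T_{2j}(k,t)| \le A^j\bigl(|k-t/2|^{2j} + t^j\bigr)/(2j)!$; combined with the standard symmetric-binomial moment estimate $\EE|N_1-t/2|^{2j+1} \le B^j (2j)!\, t^{j+1/2}$, this yields $\alpha^{2j}\EE[|N_1-t/2|\,|T_{2j}(N_1,t)|] \le D^j (\alpha^2 t)^j \sqrt{t}$. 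When $\alpha^2 t \le 1$ the geometric series sums to $O(\alpha^2 t^{3/2})$, with the $j=2$ contribution alone being $O(\alpha^4 t^{5/2})$. When $\alpha^2 t > 1$ the Taylor bound is no longer useful; instead I would invoke the crude triangle-inequality estimate $\EE|B - t/2| \le |\EE B - t/2| + \sqrt{\Var(B)} \le \alpha t/2 + \sqrt{t}/2 \le \alpha t$, which is at most $\alpha^5 t^3$ precisely because $\alpha t/(\alpha^5 t^3) = (\alpha^2 t)^{-2} \le 1$ in this regime. The main obstacle is proving the $|T_{2j}(k,t)|$ bound with constants that decay fast enough against $(2j)!$ to make the series summable; the heart of this is carefully tracking how the low-order terms of $\log\bigl[(1+x)^k(1-x)^{t-k}\bigr]$ propagate into each Hermite coefficient.
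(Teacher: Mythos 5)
Your overall strategy is sound and most of it is right: the reduction $\max\{x,t-x\}=t/2+|x-t/2|$, the collapse of the mixture via $t-B_+\stackrel{\mathrm{d}}{=}B_-$, the parity argument killing odd powers of $\alpha$, the exact computation $T_2(k,t)=2(k-t/2)^2-t/2$ giving the $O(\alpha^2t^{3/2})$ term, and the crude bound $\EE|B-t/2|\le \alpha t\le\alpha^5t^3$ in the regime $\alpha^2 t$ large are all correct. The genuine gap is exactly the step you flag as the ``main obstacle,'' and as stated it is not just unproven but false: the claimed bound $|T_{2j}(k,t)|\le A^j\bigl(|k-t/2|^{2j}+t^j\bigr)/(2j)!$ with a universal constant $A$ fails already at $k=t/2$, where $(1+x)^{t/2}(1-x)^{t/2}=(1-x^2)^{t/2}$ gives $T_{2j}(t/2,t)=(-1)^j\binom{t/2}{j}\approx (t/2)^j/j!$; for any fixed $A$ this exceeds $A^jt^j/(2j)!$ once $j\gtrsim A$ and $t\gg j$, since $(2j)!/(j!\,(2A)^j)\to\infty$. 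The true decay of the coefficients is of order $1/(a!\,b!)$ over monomials $|k-t/2|^a t^b$ with $a+2b\le 2j$, i.e.\ roughly $1/j!$ in the worst case, not $1/(2j)!$. Relatedly, even granting your per-term estimate $D^j(\alpha^2t)^j\sqrt t$, the series does not sum for $\alpha^2t$ near $1$ unless $D<1$, so the split at $\alpha^2t=1$ is not quite right as written.

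Both defects are repairable within your route: using the correct coefficient bound one gets a per-term estimate of the shape $\frac{(D\alpha^2t)^j}{j!}\sqrt t$ (the retained $1/j!$ is what you actually need), whose tail over $j\ge2$ is $O(\alpha^4t^{5/2})$ for all $\alpha^2t\le1$; alternatively, keep the geometric form but move the threshold to $\alpha^2t\le c$ for a small constant $c$, noting that the crude bound still gives $\alpha t\le c^{-2}\alpha^5t^3$ when $\alpha^2t>c$. For comparison, the paper avoids the infinite expansion entirely: it writes the difference as $f(\alpha)-f(0)$, argues $f'(0)=0$ (via the stochastic-dominance/monotone-coupling fact that $\alpha=0$ is a minimum), applies the second-order Taylor formula with Lagrange remainder $f(\alpha)=f(0)+\tfrac12\alpha^2f''(\beta)$ for some $\beta\in[0,\alpha]$, and bounds $f''(\beta)$ uniformly by first and third absolute central moments of $\bino{t}{\frac{1+\beta}{2}}$, which directly produces the three terms $\alpha^2t^{3/2}+\alpha^4t^{5/2}+\alpha^5t^3$ with no series summation or regime split. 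Your expansion is more explicit about where each term comes from, but it pays for that with the delicate coefficient estimates that are precisely where your proposal currently breaks.
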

\begin{proof}
\begin{align}
	\ &\expectation{\max \{N_2, t-N_2\}} \nonumber\\
	 =\ & \sum_{0\le\ell\le t/2}(t/2 + \ell){t\choose \frac t2-\ell}\Paren{\Paren{\frac{1-\alpha}{2}}^{\frac t2-\ell}\Paren{\frac{1+\alpha}{2}}^{\frac t2+\ell}+\Paren{\frac{1+\alpha}{2}}^{\frac t2-\ell}\Paren{\frac{1-\alpha}{2}}^{\frac t2+\ell}} \nonumber\\
	=\ & \frac t2 +  \sum_{0\le\ell\le t/2} \ell{t\choose \frac t2-\ell}\Paren{\Paren{\frac{1-\alpha}{2}}^{\frac t2-\ell}\Paren{\frac{1+\alpha}{2}}^{\frac t2+\ell}+\Paren{\frac{1+\alpha}{2}}^{\frac t2-\ell}\Paren{\frac{1-\alpha}{2}}^{\frac t2+\ell}}. \nonumber
\end{align}

Consider a fixed value of $t$. Let 
\[
f(\alpha) = \sum_{0\le\ell\le t/2}\ell{t\choose \frac t2-\ell}\Paren{\Paren{\frac{1-\alpha}{2}}^{\frac t2-\ell}\Paren{\frac{1+\alpha}{2}}^{\frac t2+\ell}+\Paren{\frac{1+\alpha}{2}}^{\frac t2-\ell}\Paren{\frac{1-\alpha}{2}}^{\frac t2+\ell}}.
\]

The first claim is that this expression is minimized at $\alpha=0$. This is because of the monotone coupling between $Z_1$ and $Z_2$, which makes $\expectation{Z_2} \ge \expectation{Z_1}$. This implies that $f'(0)=0$, and by intermediate value theorem, there is $\beta\in[0,\alpha]$, such that 
\begin{align}
f(\alpha) = f(0)+\frac12\alpha^2\cdot f''(\beta).\label{eq:double-derivative}
\end{align}
We will now bound this second derivative. 
To further simplify, let
\[
g(\alpha) = \Paren{\frac{1-\alpha}{2}}^{\frac t2-\ell}\Paren{\frac{1+\alpha}{2}}^{\frac t2+\ell}+\Paren{\frac{1+\alpha}{2}}^{\frac t2-\ell}\Paren{\frac{1-\alpha}{2}}^{\frac t2+\ell}.
\]
Differentiating $g(\alpha)$, twice with respect to $\alpha$, we obtain, 
\begin{align}
g''(\alpha) = & ~\frac1{16} \cdot \Paren{\alpha^2(t^2-t)-4\alpha\ell(t-1)+4\ell^2-t}\Paren{\frac{1-\alpha}{2}}^{\frac t2-\ell-2}\Paren{\frac{1+\alpha}{2}}^{\frac t2+\ell-2}\nonumber\\
& + \frac1{16} \cdot \Paren{\alpha^2(t^2-t)+4\alpha\ell(t-1)+4\ell^2-t}\Paren{\frac{1+\alpha}{2}}^{\frac t2-\ell-2}\Paren{\frac{1-\alpha}{2}}^{\frac t2+\ell-2}.\nonumber
\end{align}

For any $\ell\ge0$,
\[
\Paren{\frac{1+\alpha}{2}}^{\frac t2-\ell-2}\Paren{\frac{1-\alpha}{2}}^{\frac t2+\ell-2}\le 
\Paren{\frac{1-\alpha}{2}}^{\frac t2-\ell-2}\Paren{\frac{1+\alpha}{2}}^{\frac t2+\ell-2}.
\]

Therefore $g''(\alpha)$ can be bound by,
\begin{align}
g''(\alpha) \le \frac1{16}\cdot\Paren{\alpha^2t^2+4\ell^2}\Paren{\Paren{\frac{1-\alpha}{2}}^{\frac t2-\ell-2}\Paren{\frac{1+\alpha}{2}}^{\frac t2+\ell-2}+\Paren{\frac{1-\alpha}{2}}^{\frac t2-\ell-2}\Paren{\frac{1+\alpha}{2}}^{\frac t2+\ell-2}}.\nonumber
\end{align}

When $\alpha<\frac14$, $(1-\alpha^2)^2>\frac12$, and we can further bound the above expression by
\[
g''(\alpha) 
\le 2\cdot\Paren{\alpha^2t^2+4\ell^2}\Paren{\Paren{\frac{1-\alpha}{2}}^{\frac t2-\ell}\Paren{\frac{1+\alpha}{2}}^{\frac t2+\ell}+\Paren{\frac{1-\alpha}{2}}^{\frac t2-\ell}\Paren{\frac{1+\alpha}{2}}^{\frac t2+\ell}}.
\]

Suppose $X$ is a $\bino{t}{ \frac{1+\beta}2}$ distribution. Then, for any $\ell>0$,
\[
\probof{\absv{X-\frac t2}=\ell} = {t\choose \frac t2-\ell}\Paren{\Paren{\frac{1-\beta}{2}}^{\frac t2-\ell}\Paren{\frac{1+\beta}{2}}^{\frac t2+\ell}+\Paren{\frac{1+\beta}{2}}^{\frac t2-\ell}\Paren{\frac{1-\beta}{2}}^{\frac t2+\ell}}.
\]
Therefore, we can bound~\eqref{eq:double-derivative}, by
\[
f''(\beta) \le 2\cdot\Paren{\beta^2t^2 \expectation{\absv{X-\frac t2}}+4\expectation{\absv{X-\frac t2}^3}}.
\] 
For $X\sim \bino{\ns}{r}$, 
\begin{align*}
\expectation{\Paren{X-\ns r}^2}=& \ns r(1-r)\le \frac{\ns}4, \text{ and }\\
\expectation{\Paren{X-\ns r}^4}=& \ns r(1-r)\Paren{3r(1-r) (\ns-2)+1}\le 3\frac{\ns^2}{4}.
\end{align*}
We bound each term using these moments, 
\begin{align}
\expectation{\absv{X-\frac t2}} \le \expectation{\Paren{X-\frac t2}^2}^{1/2}
=& \Paren{t\frac{(1-\beta^2)}4+ \Paren{\frac{t\beta}{2}}^2}^{1/2}\nonumber \le   \sqrt t + {t\beta}.
\end{align}
We similarly bound the next term, 
\begin{align}
\expectation{\absv{X-\frac t2}^3} \le  \expectation{\Paren{X-\frac t2}^4}^{3/4}\le & \expectation{\Paren{X- \frac{t(1+\beta)}{2}+\frac{t\beta}{2}}^4}^{3/4}\nonumber\\
\le &  8\Paren{\expectation{\Paren{X- \frac{t(1+\beta)}2}^4}^{3/4}+\Paren{\frac{t\beta}2}^3}\nonumber\\
\le & 8\Paren{t^{3/2} + \Paren{\frac{t\beta}2}^3},\nonumber
\end{align}
where we use $(a+b)^4\le 8(a^4+b^4)$. 
Therefore, 
\[
f''(\beta) \le 64\cdot\Paren{\beta^2t^{5/2} + {t^{3/2} + {(t\beta)}^3}} \le 64\cdot\Paren{\alpha^2t^{5/2} + {t^{3/2} + {(t\alpha)}^3}} 
\]
As a consequence,
\[
	\expectation{\max \{N_2, t-N_2\}} - \expectation{\max \{N_1, t-N_1\}} = \dist^2 f''(\beta) \le 64\cdot (\alpha^2t^{3/2} +  \alpha^4 t^{5/2} + \alpha^5 t^3).
\]
completing the proof.
\end{proof}

\end{document}